\newtheorem{theorem}             {Theorem}
\newtheorem{lemma}      [theorem]{Lemma}
\newtheorem{definition} [theorem]{Definition}
\theoremstyle{remark}
\newcommand{\prob}[1]{\Pr\left(#1\right)}
\newcommand{\expect}[1]{\mathbf{E}\left[#1\right]}
\newcommand{\ab}{\hspace{0.125em}}                        
\newcommand{\ie}{\hbox{i.\ab e.}\xspace}                  
\newcommand{\uar}{uar\xspace}
\newcommand{\LO}{\text{\sc LO}\xspace}
\newcommand{\LZ}{\text{\sc LZ}\xspace}
\newcommand{\onemax}{\text{\sc OneMax}\xspace}
\newcommand{\twomax}{\text{\sc TwoMax}\xspace}
\newcommand{\OM}{\text{\sc OneMax}\xspace}
\newcommand{\EA}{\text{(1+1)~EA}\xspace}
\newcommand{\EAL}{\text{(1+$\lambda$)~EA}\xspace}
\newcommand{\lEA}{\EAL}
\newcommand{\mlea}{($\mu$+$\lambda$)~EA\xspace}
\DeclareMathOperator{\uBBC}{uBBC}
\newcommand{\upBBC}[1][\lambda]{\ensuremath{\mathrm{#1\text{-}upBBC}}}
\renewcommand{\epsilon}{\varepsilon}
\newcommand{\R}{\mathbb{R}}
\newcommand{\N}{\mathbb{N}}
\newcommand{\ones}[1]{\left|#1\right|_1}
\newcommand{\Oh}[1]{\mathord{O}\mathord{\left(#1\right)}}
\newcommand{\filt}{\mathcal{F}_t}
\newcommand{\filtcond}[1]{\,;\,{#1}\mid\filt}
\newcommand{\xmin}{x_{\min}}
\newcommand{\xmax}{x_{\max}}
\renewcommand{\epsilon}{\varepsilon}
\newcommand{\E}[1]{\text{E}\left(#1\right)}
\newcommand{\Prob}[1]{\text{Pr}\left(#1\right)}
\newcommand{\ONEMAX}{\OM}
\author[1]{Per~Kristian~Lehre}
\author[2]{Dirk~Sudholt}
\affil[1]{School of Computer Science, University of Birmingham, United Kingdom} 
\affil[2]{Department of Computer Science,
        University of Sheffield, United Kingdom}
\title{Parallel Black-Box Complexity with Tail Bounds}
\begin{document}

\maketitle

\begin{abstract}
We propose a new black-box complexity model for search algorithms evaluating $\lambda$ search points in parallel. The parallel unary unbiased black-box complexity gives lower bounds on the number of function evaluations \emph{every} parallel unary unbiased black-box algorithm needs to optimise a given problem. It captures the inertia caused by offspring populations in evolutionary algorithms and the total computational effort in parallel metaheuristics\footnote{This
    paper significantly extends preliminary results which appeared in
    \cite{Badkobeh2014}.}

We present complexity results for LeadingOnes and OneMax. Our main result is a general performance limit: we prove that on \emph{every} function \emph{every} $\lambda$-parallel unary unbiased algorithm needs at least $\Omega(\frac{\lambda n}{\ln \lambda} + n \log n)$ evaluations to find any desired target set of up to exponential size, with an overwhelming probability.
This yields lower bounds for the typical optimisation time on unimodal and multimodal problems, for the time to find any \emph{local} optimum, and for the time to even get close to any optimum. The power and versatility of this approach is shown for a wide range of illustrative problems from combinatorial optimisation.
Our performance limits can guide parameter choice and algorithm design; we demonstrate the latter by presenting an optimal $\lambda$-parallel algorithm for OneMax that uses parallelism most effectively.
\end{abstract}

\section{Introduction}


Black-box
 optimisation describes a challenging realm of problems where no algebraic model or gradient information is available. The problem is regarded a black box, and knowledge about the problem in hand can only be obtained by evaluating candidate solutions.
General-purpose metaheuristics like evolutionary algorithms, simulated annealing, ant colony optimisers, tabu search, and particle swarm optimisers are well suited for black-box optimisation as they generally work well without any problem-dependent knowledge.

A lot of research has focussed on designing powerful metaheuristics, yet it is often unclear which search paradigm works best for a particular problem class, and whether and how better performance can be obtained by tailoring a search paradigm to the problem class in hand.

Black-box complexity is a powerful tool that describes limits on the efficiency of black-box algorithms.
The black-box complexity of search algorithms captures the difficulty of problem classes in black-box optimisation. It describes the minimum number of function evaluations that \emph{every} black-box algorithm needs to make to optimise a problem from a given class. It provides a rigorous theoretical foundation through capturing limits to the efficiency of all black-box search algorithms, providing a baseline for performance comparisons across all known and future metaheuristics as well as tailored black-box algorithms. Also it prevents algorithm designers from wasting effort on trying to achieve impossible performance.

Many different models of black-box complexities have been developed.
The first black-box complexity model by~\citet{Droste2006BlackBox} makes no restriction on the black-box algorithm. This leads to some
unrealistic results, such as polynomial black-box
complexities of NP-hard problems~\citep{Droste2006BlackBox}.
Subsequent research introduced refined models that restrict the power of black-box algorithms, leading to more realistic results~\citep{Teytaud2006,Doerr2011,Droste2006BlackBox,Doerr2012},
where black-box algorithms can only query for the
relative order of function values of search points
\citep{Teytaud2006,Doerr2011} as well as memory restrictions~\citep{Droste2006BlackBox,Doerr2012} and restrictions on which search points are allowed to be stored~\citep{Doerr2015b,Doerr2018a,Doerr2016}.
\Citet*{Lehre2012} introduced the unbiased black-box model
where black-box algorithms may only use operators without a search bias (see Section~\ref{sec:parallel-black-box-model}). This model initially considered
unary operators (such as mutation) and
was later extended to higher arity operators (such as
crossover) \citep{Doerr2010} and more general search spaces
\citep{Rowe2011}. It also led to the discovery of more efficient EA variants~\citep{Doerr2014}. For further details we refer to the comprehensive survey by~\citet{Doerr2018b}.

A shortcoming of the above models is that they do not capture the
implicit or explicit parallelism at the heart of many common search
algorithms. Evolutionary algorithms (EAs) such as \mlea{}s or
($\mu$,$\lambda$)~EAs generate $\lambda$ offspring in parallel. Using
a large offspring population in many cases can decrease the number of
generations needed to find an optimal solution\footnote{This does not
  hold for all problems; \citet*{Jansen2005a}
  constructed problems where offspring populations drastically
  increase the number of generations.}. However,
the number of function evaluations may increase as evolution can only
act on information from the previous generation. A large offspring
population can lead to inertia that slows down the optimisation
process. Existing black-box models are unable to capture this inertia as they assume all search points being created in sequence.

The same goes for parallel metaheuristics such as island models evolving multiple populations in parallel (see, e.\,g.~\citet{Luque2011}). Parallelisation can decrease the number of generations, or parallel time. But the overall computational effort, the number of function evaluations across all islands, may increase. \Citet{Lassig2013a} used the following notion. Let $T_\lambda$ be the random number of generations an island model with $\lambda$ islands (each creating one offspring) needed to find a global optimum for a given problem. If using $\lambda$ islands can decrease the parallel time by a factor of order $\lambda$, compared to just one island, $\lambda \cdot \E{T_\lambda} = O(\E{T_1})$, this is called a \emph{linear speedup} (with regards to the parallel time, the number of generations). A linear speedups means that the total number of function evaluations, $\lambda \cdot \E{T_\lambda}$, does not increase beyond a constant factor.


Previous work~\citep{Lassig2013a,Lassig2011a,Mambrini2012} considered illustrative problems from pseudo-Boolean optimisation and combinatorial optimisation, showing sufficient conditions for linear speedups. However, the absence of matching lower bounds makes it impossible to determine exactly for which parameters~$\lambda$ linear speedups are achieved.

We provide a parallel black-box model that captures and quantifies the inertia caused by offspring populations of size $\lambda$ and parallel EAs evaluating~$\lambda$ search points in parallel. We present lower bounds on the black-box complexity for the well known \LO problem and for the general class of functions with a unique optimum, revealing how the number of function evaluations increases with the problem size~$n$ and the degree of parallelism,~$\lambda$.
The results complement existing upper bounds~\citep{Lassig2013a}, allowing us to characterise the realm of linear speedups, where parallelisation is effective.

Our lower bound for functions with a unique optimum is asymptotically tight: we show that for the \OM problem, a \lEA with an adaptive mutation rate is an optimal parallel unbiased black-box algorithm. Adaptive mutation rates decrease the expected running time by a factor of $\ln \ln \lambda$, compared to the \lEA with the standard mutation rate~$1/n$ (see~\citet*{Doerr2015}).

The paper extends a previous conference paper~\cite{Badkobeh2014} with parts of the results.
A major novelty in this manuscript is the introduction of black-box complexity results with tail bounds. Existing black-box complexity results only make statements about the \emph{expected} number of evaluations it takes to find a global optimum\footnote{A notable exception is the $p$-Monte Carlo runtime introduced by~\citet{Doerr2015b}, defined as the minimum number of
steps needed in order to find an optimum with probability at least $1-p$.}. However, it is often not clear whether the expectation is a good reflection of the performance observed in practice.
We provide black-box complexity lower bounds that apply with an overwhelming probability.
More precisely, using the notation $\ln^+ x := \max(1, \ln x)$, we show for every target search point~$x^*$ we can choose that \emph{every} $\lambda$\nobreakdash-parallel unary unbiased black-box algorithm needs at least
\begin{equation}
\label{eq:lower-bound}
\max\left\{\frac{c\lambda n}{\ln^+\lambda}, (1-\delta) n \ln n\right\}
= \mathord{\Omega}\mathord{\left(\frac{\lambda n}{\ln^+\lambda} + n \ln n\right)}
\end{equation}
function evaluations to find~$x^*$, with an overwhelming probability\footnote{An overwhelming probability is defined as $1-2^{-\Omega(n^\varepsilon)}$ for some constant~$\varepsilon > 0$.}, where
$c$ is a constant with $c \ge 1/60$.
The leading constant $1-\delta$ in the $n \ln n$ term can be chosen\footnote{The precise result contains a trade-off between the leading constant and the exponent of the overwhelming probability formula, see Theorem~\ref{the:main-result-tail-bounds}.} arbitrarily close to~1. This means that it is practically impossible for any unary unbiased black-box algorithm to find a designated target with less than $\frac{c\lambda n}{\ln^+\lambda}$ or less than $(1-\delta)n \ln n$ evaluations. The latter bound applies to parallel and non-parallel unary unbiased algorithms.



In addition, if the probability of finding a single target~$x^*$ in the stated time is exponentially small, the probability of finding \emph{many} target points is still exponentially small.
This simple union bound argument opens up a range of opportunities for obtaining stronger results that are much more relevant to practice than the state-of-the-art. Our method is powerful and versatile since we can choose any set of target search points, up to an exponential size. This allows for different applications.
\begin{enumerate}
\item Considering global optimisation, our lower bound~\eqref{eq:lower-bound} applies to highly multimodal functions, even allowing for up to exponentially many optima.
Apart from results tailored to specific problem classes~\citep{Doerr2013b}, the only generic black-box complexity results we are aware of apply to functions with one unique global optimum. This innovation is significant as most functions in practice have multiple or many optima.
\item Choosing all local optima as target search points, we also get that for functions with up to exponentially many local optima, every $\lambda$-parallel unary unbiased algorithm needs at least the stated time~\eqref{eq:lower-bound} to find any \emph{local} optimum.
\item Since we can have exponentially many target search points, we can even afford to consider all search points within an almost linear Hamming distance to any local optimum as target. Then our results imply that even the time to get close to any local or global optimum is bounded by~\eqref{eq:lower-bound}.
\end{enumerate}
%
%
%
We demonstrate the applicability and versatility of our main result by deriving the first black-box complexity results for a wide range of illustrative function classes, from synthetic problems (\twomax, \textsc{H-IFF}, \textsc{Jump}$_k$, \textsc{Cliff}) that are very popular in the evolutionary computation literature to classes of benchmark functions~\cite{Jansen2016} and important problems from combinatorial optimisation such as \textsc{Vertex Colouring}, \textsc{MinCut}, \textsc{Partition}, \textsc{Knapsack} and \textsc{MaxSat}.

In addition to providing a solid unifying theoretical foundation for black-box algorithms, we believe that our results are of immediate relevance to practice.
Our black-box complexity with tail bounds gives hard limits on the capabilities of black-box algorithms. These limits can be used to set stopping criteria appropriately, avoiding stopping an algorithm before it has had a chance to come close to local or global optima. They are useful to set parameters such as the offspring population size~$\lambda$: if we have a limited computational budget of $T$ evaluations, \eqref{eq:lower-bound} implies that we must choose $\lambda$ satisfying $\lambda/\ln^+ \lambda \le T/(cn)$ as for larger values $T$ is lower than~\eqref{eq:lower-bound}, meaning that every $\lambda$\nobreakdash-parallel unary unbiased black-box algorithm fails badly with overwhelming probability.
Moreover, our lower bounds can serve as baseline in performance comparisons across various algorithms. And, last but not least, knowing what is \emph{impossible} is vital for guiding the search for the \emph{best possible} algorithm. The feasibility of this approach is demonstrated in this work as we present an optimal $\lambda$-parallel algorithm for \onemax that uses parallelism most effectively.


\section{A Parallel Black-Box Model}
\label{sec:parallel-black-box-model}

Following~\citet{Lehre2012}, we only use unary unbiased variation operators, i.\,e., operators creating a new search point out of one search point. This includes local search, mutation in evolutionary algorithms, but it does not include recombination.
Unbiasedness means that there is no bias towards particular regions of the search space; in brief, for $\{0, 1\}^n$, unbiased operators must treat all bit values $0, 1$ and all bit positions $1, \dots, n$ symmetrically (see~\citet{Lehre2012,Rowe2011} for details). This is the case for many common operators such as standard bit mutation.



Unbiased black-box algorithms query new search points based on the past history of function values, using unbiased variation operators. We define a \emph{$\lambda$-parallel unbiased black-box algorithm} in the same way, with the restriction that in each round $\lambda$ queries are made in parallel (see Algorithm~\ref{alg:parallel-black-box}). We use the abbreviation \uar for \emph{uniformly at random}. These $\lambda$ queries only have access to the history of evaluations from previous rounds; they cannot access information from queries made in the same round.
We refer to these $\lambda$ search points as \emph{offspring} to indicate search points created in the same round.

\begin{algorithm}[htb]
    \caption{$\lambda$-parallel unbiased black-box algorithm} \label{alg:parallel-black-box}
\begin{algorithmic}[1]
\STATE Let $t := 0$. Choose $x^1(0), \dots, x^\lambda(0)$ \uar, compute $f(x^1(0)), \dots, f(x^\lambda(0))$, and let $I(0) := \{f(x^1(0)), \dots, f(x^\lambda(0))\}$.
\REPEAT
\FOR{$1 \le i \le \lambda$}
\STATE Choose an index $0 \le j \le t$ according to $I(t)$.
\STATE Choose an unbiased variation operator $p_v(\cdot \mid x(j))$ according to $I(t)$.
\STATE Generate $x^i(t+1)$ according to~$p_v$.
\ENDFOR
\FOR{$1 \le i \le \lambda$}
\STATE Compute $f(x^i(t))$ and let $I(t) := I(t) \cup \{f(x^i(t))\}$.
\ENDFOR
\STATE Let $t := t + 1$.
\UNTIL{termination condition met}
\end{algorithmic}
\end{algorithm}

This black-box model includes offspring populations in evolutionary algorithms, for example \mlea{}s or ($\mu$,$\lambda$)~EAs (modulo minor differences in the initialisation). It can further model parallel evolutionary algorithms such as cellular EAs with $\lambda$ cells, or island models with $\lambda$ islands, each of which generates one offspring in each generation.

The \lEA maintains the current best search point $x$ and creates $\lambda$ offspring by flipping each bit in~$x$ independently with probability~$p$ (with default $p=1/n$). The best offspring replaces its parent if it has fitness at least $f(x)$.

\begin{algorithm}[htb]
    \caption{\lEA} \label{alg:lEA}
\begin{algorithmic}[1]
\STATE Choose $x$ \uar.
\REPEAT
\FOR{$1 \le i \le \lambda$}
\STATE Create $y_i$ by copying~$x$ and flipping each bit independently with probability $1/n$.
\ENDFOR
\STATE Choose $z \in P_t$ \uar from $\arg\max\{f(y_1), \dots, f(y_\lambda)\}$.
\STATE \algorithmicif{} {$f(z) \ge f(x)$} \algorithmicthen{} $x = z$
\UNTIL{termination condition met}
\end{algorithmic}
\end{algorithm}

\subsection{Parallel black-box complexity}

The \emph{unbiased black-box complexity (uBBC)} of a function class $\mathcal{F}$ is the minimum worst-case runtime among all unbiased black-box algorithms~\citep{Lehre2012} (equivalent to Algorithm~\ref{alg:parallel-black-box} with $\lambda=1$).
The \emph{unbiased $\lambda$-parallel black-box complexity ($\lambda$-upBBC)} of a function class $\mathcal{F}$ is defined as the minimum worst-case number of function evaluations among all unbiased $\lambda$-parallel algorithms satisfying the framework of Algorithm~\ref{alg:parallel-black-box}.


With increasing $\lambda$ access to previous queries becomes more and more restricted. It is therefore not surprising that the black-box complexity is non-decreasing with growing~$\lambda$.
For every family of function classes $\mathcal{F}_n$ and all $\lambda \in \N$,
\begin{align}
\label{eq:hierarchy-of-lambda-BBC}
\uBBC(\mathcal{F}_n) &\; \le \upBBC(\mathcal{F}_n) \le \lambda \cdot \uBBC(\mathcal{F}_n)
\end{align}
as any unbiased algorithm can be simulated by a $\lambda$-parallel unbiased black-box algorithm using one query in each round.


The following lemma shows that the parallel black-box complexity increases with the degree of parallelism, modulo possible rounding issues.
\begin{lemma}
\label{lem:bbc-grows}
For any $\alpha,\beta \in \mathbb{N}$, if $\alpha \le \beta$ then
\[
\upBBC[\alpha]{}(\mathcal{F}_n) \le \frac{\alpha}{\beta}\left\lceil \frac{\beta}{\alpha}\right\rceil\cdot \upBBC[\beta]{}(\mathcal{F}_n)\]
In particular, if $\frac{\beta}{\alpha} \in \mathbb{N}$ then $\upBBC[\alpha]{} \le \upBBC[\beta]{}$.
\end{lemma}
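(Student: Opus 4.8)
The plan is to simulate an optimal $\beta$-parallel algorithm by an $\alpha$-parallel one, letting a block of $\lceil\beta/\alpha\rceil$ consecutive rounds of the $\alpha$-parallel algorithm reproduce a single round of the $\beta$-parallel algorithm. First I would record the elementary but crucial bookkeeping fact that, in the framework of Algorithm~\ref{alg:parallel-black-box}, every $\lambda$-parallel unbiased algorithm performs exactly $\lambda$ function evaluations per round (including the initialisation round). Hence its worst-case number of evaluations equals $\lambda$ times its worst-case number of rounds, and so $\upBBC[\lambda]{}(\mathcal{F}_n) = \lambda \cdot g_\lambda(\mathcal{F}_n)$, where $g_\lambda(\mathcal{F}_n)$ denotes the minimum worst-case number of rounds over all $\lambda$-parallel unbiased algorithms on $\mathcal{F}_n$. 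It then suffices to show $g_\alpha(\mathcal{F}_n) \le \lceil\beta/\alpha\rceil \cdot g_\beta(\mathcal{F}_n)$, from which the claimed inequality follows by multiplying by $\alpha$ and substituting $\beta g_\beta(\mathcal{F}_n) = \upBBC[\beta]{}(\mathcal{F}_n)$.

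Next I would fix a $\beta$-parallel algorithm $A_\beta$ attaining $g_\beta(\mathcal{F}_n)$ and construct an $\alpha$-parallel algorithm $A_\alpha$ simulating it. The key structural observation is that the $\beta$ queries made in round $t$ of $A_\beta$ each depend only on the history $I(t-1)$ of earlier rounds, not on one another. Therefore $A_\alpha$ can spread these $\beta$ queries over $\lceil\beta/\alpha\rceil$ of its own rounds: this provides $\alpha\lceil\beta/\alpha\rceil \ge \beta$ query slots per block, $\beta$ of which reproduce $A_\beta$'s queries while the remaining $\alpha\lceil\beta/\alpha\rceil - \beta$ are padded with dummy unbiased queries (for instance the identity operator, which copies an existing point) whose outcomes $A_\alpha$ simply ignores. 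The $\beta$ uniform initial samples of $A_\beta$ are reproduced in the same fashion: beyond the $\alpha$ uniform samples drawn in $A_\alpha$'s own initialisation round, the extra uniform samples are generated by the unbiased operator that flips each bit independently with probability $1/2$ applied to any existing point. Since each block of $A_\alpha$ reproduces exactly one round of $A_\beta$, and $A_\beta$ queries a target point within $g_\beta(\mathcal{F}_n)$ rounds on every instance, $A_\alpha$ queries that same point within $\lceil\beta/\alpha\rceil \cdot g_\beta(\mathcal{F}_n)$ rounds, which is the bound on $g_\alpha(\mathcal{F}_n)$ we wanted. The ``in particular'' statement is then immediate, since $\beta/\alpha \in \mathbb{N}$ makes the factor $\frac{\alpha}{\beta}\lceil\beta/\alpha\rceil$ equal to $1$.

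The main obstacle is the information-flow argument: I must verify that this grouped simulation never requires information that the $\alpha$-parallel model withholds. This is exactly where the mutual independence of $A_\beta$'s within-round queries, given the past, is essential — when $A_\alpha$ reproduces round $t$ of $A_\beta$, it already possesses (by induction over the blocks) the whole history $I(t-1)$ of $A_\beta$, and the fact that later rounds of the current block also see the queries made earlier within the same block does no harm, since the model permits an algorithm to ignore available information. The only remaining point to check carefully is that the padding operator and the uniform-resampling operator are genuinely unbiased variation operators in the sense of Section~\ref{sec:parallel-black-box-model} (both are invariant under bit flips and position permutations), so that $A_\alpha$ stays within the model of Algorithm~\ref{alg:parallel-black-box}.
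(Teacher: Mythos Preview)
Your proposal is correct and follows the natural simulation argument: replay each round of an optimal $\beta$-parallel algorithm using $\lceil\beta/\alpha\rceil$ rounds of an $\alpha$-parallel one, padding unused slots with dummy unbiased queries, and then convert the round bound into an evaluation bound via $\upBBC[\lambda]{}=\lambda\cdot g_\lambda$. The paper itself does not spell out a proof here but defers to \cite[Lemma~4]{Badkobeh2015}; the argument there is exactly this simulation idea, so your approach matches. One minor technicality you may wish to smooth over is the assumption that the infimum defining $g_\beta$ is attained; if not, the standard $\varepsilon$-approximation argument closes the gap.
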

A proof (in the context of distributed black-box complexity) was given in~\cite[Lemma~4]{Badkobeh2015}.

Lemma~\ref{lem:bbc-grows} implies the following for all function classes $\mathcal{F}_n$ (we omit $\mathcal{F}_n$ for brevity):
First, if $\frac{\beta}{\alpha} \in \mathbb{N}$ then $\upBBC[\alpha]{} \le \upBBC[\beta]{}$. Otherwise, $\upBBC[\alpha]{} \le (1+\frac{\alpha}{\beta})\cdot\upBBC[\beta]{}
\le 2\cdot\upBBC[\beta]{} $ because $\lceil \frac{\beta}{\alpha}\rceil \le 1+\frac{\beta}{\alpha}$ and $1+\frac{\alpha}{\beta} \le 2$. In particular, this implies that for all $\alpha < \beta \in \N$,
\begin{equation}
\label{eq:bbc-does-not-decrease}
\upBBC[\beta]{} = \Omega(\upBBC[\alpha]{}).
\end{equation}
We conclude that the $\lambda$-parallel black-box complexity
does not asymptotically decrease with the degree of parallelism, $\lambda = \lambda(n)$. This implies that there is a \emph{cut-off point}
such that for all $\lambda = \Oh{\lambda^*}$ the $\lambda$-parallel unbiased black-box complexity of $\mathcal{F}_n$ is asymptotically equal to the regular unbiased black-box complexity.\footnote{Strictly speaking, we should be writing $\lambda(n) = \Oh{\lambda^*(n)}$ as the degree of parallelism may depend on~$n$. We omit this parameter for ease of presentation. Asymptotic statements always refer to~$n$.}
\begin{definition}
\label{def:cutoff}
A value $\lambda^*$ is a cut-off point if
\begin{itemize}
\item for all $\lambda = \Oh{\lambda^*}$,   $\upBBC{} = \Oh{\uBBC}$ and
\item for all $\lambda = \omega(\lambda^*)$, $ \upBBC{} =  \omega(\uBBC)$.
\end{itemize}
\end{definition}

Such a cut-off point always exists because due to~\eqref{eq:bbc-does-not-decrease} the parallel black-box complexity cannot decrease asymptotically, and values of $\Oh{\uBBC}$ can always be attained for suitable~$\lambda^*$, e.\,g.\ for $\lambda^* := 1$. Furthermore, the $\lambda$-parallel black-box eventually diverges for very large~$\lambda$ (e.\,g. $\lambda = \omega(\uBBC)$) as trivially $\upBBC{} \ge \lambda$.

Note that cut-off points are not unique: if $\lambda^*$ is a cut-off point, then every $\lambda' = \Theta(\lambda^*)$ is also a cut-off point.

A cut-off point determines the realm of linear speedups~\citep{Lassig2013a}, where parallelisation is most effective. Below the cut-off, for an optimal parallel black-box algorithm the number of function evaluations does not increase (beyond constant factors), but the number of rounds decreases by a factor of~$\Theta(\lambda)$. The number of rounds corresponds to the parallel time if all $\lambda$ evaluations are performed on parallel processors. Hence, below the cut-off it is possible to reduce the parallel time proportionally to the number of processors, without increasing the total computational effort (by more than a constant factor).

\section{Parallel Black-Box Complexity of LeadingOnes}
\label{sec:lo}

We consider the function $\LO(x) := \sum_{i=1}^n \prod_{j=1}^i x_j$, counting the number of leading ones in~$x$.
It is an example of a unimodal function where a specific bit needs to be flipped.
Similarly, $\LZ(x)$ counts the number of leading zeros in~$x$.
We first provide a tool for estimating the progress made by $\lambda$ trials, which may or may not be independent. It is based on moment-generating functions (mgf).
\begin{lemma}\label{lemma:mgf-max-bound}
  Given $X_1,\ldots,
  X_\lambda\in\mathbb{N}$, where  $X_i$s are random variables, not necessarily
  independent. Define $X_{(\lambda)}:=\max_{i\in[\lambda]} X_i$, if there exists $\eta,D\geq 0$, such that for all
  $i\in[\lambda]$, it holds $\E{e^{\eta X_i}}\leq D$, then
  $\E{X_{(\lambda)}}\leq (\ln(D\lambda)+1)/\eta$.
\end{lemma}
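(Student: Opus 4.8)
We want to bound $\E{X_{(\lambda)}}$ where $X_{(\lambda)} = \max_i X_i$, using the mgf bound $\E{e^{\eta X_i}} \le D$ for all $i$.

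Standard approach: For non-negative integer-valued random variables, $\E{X_{(\lambda)}} = \sum_{k \ge 1} \Pr(X_{(\lambda)} \ge k)$. We split this sum at some threshold $t$: the first $t$ terms are each at most 1, giving a contribution of at most $t$. For $k > t$, we use a union bound and Markov on the mgf: $\Pr(X_{(\lambda)} \ge k) \le \sum_i \Pr(X_i \ge k) = \sum_i \Pr(e^{\eta X_i} \ge e^{\eta k}) \le \lambda D e^{-\eta k}$. Summing the geometric tail $\sum_{k > t} \lambda D e^{-\eta k} = \lambda D e^{-\eta t}/(e^\eta - 1) \le \lambda D e^{-\eta t}/\eta$ (wait, need $e^\eta - 1 \ge \eta$, which holds). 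Actually let me reconsider — we want the bound $(\ln(D\lambda) + 1)/\eta$, so I'd pick $t = \ln(D\lambda)/\eta$ (rounded), making the tail sum roughly $1/\eta$.

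Let me write this out.
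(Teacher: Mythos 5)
Your plan is correct and follows essentially the same route as the paper's proof: the tail-sum identity for integer-valued variables, a split at the threshold $\ln(D\lambda)/\eta$, Markov's inequality applied to $e^{\eta X_i}$ together with a union bound over the $\lambda$ variables, and the bound $e^\eta - 1 \ge \eta$ to control the geometric tail. Writing it out as sketched yields exactly the claimed bound $(\ln(D\lambda)+1)/\eta$.
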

\begin{proof}
  Note first that for any $i\in[\lambda]$ and $j\in\mathbb{N}$,
  it follows from Markov's inequality that
$    \Pr(X_i\geq j)
    = \Pr(e^{\eta X_i}\geq e^{\eta j})
    \leq e^{-\eta j}\E{e^{\eta X_i}}
    \leq e^{-\eta j}D.
$
Now, let $k:=\ln(D\lambda)/\eta$. It then follows by a union bound that
\begin{align*}
  \E{X_{(\lambda)}}
   =\;&        \sum_{i=1}^\infty \Pr(X_{(\lambda)}\geq i)
   \leq k + \sum_{i=1}^\infty \Pr(X_{(\lambda)}\geq k+i)\\
   \leq\;& k + \sum_{i=1}^\infty\sum_{j=1}^\lambda \Pr(X_{j}\geq k+i)
   \leq k + \sum_{i=1}^\infty \lambda e^{-\eta (k+i)}D \\
   =\;&    k + e^{-\eta k}\frac{D\lambda }{e^\eta-1}
   \leq k + e^{-\eta k} D\lambda/\eta
   =    (\ln(D\lambda)+1)/\eta. \qedhere
\end{align*}
\end{proof}

For the \LO function, the $\lambda$-parallel black-box complexity is
as follows.
\begin{theorem}
\label{the:LO}
Let $\ln^+ x  := \max(1, \ln x)$.
The $\lambda$-parallel unbiased black-box complexity of \LO is
\[
\mathord{\Omega}\mathord{\left(\frac{\lambda n}{\ln^+(\lambda/n)} + n^2\right)} \text{\quad and \quad} \Oh{\lambda n + n^2}.
\]
The cut-off point is
$
\lambda^*_{\LO} = n.
$
The corresponding parallel time for an optimal algorithm is $\mathord{\Omega}\mathord{\left(\frac{n}{\ln^+(\lambda/n)} + \frac{n^2}{\lambda}\right)}$ and $\Oh{n + \frac{n^2}{\lambda}}$.
\end{theorem}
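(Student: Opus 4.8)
For the upper bound $\Oh{\lambda n + n^2}$ I would analyse the \lEA (Algorithm~\ref{alg:lEA}) with mutation probability $1/n$ by a fitness-level argument. The best offspring never decreases the $\LO$-value, and from a search point with $\LO$-value $k<n$ a single offspring attains $\LO$-value at least $k+1$ with probability at least $\frac1n(1-\frac1n)^{n-1}\ge\frac1{en}$ (flip the first zero, keep the prefix). Hence the probability that at least one of the $\lambda$ offspring leaves level $k$ in one generation is $1-(1-\frac1{en})^\lambda=\Omega(\min\{1,\lambda/n\})$, so level $k$ costs $\Oh{\max\{1,n/\lambda\}}$ generations in expectation. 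Summing over the at most $n$ levels gives $\Oh{n+n^2/\lambda}$ generations, i.e.\ $\Oh{\lambda n+n^2}$ evaluations, and parallel time $\Oh{n+n^2/\lambda}$.

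For the lower bound, the term $\Omega(n^2)$ is immediate from the hierarchy~\eqref{eq:hierarchy-of-lambda-BBC} together with the classical fact that the unary unbiased black-box complexity of \LO is $\Omega(n^2)$~\citep{Lehre2012}. For the term $\Omega\!\bigl(\lambda n/\ln^+(\lambda/n)\bigr)$ I would run an additive-drift argument on $K_t$, the largest $\LO$-value among all points queried in rounds $0,\dots,t$. As usual for unbiased lower bounds it suffices to bound the expected runtime of a fixed $\lambda$-parallel unbiased algorithm on a uniformly random member of the \LO class, so one may reason in ``true'' coordinates while treating the hidden permutation and mask as revealed lazily. The quantity $K_t$ is non-decreasing, the algorithm stops only at $K_t=n$, and $\E{K_0}=\Oh{\log\lambda}<n/2$ whenever $\lambda\le 2^{n/4}$ (it is the maximum of $\lambda$ variables with $\Pr(\cdot\ge j)=2^{-j}$); for $\lambda>2^{n/4}$ the trivial bound $\upBBC{}(\LO)\ge\lambda$ already matches the claim, so I would restrict attention to $\lambda\le 2^{n/4}$ and to the rounds with $K_t\in[n/2,n)$.

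The heart of the argument is the following per-offspring tail bound: for every $\filt$ with $K_t\in[n/2,n)$, every previously queried parent $x$ (necessarily with $\LO(x)\le K_t$) and every unbiased variation operator, the offspring $y$ satisfies $\Pr(\LO(y)\ge K_t+m\mid\filt)\le\frac1n\cdot 2^{-(m-1)}$ for all $m\ge1$. The factor $1/n$ is essentially the classical unary-unbiased \LO bound: conditioned on flipping exactly $k$ bits, an unbiased operator flips a uniformly random $k$-set, so extending the all-ones prefix to length $K_t+1$ requires flipping the unique critical position while sparing the $\ge n/2$ known leading ones, which happens with probability at most $\max_k\frac kn\prod_{i=1}^{k-1}\frac{n-K_t-i}{n-i}\le\max_k\frac kn 2^{-(k-1)}=\frac1n$; the geometric factor $2^{-(m-1)}$ comes from the ``overshoot'' positions $K_t+2,\dots,K_t+m$, whose bits in $y$ are fair coins because the corresponding bits of $x$ are unrevealed (hence uniform) while the operator acts on positions, not bit values. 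A union bound over the $\lambda$ offspring of a round (no independence needed) gives $\Pr(K_{t+1}-K_t\ge m\mid\filt)\le\min\{1,\lambda 2^{-(m-1)}/n\}$, and summing over $m\ge1$ yields $\E{K_{t+1}-K_t\mid\filt}=\Oh{\ln^+(\lambda/n)}$. Since then $K_t-\Oh{\ln^+(\lambda/n)}\cdot t$ is a supermartingale once $K_t\ge n/2$ (and the prefix cannot overshoot $n$ by much at the first crossing, the overshoot being geometrically distributed), the additive drift theorem forces $\Omega(n/\ln^+(\lambda/n))$ rounds in expectation, hence $\Omega(\lambda n/\ln^+(\lambda/n))$ evaluations; combined with $\Omega(n^2)$ this gives the claimed bound.

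For the cut-off point: when $\lambda=\Oh n$ the bounds read $\Theta(n^2)=\Theta(\uBBC(\LO))$, while for $\lambda=\omega(n)$, writing $\lambda=n\,g(n)$ with $g\to\infty$, one gets $\lambda n/\ln^+(\lambda/n)=n^2\cdot g/\ln g=\omega(n^2)=\omega(\uBBC(\LO))$; hence $\lambda^*_{\LO}=n$. Dividing all evaluation bounds by $\lambda$ yields the stated parallel-time bounds $\Omega\!\bigl(n/\ln^+(\lambda/n)+n^2/\lambda\bigr)$ and $\Oh{n+n^2/\lambda}$. The one genuinely delicate step is the per-offspring tail bound, and within it the assertion that --- conditioned on the entire interaction history --- the parent's bits at true positions beyond the current best prefix are unbiased fair coins: making this rigorous requires the standard deferred-decisions formalisation of the unbiased model's symmetry (as in the classical unary-unbiased lower-bound arguments), whereas the remaining fitness-level and drift computations are routine.
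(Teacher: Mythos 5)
Your overall strategy matches the paper's: a fitness-level argument for the $\Oh{\lambda n + n^2}$ upper bound (the paper simply cites \citet{Lassig2011a}), the $\Omega(n^2)$ term from the unary unbiased bound of \citet{Lehre2012}, and for the $\Omega(\lambda n/\ln^+(\lambda/n))$ term an additive-drift argument on a prefix-length potential, with per-offspring progress bounded by $O(1/n)$ times a geometric ``free-rider'' tail and then aggregated over the $\lambda$ offspring. Your aggregation via a plain union bound, $\Prob{K_{t+1}-K_t\ge m}\le\min\{1,\lambda 2^{-(m-1)}/n\}$, summed over $m$, is a perfectly valid and arguably more elementary substitute for the paper's route through the moment-generating-function lemma (Lemma~\ref{lemma:mgf-max-bound}) plus Jensen's inequality; both yield $\E{K_{t+1}-K_t}=\Oh{\ln^+(\lambda/n)}$.

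There is, however, one genuine gap: your potential $K_t$ tracks only the largest \LO-value seen, and your central per-offspring tail bound ``$\Prob{\LO(y)\ge K_t+m}\le \frac{1}{n}2^{-(m-1)}$ for every previously queried parent and every unbiased operator'' is false as stated. Take the unbiased operator that flips all $n$ bits and a parent $x$ with a long run of leading \emph{zeros}: then $\LO(\overline{x})=\LZ(x)$ deterministically, so if the algorithm has accumulated a point with $\LZ(x)\gg K_t$ it can jump $K_t$ by an arbitrary amount with probability $1$. Nothing in your invariant prevents the algorithm from building up leading zeros ``under the radar'' while $K_t$ stays small, so the supermartingale property you rely on does not hold for the process you defined. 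This is exactly why the paper defines the potential as $\max_{j,i}\{\LO(x^i(j)),\LZ(x^i(j)),n/2\}$ and bounds the progress of \emph{both} components (the event $X_i=1$ is that the first $k+1$ positions are all $1$-bits \emph{or} all $0$-bits). The repair is routine --- by the $0/1$ symmetry of unbiased operators, progress in the \LZ-component obeys the same $O(1/n)\cdot 2^{-(m-1)}$ tail, and a union bound over the two components only costs a factor of $2$ in the drift --- but it must be built into the potential from the start. The remaining ingredients (restricting to $\lambda\le 2^{n/4}$, starting the drift at potential $n/2$, the deferred-decisions justification that bits beyond the current prefix are uniform) are handled at the same level of rigour as in the paper and are fine.
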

This result solves an open problem from~\citet{Lassig2013a}, confirming that the analysis of the realm of linear speedups for \LO from~\citet{Lassig2013a} is tight.
\begin{proof}[Proof of Theorem~\ref{the:LO}]
The upper bound follows from~\citet[Theorem~1]{Lassig2011a} for a \lEA, as within the context of this bound the \lEA is equivalent to an island model with complete communication topology.

A lower bound $\Omega(n^2)$ follows from~\citet{Lehre2012}, hence the
statement holds for the case $\lambda = O(n)$.
Thus we only need to consider the case $\lambda = \omega(n)$ and to prove a lower bound of $\mathord{\Omega}\mathord{\left(\frac{\lambda n}{\ln^+(\lambda/n)}\right)} = \mathord{\Omega}\mathord{\left(\frac{\lambda n}{\ln(\lambda/n)}\right)}$ for this case.

We proceed by drift analysis. Let the
``potential'' of a search point $x$  be
\[
\max_{0\leq j\leq
  t,1\le i \le \lambda}\{\LO(x^i(j)), \LZ(x^i(j)), n/2\}
\]
and define the potential of the algorithm, $P_t$ at time
$t$ to be the largest potential among all search points produced until
time~$t$.

Assume that the potential in generation $t$ is $P_t=k$.  In any
generation $t$, let $X_i$ for $i\in[\lambda]$ be the indicator
variable for the event that all of the first $k+1$ bit-positions in
individual $i$ are $1$-bits (or $0$-bits). Furthermore, let $Y_i$ be
the number of consecutive 1-bits (respectively 0-bits) from position
$k+2$ and onwards, ie., the number of ``free riders''.

Following the same arguments as in~\citet{Lehre2012}, the probability
that $X_i=1$ is no more than ${1/(k+1)}=O(1/n)$. Defining
$M:=\sum_{i=1}^\lambda X_i$, we therefore have $\E{M}=O(\lambda/n)$.  Each
random variable $Y_i$, $i\in[\lambda]$, is stochastically dominated by
a geometric random variable $Z_i$ with parameter $1/2$.  The expected
progress in potential is therefore
\begin{align*}
  \E{\Delta_{(\lambda)}}
  = \E{\max_{i\in[\lambda]} X_iY_i}
  \leq \E{\max_{i\in[M]} Z_i}.
\end{align*}
The mgf of the geometric random variable $Z_i$ is
$M_{Z_i}(\eta)=1/(2-e^\eta)$. The tower property of the expectation and Lemma~\ref{lemma:mgf-max-bound} with $\eta:=\ln(3/2)$ and $D:=2$
give
\begin{align*}
  \E{\Delta_{(\lambda)}}
  \leq\;& \E{\E{\max_{i\in[M]} Z_i\mid M}}\\
  \leq\;& \E{(\log(DM)+1)/\eta}\\
  \leq\;& (\log(\E{DM})+1)/\eta
  = O(\log(\lambda/n)),
\end{align*}
where the last inequality follows from Jensen's inequality and the
last equality follows from $\log(\lambda/n) = \Omega(1)$. With
overwhelmingly high probability, the initial potential is at least
$n/2$. Hence, by classical additive drift theorems~\citep{He2004}, the
expected number of rounds to reach the optimum is $\Omega(n/\log(\lambda/n))$. Multiplying by $\lambda$ gives the number of function evaluations.
\end{proof}

\section{Parallel Black-Box Complexity of Functions with One Unique Optimum}
\label{sec:unique-optimum}

\Citet*{Jansen2005a} considered the \lEA
and established a cut-off point for~$\lambda$ where the running time
increases from $\Theta(n \log n)$ to $\omega(n \log n)$:
\begin{equation}
\label{eq:cutoff-for-1+lambda-on-OneMax}
\lambda^*_{\text{\lEA on \OM}} = \Theta((\ln n)(\ln \ln n)/(\ln \ln \ln n))
\end{equation}

\Citet*{Doerr2015} presented the following tight bounds for bounded~$\lambda$:
\begin{theorem}[Adapted from~\citet*{Doerr2015}]
\label{the:He-Chen-Yao}
The expected optimisation time of the \lEA on \ONEMAX is
\[
\mathord{\Theta}\mathord{\left(n \cdot \frac{\lambda \log \log \lambda}{\log \lambda} + n \log n\right)}
\]
where the upper bound holds for $\lambda = O(n^{1-\varepsilon})$ and the lower bound holds for $\lambda = O(n)$.
\end{theorem}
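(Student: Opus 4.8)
The plan is to prove the matching bound on the number $G$ of generations of the \lEA, namely $\E{G}=\Theta\!\big(\tfrac{n\log\log\lambda}{\log\lambda}+\tfrac{n\log n}{\lambda}\big)$, and then multiply by $\lambda$ (assume $\lambda\ge 3$; a constant $\lambda$ reduces to the \EA and makes the first term $O(n)$, which is subsumed by $n\log n$). Let $d_t$ be the number of zero-bits of the current search point. Since \ONEMAX is monotone and the algorithm is elitist, $(d_t)_t$ is non-increasing; $d_0\le n$ always, and $n/3\le d_0\le 2n/3$ with probability $1-2^{-\Omega(n)}$ by a Chernoff bound. In a generation with $d_t=d$, let $A_i\sim\bin(d,1/n)$ and $B_i\sim\bin(n-d,1/n)$ (independent) be the numbers of $0\!\to\!1$ and $1\!\to\!0$ flips of offspring $i$; the selected offspring maximises $A_i-B_i$, so $d_t-d_{t+1}=\max\{0,\max_{i\in[\lambda]}(A_i-B_i)\}$, and everything reduces to sandwiching the expectation of this quantity.

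For the \textbf{upper bound} I would first produce an increasing function $h$ with $\E{d_t-d_{t+1}\mid d_t=d}\ge h(d)$. For $1\le d\le n/\lambda$, the event ``some offspring flips exactly one zero and no one'' has probability $\Theta(\lambda d/n)$ and contributes gain $1$, so $h(d)=\Omega(\lambda d/n)$ works. For $n/\lambda\le d\le n$, put $\mu:=d/n$; for each $j$, the probability that some offspring flips at least $j$ zeros and no one is at least $1-\exp(-\tfrac\lambda4(\mu/j)^j)$, which is $\ge\tfrac12$ whenever $j\le j^\ast:=\max\{j\le d:\ \ln\lambda\ge\ln 3+j\ln(j/\mu)\}$; summing over $j\le j^\ast$ gives $\E{d_t-d_{t+1}\mid d_t=d}\ge j^\ast/2=\Omega\!\big(\tfrac{\ln\lambda}{\ln(en/d)+\ln\ln\lambda}\big)$, which is $\Omega(\log\lambda/\log\log\lambda)$ once $d=\Theta(n)$. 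Feeding $h$ into the variable drift theorem yields, up to constant factors,
\[
\E{G}\;\le\;\frac n\lambda\;+\;\int_{1}^{n/\lambda}\frac{n}{\lambda x}\,dx\;+\;\int_{n/\lambda}^{n}\frac{\ln(en/x)+\ln\ln\lambda}{\ln\lambda}\,dx,
\]
where the middle integral equals $\tfrac n\lambda\ln(n/\lambda)=O(n\log n/\lambda)$ and, via the substitution $x=ne^{-s}$, the last integral is at most $\tfrac{n}{\ln\lambda}\big(\int_0^\infty\! se^{-s}\,ds+\ln\ln\lambda\int_0^\infty\! e^{-s}\,ds\big)=O(n\log\log\lambda/\log\lambda)$. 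The hypothesis $\lambda=\Oh{n^{1-\varepsilon}}$ is used exactly here: it ensures $n/\lambda\ge\log\lambda/\log\log\lambda$ (so the constraint $j\le d$ in $j^\ast$ never binds on $[n/\lambda,n]$) and $\ln(n/\lambda)=\Theta(\ln n)$.

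For the \textbf{lower bound} I would prove two bounds and take their maximum (which is within a factor $2$ of their sum). First, $\Omega(n\log n)$ evaluations: couple the run with independent bits $C^{i}_{j,t}\sim\mathrm{Bernoulli}(1/n)$ indicating whether position $j$ is flipped in offspring $i$ of generation $t$, and for each position $j$ that is zero after initialisation set $\widetilde G_j:=\min\{t:\exists i,\ C^{i}_{j,t}=1\}$. The $\widetilde G_j$ are i.i.d.\ geometric with success probability $p=1-(1-1/n)^\lambda\le\lambda/n$, and $G\ge\max_j\widetilde G_j$ over the $\Omega(n)$ initially-zero positions because position $j$ can first become one in the \emph{accepted} individual only in a generation in which its flip-indicator fires; hence $\E{\max_j\widetilde G_j}=\Omega(\tfrac n\lambda\log n)$ by the standard coupon-collector estimate (for $\lambda=\Oh n$ the success probability is bounded away from $1$). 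Second, $\Omega\!\big(\tfrac{n\lambda\log\log\lambda}{\log\lambda}\big)$ evaluations: from $d_t-d_{t+1}\le\max_i A_i$ and $\E{e^{\eta A_i}}\le e^{e^\eta-1}$ (valid since $d\le n$), Lemma~\ref{lemma:mgf-max-bound} with $\eta:=\ln\ln\lambda$ and $D:=e^{e^\eta-1}=\lambda/e$ gives $\E{d_t-d_{t+1}\mid\filt}\le(\ln(D\lambda)+1)/\eta=2\ln\lambda/\ln\ln\lambda$ for every $t$; the additive drift theorem in its lower-bound form (cf.~\citet{He2004}) then yields $\E{G}\ge\tfrac{d_0}{2\ln\lambda/\ln\ln\lambda}=\Omega(n\log\log\lambda/\log\lambda)$, using $d_0\ge n/3$ with overwhelming probability. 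Multiplying the two lower bounds by $\lambda$ matches $\Omega\!\big(n\cdot\tfrac{\lambda\log\log\lambda}{\log\lambda}+n\log n\big)$.

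The step I expect to be the \textbf{main obstacle} is the middle portion of the drift profile. The naive estimate ``an improving offspring appears with constant probability, hence the drift is $\Omega(1)$'' is too weak: $\int_{n/\lambda}^{n}1\,dx=\Theta(n)$ would overshoot the target by a factor $\Theta(\log\lambda/\log\log\lambda)$. One genuinely has to quantify that the best of $\lambda$ offspring flips $\Omega\!\big(\tfrac{\ln\lambda}{\ln(en/d)+\ln\ln\lambda}\big)$ zeros, increasing to $\Theta(\log\lambda/\log\log\lambda)$ as $d\to\Theta(n)$, and verify that this profile integrates to the right order; the matching upper bound on the one-step drift is comparatively painless because Lemma~\ref{lemma:mgf-max-bound} does the heavy lifting. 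The rest --- lower-tail estimates for $\bin(d,1/n)$, monotonicity of $h$, and the Chernoff/union-bound bookkeeping for $d_0$ and the exceptional events --- is routine.
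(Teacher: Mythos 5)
The paper does not prove this statement: Theorem~\ref{the:He-Chen-Yao} is imported verbatim from \citet{Doerr2015} as background, so there is no in-paper proof to compare against. Your reconstruction is, as far as I can check, a sound sketch of the standard argument and it is consistent with the machinery this paper does use elsewhere. The lower-bound half is essentially the paper's own technique: bounding the one-generation drift of the best of $\lambda$ offspring via a moment-generating function and Lemma~\ref{lemma:mgf-max-bound} (your choice $\eta=\ln\ln\lambda$, $D=e^{e^\eta-1}=\lambda/e$ gives exactly the $2\ln\lambda/\ln\ln\lambda$ ceiling, which is where the extra $\log\log\lambda$ factor relative to Theorem~\ref{the:black-box-complexity-onemax} comes from), followed by additive drift; the coupon-collector part is also fine, with the small remark that you should bound $\ln\bigl(1/(1-p)\bigr)=\lambda\ln\bigl(n/(n-1)\bigr)=\Theta(\lambda/n)$ directly rather than go through $p\le\lambda/n$ and ``$p$ bounded away from $1$''. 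The upper-bound half mirrors the variable-drift integral used in the proof of Theorem~\ref{the:upper-bound-lea-adaptive-mutation}, and you correctly identify the only genuinely delicate point: quantifying that the best offspring gains $\Omega\bigl(\ln\lambda/(\ln(en/d)+\ln\ln\lambda)\bigr)$ rather than merely $\Omega(1)$ in the regime $d=\Theta(n)$, and checking that this profile integrates to $O(n\log\log\lambda/\log\lambda)$; your $j^\ast$ computation and the substitution $x=ne^{-s}$ do this correctly up to constants. Two minor bookkeeping points you would need to tidy in a full write-up: ensure the drift function $h$ is monotone across the boundary $d=n/\lambda$ (the two branches agree up to constants there, so take a minimum), and note that $j^\ast\ge 1$ only for $d\ge cn/\lambda$, with the single-bit-flip event covering the narrow band below that.
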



We show that the parallel black-box complexity is lower than the bound
from Theorem~\ref{the:He-Chen-Yao} for large~$\lambda$ by a factor of
order $\log \log \lambda$.
\begin{theorem}
\label{the:black-box-complexity-onemax}
For any $\lambda \le e^{\sqrt{n}}$ the $\lambda$-parallel unbiased unary black-box complexity for any function with a unique optimum is at least
\[
\mathord{\Omega}\mathord{\left(\frac{\lambda n}{\ln^+ \lambda} + n \log n\right)}.
\]
This bound is tight for \OM, where the cut-off point is
\[
\lambda^*_{\OM} = \Theta(\log(n) \cdot \log \log n).
\]
The corresponding parallel time for an optimal algorithm is $\mathord{\Omega}\mathord{\left(\frac{n}{\ln^+ \lambda} + \frac{n \log n}{\lambda}\right)}$.
\end{theorem}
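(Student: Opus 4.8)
\textbf{Proof proposal for Theorem~\ref{the:black-box-complexity-onemax}.}

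The plan is to prove the lower bound by the same drift-analysis recipe used for \LO in Theorem~\ref{the:LO}, but with a potential tailored to a function with a unique optimum~$x^*$, and to obtain the matching upper bound for \OM by a separate algorithmic construction. For the lower bound, the natural potential of a search point~$x$ is its Hamming distance to~$x^*$, or rather $n$ minus that distance; by unbiasedness we may assume $x^* = 1^n$, so the potential of~$x$ is just $\ones{x}$, and the potential $P_t$ of the algorithm is the maximum of $\ones{\cdot}$ over all search points produced up to round~$t$, capped below at, say, $n/2$ (which holds with overwhelming probability after initialisation, since a uniform random string has at most $n/2$ ones only with probability $1/2$, and the maximum over the initial $\lambda$ search points of the number of ones is $n/2 + O(\sqrt{n\ln\lambda})$ with overwhelming probability for $\lambda \le e^{\sqrt n}$ — this is where the restriction $\lambda \le e^{\sqrt n}$ enters). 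The heart of the argument is to bound the expected one-round drift $\E{\Delta_{(\lambda)}}$ when $P_t = k$ with $k \ge n/2$. A single offspring is produced by an unbiased unary operator applied to some previously queried search point~$x$ with $\ones{x} \le k$. Because the operator is unbiased (symmetric in bit positions), conditional on flipping exactly $r$ bits those $r$ positions are a uniform random $r$-subset; the number of new ones gained, $\ones{y} - \ones{x}$, is then a hypergeometric-type quantity, and crucially $\ones{y} - k$ is stochastically dominated by a random variable whose upper tail decays geometrically once it exceeds~$O(1)$ — more precisely one shows $\Pr(\ones{y} \ge k + j) = 2^{-\Omega(j)}$ for $k \ge n/2$, since each additional simultaneous zero-to-one flip costs a factor $\Theta((n-k)/n) \le \Theta(1/2)$ in probability (this is the standard "improving from near-optimum is hard" estimate underlying the $\Omega(n\log n)$ bound for \OM). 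Hence each $\Delta_i := \ones{y_i} - P_t$ satisfies an mgf bound $\E{e^{\eta \Delta_i}} \le D$ for suitable constants $\eta, D > 0$, and Lemma~\ref{lemma:mgf-max-bound} gives $\E{\Delta_{(\lambda)}} = O(\ln^+\lambda)$.

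With $\E{\Delta_{(\lambda)}} = O(\ln^+\lambda)$ and initial potential at least $n/2$, the additive drift theorem of~\citet{He2004} yields that the expected number of rounds to reach $P_t = n$ (hence to query~$x^*$) is $\Omega(n/\ln^+\lambda)$, so the expected number of function evaluations is $\Omega(\lambda n/\ln^+\lambda)$. The $\Omega(n\log n)$ summand is inherited directly: even for $\lambda = 1$ the unary unbiased black-box complexity of any function with a unique optimum is $\Omega(n\log n)$ by~\citet{Lehre2012}, and by~\eqref{eq:bbc-does-not-decrease} the $\lambda$-parallel complexity does not asymptotically decrease in~$\lambda$, so it is $\Omega(n\log n)$ for every~$\lambda$. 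Combining the two lower bounds via $\max \ge \tfrac12(\cdot + \cdot)$ gives $\Omega(\lambda n/\ln^+\lambda + n\log n)$.

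For the tightness claim on \OM, the plan is to exhibit a $\lambda$-parallel unbiased algorithm matching $O(\lambda n/\ln^+\lambda + n\log n)$ — intuitively, a \lEA variant that, when the current number of ones is~$k$, flips roughly $r = r(k,\lambda)$ bits chosen uniformly at random, with~$r$ tuned (adaptively, as in~\citet{Doerr2015} but with one fewer logarithmic factor because here we only need a matching \emph{black-box} algorithm, not a fixed-rate mutation) so that the best of $\lambda$ offspring gains $\Theta(\ln^+\lambda)$ expected potential per round while $k$ is far from~$n$, and reverts to small~$r$ near the optimum where the bound becomes $n\log n$. One then balances: in the regime $k = n - \Theta(n/\mathrm{polylog})$ the gain per round is $\Theta(\ln\lambda)$ giving $O(n/\ln\lambda)$ rounds, i.e. $O(\lambda n/\ln\lambda)$ evaluations, and the final "coupon-collector" phase near the optimum contributes $O(n\log n)$ evaluations; summing gives the claimed $O(\lambda n/\ln^+\lambda + n\log n)$. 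The cut-off point $\lambda^*_{\OM} = \Theta(\log n \log\log n)$ is then read off by solving $\lambda n/\ln^+\lambda = \Theta(n\log n)$, i.e. $\lambda/\ln\lambda = \Theta(\log n)$, which gives $\lambda = \Theta(\log n \log\log n)$; and the parallel-time statement follows by dividing the evaluation bounds by~$\lambda$.

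The main obstacle I expect is the mgf/tail estimate for a \emph{single} offspring of an \emph{arbitrary} unbiased unary operator: one must show that whatever (possibly adaptive, possibly history-dependent) distribution the algorithm uses over the number of flipped bits, the resulting increase in $\ones{\cdot}$ above the current potential $k \ge n/2$ has an exponentially decaying upper tail with constants uniform in the operator and in~$k$. The delicate point is that the operator may flip many bits (up to~$n$), and flipping many bits from a string with $\ge n/2$ ones typically \emph{decreases} the number of ones, but the algorithm could condition and the best of $\lambda$ offspring could be an unusually lucky large-flip; one needs that the probability of gaining $j$ extra ones by flipping~$r$ bits is maximised (up to constants) around $r = \Theta(j)$ and is $2^{-\Omega(j)}$ there, uniformly over~$r$ and over $k \ge n/2$. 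This is where care is required, and where the constant $c \ge 1/60$ alluded to in the introduction presumably comes from.
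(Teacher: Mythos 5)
Your overall strategy matches the paper's: a history-extremum potential, an mgf bound on the best-of-$\lambda$ progress via Lemma~\ref{lemma:mgf-max-bound}, additive drift for the round count, and a separate adaptive-mutation \lEA for the matching upper bound on \OM (the paper's Theorem~\ref{the:upper-bound-lea-adaptive-mutation} uses $p = \max\{\ln(\lambda)/(n\ln(en/i)),\, 1/n\}$ and Johannsen's variable drift theorem, much as you sketch). The cut-off computation, the parallel-time statement, and the $\Omega(n\log n)$ summand are fine. However, the central single-offspring tail estimate as you state it is false, in two ways.

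First, the claim $\Pr(\ones{y}\ge k+j)=2^{-\Omega(j)}$ ``uniformly over $r$ and over $k\ge n/2$'' fails when the chosen parent has about $n/2$ ones and the radius $r$ is large: writing $Z$ for the hypergeometric number of zeros flipped, $\ones{y}=\ones{x}-r+2Z$ fluctuates by order $\sqrt{r}$, so gains of order $\sqrt{n}$ above $k\approx n/2$ have constant probability and the best of $\lambda$ offspring gains $\Theta(\sqrt{n\log\lambda})$ in such a round, not $O(\log\lambda)$. Your heuristic that each extra zero-to-one flip costs a factor $(n-k)/n\le 1/2$ is not strong enough: after paying the binomial coefficient $\binom{r}{z}\le 4^z$ the base of the geometric decay is $4m_0/n$ with $m_0$ the number of zeros in the \emph{parent}, which is below $1$ only when the parent is within $n/8$ of the optimum. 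This is exactly why the paper does not cap the potential at $n/2$ but instead reveals for free all points at Hamming distance at least $n/16$ from both $0^n$ and $1^n$, and then splits into $m\le n/8$ (Lemmas~\ref{lem:hypergeometric-tail} and~\ref{lemma:improve-prob}, giving the $(1/2)^{z/2}$ decay) and $m>n/8$ (Lemma~\ref{lem:applying-Chvatal}, Chv\'atal's bound: any progress below $s\le n/16$ has probability $e^{-\Omega(n)}$). Second, your potential is one-sided: an unbiased operator may flip \emph{all} bits, so a previously queried point with few ones yields a child with many ones essentially deterministically, and your per-offspring bound, which conditions only on $\ones{x}\le k$, gives no control over such parents. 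The paper handles this with mirrored sampling and the two-sided potential $s=s_0^t=s_1^t$ together with $\Delta=\max\{\Delta_0,\Delta_1\}$ and the identity $\Delta_0(s,m,r)=\Delta_0(s,n-m,n-r)$; without some such device the drift bound does not hold for all unbiased algorithms. (Minor: the restriction $\lambda\le e^{\sqrt n}$ is used in the paper's upper bound, not, as you suggest, to control the initial potential.)
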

Note that the cut-off point is higher than the cut-off point for the
\lEA with the standard mutation rate $p=1/n$
from~\eqref{eq:cutoff-for-1+lambda-on-OneMax} and~\citet{Jansen2005a}.

For the proof we consider the progress made during a round of
$\lambda$ variations in terms of a potential function defined in the following. The following definitions and arguments, including several lemmas shown in the following, will also be used in Section~\ref{sec:tail-bounds} to prove lower bounds that hold with overwhelming probability.

Without loss of generality, we assume that the search point $1^n$ is
  the optimum. Following \citet{Lehre2012}, we assume a ``mirrored''
  sampling process, where every time a bit string~$x$ is queried (including
  in the initial generation), the algorithm queries the complement
  bit string $\overline{x}$ for ``free''. This makes sense as the complement of any bit string can be generated by flipping all bits. Thus we have to consider the progress towards the global optimum as well as the progress towards its complement.
\begin{definition}
\label{def:progress-measures}
Let $s_0^t$ be the minimum number of zeros in all search points queried in all steps up to time~$t$.
For all $s_0^t \le m \le n-s_0^t$ and $r \in \{0, \dots, n\}$ we define the random variable $\Delta_0(s_0^t, m, r) := \max\{0, s_0^t - |y|_0\}$ where $y$ is a random search point obtained by applying unbiased variation with radius~$r$ to a search point with $m$ zeros.
Define $s_1^t$ and $\Delta_1$ symmetrically with respect to the number of ones.

Due to mirrored sampling, we always have $s_0^t = s_1^t$, hence we simply write $s^t$ or just $s$ if we refer to the current point in time.
Then we define the progress in terms of the potential as $\Delta(s, m, r) = \max\{\Delta_0(s, m, r), \Delta_1(s, m, r)\}$.
\end{definition}
Note in particular that for all $z \in \mathbb{N}$ we have
\begin{align}
\label{eq:bound-on-Delta}
\Prob{\Delta(s, m, r) \ge z}
&\le \Prob{\Delta_0(s, m, r) \ge z} + \Prob{\Delta_1(s, m, r) \ge z}
\end{align}

Also note that by symmetry of zeros and ones $\Delta_0(s, m, r)$ has the same distribution as $\Delta_1(s, n-m, r)$, hence it suffices to study the distribution of $\Delta_0$.
We also have for all $s, s \le m \le n-s, r$
\begin{equation}
\label{eq:Delta-symmetry}
\Delta_0(s, m, r) = \Delta_0(s, n-m, n-r)
\end{equation}
as flipping all bits (in the transition from $m$ to $n-m$) and then flipping all but $r$ bits in the variation has the same effect as flipping $r$ bits in the first place.
Hence it suffices to consider $\Delta_0(s, m, r)$ for $s \le m \le n/2$.


Now consider the progress $\Delta_0(s, m, r)$. Let $Z$ be the number of 0\nobreakdash-bits that flipped to~1, then
there are $r-Z$ new 0-bits that were originally~1. Therefore, the
number of 0\nobreakdash-bits in the new generated search point is $m-Z+(r-Z)$
where $Z$ can be described by the hypergeometric distribution with
parameters $n, m$ and $r$.  We only make progress if the number of
0-bits in the new search point is less than $s$. Hence the progress
(decrease in $0$-potential) is
\begin{align*}
\Delta_0(s,m,r)=\;& \max\{ Z-(r-Z)+(s-m),0\}\\
=\;& \max\{ 2Z-r+s-m,0\}.
\end{align*}

We show a tail inequality for hypergeometric variables and use this to derive a progress bound for the 0-potential.
\begin{lemma}
\label{lem:hypergeometric-tail}
Let $Z$ be a hypergeometrically distributed random variable with parameters $n$ (number of balls), $m$ (number of red balls), and $r$ (number of balls drawn). For all ${z \in \N_0}$,
\[
\Prob{Z = z} \le \binom{r}{z} \cdot \frac{m^z}{n^z}
\le \left(\frac{4m}{n}\right)^z
\]
where the second inequality holds for $z \ge r/2$.
\end{lemma}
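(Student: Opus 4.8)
The plan is to bound the hypergeometric point probability $\Prob{Z = z}$ directly from its closed form and then simplify. Recall that $Z$ counts red balls when drawing $r$ balls without replacement from an urn of $n$ balls containing $m$ red ones, so
\[
\Prob{Z = z} = \frac{\binom{m}{z}\binom{n-m}{r-z}}{\binom{n}{r}}.
\]
First I would rewrite this in the ``choose which of the $r$ drawn balls are red'' form, namely $\Prob{Z=z} = \binom{r}{z} \cdot \frac{m^{\underline{z}} \,(n-m)^{\underline{r-z}}}{n^{\underline{r}}}$ where $a^{\underline{b}} = a(a-1)\cdots(a-b+1)$ denotes the falling factorial; this identity is standard and follows from expanding all the binomials into factorials and regrouping. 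The key observation is then that $\frac{m^{\underline{z}}\,(n-m)^{\underline{r-z}}}{n^{\underline{r}}} \le \frac{m^z}{n^z}$: matching the $r-z$ factors of $(n-m)^{\underline{r-z}}$ in the numerator against $r-z$ of the factors of $n^{\underline{r}}$ in the denominator, each such ratio $(n-m-i)/(n-j)$ is at most $1$, and the remaining $z$ denominator factors $n, n-1, \dots, n-z+1$ are each at least as large as the corresponding numerator factors $m, m-1, \dots, m-z+1$ only after noting $n - i \ge n$ is false — more carefully, $\prod_{i=0}^{z-1}(m-i) \le m^z$ and $\prod_{i=0}^{z-1}(n-i) \ge$ the product of the smallest $z$ falling factors, so one bounds numerator by $m^z$ and the surviving denominator factors from below; the cleanest route is simply $m^{\underline z}/n^{\underline z} = \prod_{i=0}^{z-1}\frac{m-i}{n-i} \le (m/n)^z$ since $m \le n$ implies $\frac{m-i}{n-i} \le \frac{m}{n}$ for each $i$, together with $(n-m)^{\underline{r-z}} \le n^{\underline{r-z}}$ cancelling against the remaining denominator factors. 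This yields the first inequality $\Prob{Z=z} \le \binom{r}{z} (m/n)^z$.

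For the second inequality, assume $z \ge r/2$, equivalently $r \le 2z$, so that $\binom{r}{z} \le 2^r \le 2^{2z} = 4^z$ (using $\binom{r}{z} \le 2^r$ and monotonicity). Combining, $\Prob{Z=z} \le 4^z (m/n)^z = (4m/n)^z$, as claimed. The only mild subtlety is the edge case $z = 0$ (then both bounds read $1 \ge \Prob{Z=0}$, trivially true since $r/2 = 0$ forces $r = 0$ for the second bound to be invoked) and ensuring $z \le \min(m,r)$ — for $z$ outside the support $\Prob{Z=z}=0$ and the bound holds vacuously.

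I do not expect any serious obstacle here; the main thing to get right is the direction of the factorial-ratio comparison (which quantities to cancel against which), and being slightly careful that the bound $(m-i)/(n-i) \le m/n$ genuinely holds for all $0 \le i \le z-1$ — it does, because $m \le n$. The inequality $\binom{r}{z} \le 2^r$ in the regime $z \ge r/2$ could alternatively be sharpened via a binary-entropy estimate, but the crude $4^z$ bound is all that is needed downstream, so I would keep the argument short.
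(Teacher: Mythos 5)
Your proof is correct and follows essentially the same route as the paper: expand the hypergeometric probability into factorials, extract $\binom{r}{z}$, bound the ratio of falling factorials by $(m/n)^z$ using $(m-i)/(n-i)\le m/n$ together with the observation that the $(n-m)$-part is dominated by the leftover denominator factors (note these leftover factors are $(n-z)(n-z-1)\cdots(n-r+1)$, not $n^{\underline{r-z}}$, so the comparison you need is $(n-m)^{\underline{r-z}}\le (n-z)^{\underline{r-z}}$, which holds precisely because $z\le m$), and finally $\binom{r}{z}\le 2^r\le 4^z$ for $z\ge r/2$. The edge cases ($z=0$ and $z$ outside the support) are handled the same way in the paper.
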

\begin{proof}
We assume $z \le m$ and $z \le r$ as otherwise $\Prob{Z = z} = 0$.
We further assume $z \ge 1$ as for $z=0$ the probability bound is~1 and the statement is trivial.
Now,
\begin{align}
\Prob{Z = z}
=\;& \binom{m}{z}\binom{n-m}{r-z}/\binom{n}{r}\notag\\
=\;& \frac{m!(n-m)!r!(n-r)!}{z!(m-z)!(r-z)!(n-m-r+z)!n!}\notag\\
=\;& \binom{r}{z} \cdot \frac{m!(n-m)!(n-r)!}{(m-z)!(n-m-r+z)!n!}\label{eq:progress-factorials}.
\end{align}
The fraction can be written as
\allowdisplaybreaks[0]
\begin{align*}
& \frac{m(m-1)\cdot \ldots \cdot (m-z+1)}{n(n-1)\cdot \ldots \cdot(n-z+1)} \cdot 
\frac{(n-m)(n-m-1)\cdot \ldots \cdot(n-m-r+z+1)}{(n-z)(n-z-1) \cdot \ldots \cdot(n-r+1)}
\end{align*}
\allowdisplaybreaks[4]
Since $z \le m$, the second fraction above is at most~1. The first fraction is at most $m^z/n^z$
as $(m-i)/(n-i) \le m/n$ for all $i \in \N$ and $m \le n$.
Plugging this into~\eqref{eq:progress-factorials} and using $\binom{r}{z} \le 2^r \le 2^{2z} = 4^z$ for $z \ge r/2$ yields
\[
\Prob{Z = z} \le \binom{r}{z} \cdot \frac{m^z}{n^z}
\le \left(\frac{4m}{n}\right)^z. \qedhere
\]
\end{proof}

The following lemma shows that for any radius~$r$ the probability of having a progress of~$z$ decreases exponentially with~$z$.
\begin{lemma}\label{lemma:improve-prob}
Let $s$ denote the current 0-potential.
If $s\leq m\leq n/8$, then for all $z\in\mathbb{N}$ and $r\in \{1, \dots, n\}$,
\[
\Prob{\Delta_0(s, m, r) = z} \le \left(\frac{1}{2}\right)^{z/2}.
\]
\end{lemma}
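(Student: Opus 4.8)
The plan is to use the explicit description of the progress derived just before the lemma, namely $\Delta_0(s,m,r) = \max\{2Z - r + s - m,\, 0\}$ where $Z$ is hypergeometrically distributed with parameters $n$, $m$, $r$, and to combine this with the sharp tail bound of Lemma~\ref{lem:hypergeometric-tail}.

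First I would dispose of the trivial case: if $z = 0$ the claimed bound is~$1$ and there is nothing to prove, so assume $z \ge 1$. For such $z$ the event $\{\Delta_0(s,m,r) = z\}$ forces the inner term $2Z - r + s - m$ to equal $z$ (the maximum is attained by the first argument, since it is positive), hence it coincides with the event $\{Z = z'\}$ for $z' := (z + r + m - s)/2$; if $z + r + m - s$ is odd this event is empty and the bound holds trivially, so assume $z'$ is a non-negative integer. The crucial observation is that, because $s \le m$, we have $z' = (z + r + m - s)/2 \ge (z + r)/2 \ge r/2$. This is exactly the regime in which the second inequality of Lemma~\ref{lem:hypergeometric-tail} applies, giving $\Pr(Z = z') \le (4m/n)^{z'}$.

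It then only remains to feed in the hypothesis $m \le n/8$, which yields $4m/n \le 1/2$, so that $\Pr(Z = z') \le (1/2)^{z'} \le (1/2)^{(z+r)/2} \le (1/2)^{z/2}$, where the last step merely uses $r \ge 0$. Combining the two displays gives $\Pr(\Delta_0(s,m,r) = z) \le (1/2)^{z/2}$, as claimed.

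I do not expect a deep obstacle here; the single point that needs care is the bookkeeping that makes $z' \ge r/2$ — one must notice that a progress of size $z$ necessarily corresponds to a hypergeometric count that exceeds half the radius, which is precisely what unlocks the strong form of the tail bound in Lemma~\ref{lem:hypergeometric-tail} rather than the weaker factorial form. Everything else is substitution together with the elementary inequality $m/n \le 1/8$.
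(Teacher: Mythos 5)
Your proof is correct and follows essentially the same route as the paper's: identify the event $\{\Delta_0(s,m,r)=z\}$ with the hypergeometric event $\{Z=(z+r+m-s)/2\}$, note that this count is at least $r/2$ so that the strong form of Lemma~\ref{lem:hypergeometric-tail} gives the bound $(4m/n)^{(z+r+m-s)/2}$, and then use $m\le n/8$ and $r+m-s\ge 0$ to conclude. Your handling of the $z=0$ and parity cases is slightly more careful than the paper's one-line computation, but the substance is identical.
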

\begin{proof}
Applying Lemma~\ref{lem:hypergeometric-tail} to a hypergeometric random variable $Z$ with parameters $m$ and $r$ we have, for all $z \in \N_0$,
\begin{align*}
\Prob{\Delta_0(s, m, r) = z}
=\;& \Prob{Z =\frac{z + r + m - s}{2}}\\
\le\;& \left(\frac{4m}{n}\right)^{(z+r+m-s)/2} \le \left(\frac{1}{2}\right)^{z/2}. \qedhere
\end{align*}
\end{proof}

The following lemma gives another tail bound that will be used to exclude steps where a search point of potential $m \gg s$ is chosen for variation. The probability of having a positive progress decreases rapidly with growing $m-s$.
\begin{lemma}
\label{lem:applying-Chvatal}
For every $s \le m \le n/2$ and every $r \in \{1, \dots, n\}$
\[
\Prob{\Delta_0(s, m, r) > 0} \le \exp\left(-\frac{(m-s)^2}{2r}\right).
\]
\end{lemma}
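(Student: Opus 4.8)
The plan is to reduce the statement to a tail bound for the hypergeometric random variable $Z$ governing the number of $0$-bits flipped to~$1$. Recall from the discussion preceding the lemma that $\Delta_0(s,m,r) = \max\{2Z - r + s - m, 0\}$, where $Z$ is hypergeometric with parameters $n$, $m$, $r$. Hence the event $\{\Delta_0(s,m,r) > 0\}$ is exactly $\{2Z > r + m - s\}$, i.e. $\{Z > (r+m-s)/2\}$. The mean of $Z$ is $\E{Z} = rm/n \le r/2$ since $m \le n/2$, so $(r+m-s)/2 \ge (r + \E{Z}\cdot(n/m) \cdot (m/n) \dots)$ — more simply, $(r+m-s)/2 - \E{Z} \ge (r+m-s)/2 - r/2 = (m-s)/2 \ge 0$. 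So we are asking for the probability that $Z$ exceeds its mean by at least $(m-s)/2$.

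First I would invoke a Chernoff–Hoeffding-type tail bound for the hypergeometric distribution — this is exactly the kind of bound attributed to Chvátal (hence the lemma name), which states that for hypergeometric $Z$ with $r$ draws and success fraction $p = m/n$, one has $\Prob{Z \ge \E{Z} + t} \le \exp(-2t^2/r)$. Applying this with $t = (m-s)/2$ and using $\E{Z} = rm/n \le r/2 \le (r+m-s)/2 - (m-s)/2 + \dots$ — concretely, since $\E{Z} \le r/2$ and we need $Z > (r+m-s)/2 = r/2 + (m-s)/2$, the deviation above the mean is at least $(m-s)/2$, giving
\[
\Prob{\Delta_0(s,m,r) > 0} \le \Prob{Z \ge \E{Z} + \tfrac{m-s}{2}} \le \exp\!\left(-\frac{2\,((m-s)/2)^2}{r}\right) = \exp\!\left(-\frac{(m-s)^2}{2r}\right),
\]
which is precisely the claimed bound.

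The only genuinely delicate point is making sure the deviation $t = (m-s)/2$ is correctly identified as a deviation \emph{above the mean}: we need $(r+m-s)/2 \ge \E{Z}$, which follows from $\E{Z} = rm/n \le r/2$ (because $m \le n/2$) together with $m \ge s$, so that $(r+m-s)/2 \ge r/2 \ge \E{Z}$ and the excess is at least $(m-s)/2$; the case $m = s$ is trivial since then the bound reads $\le 1$. A secondary point is to state precisely which hypergeometric tail inequality is being used and to cite it (the Hoeffding/Chvátal bound $\Prob{Z \ge \E{Z} + rt} \le e^{-2rt^2}$ for the fraction, equivalently $e^{-2t^2/r}$ for the count), since this is the one external ingredient not already developed in the excerpt; everything else is a one-line substitution. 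I expect no real obstacle beyond bookkeeping the constants so that the exponent comes out as exactly $(m-s)^2/(2r)$ rather than a weaker constant.
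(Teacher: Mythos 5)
Your proposal is correct and follows essentially the same route as the paper: identify $\{\Delta_0(s,m,r)>0\}=\{Z>(r+m-s)/2\}$, observe that $\E{Z}=rm/n\le r/2$ so the threshold exceeds the mean by at least $(m-s)/2$, and apply Chv\'atal's hypergeometric tail bound with that deviation to obtain $\exp(-(m-s)^2/(2r))$. The paper phrases the bound as $\Prob{Z\ge \E{Z}+r\delta}\le\exp(-2\delta^2 r)$ with $\delta=(m-s)/(2r)$, which is just the fraction form of the count form you use; the constants work out identically.
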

\begin{proof}
We use Chv\'{a}tal's tail bound~\citep{Chvatal1979}:
$\Prob{Z\geq \E{Z}+r\delta} \leq
  \exp(-2\delta^2r)$, where $\E{Z}=\frac{rm}{n}$.
 \begin{align*}
 & \Prob{\Delta_0(s,m,r)>0}\\
= \;& \Prob{Z > \frac{r+m-s}{2}}\\
= \;& \Prob{Z > \frac{r m}{n} +r \cdot \left(\frac{r+m-s}{2r}-\frac{m}{n}\right)}\\
\leq\;& \Prob{Z \geq \frac{rm}{n} + r \cdot \left(\frac{m-s}{2r}\right)}\\
\leq\;& \exp\left(-2r \left(\frac{m-s}{2r}\right)^2\right)
= \exp\left(-\frac{(m-s)^2}{2r}\right).\qedhere
 \end{align*}
\end{proof}

Putting all lemmas together shows that the expected progress is at most logarithmic in $\lambda$.
\begin{lemma}
\label{lem:progress-m}
Let $\Delta_0^{(\lambda)} = \Delta_0^{(\lambda)}(s, m_i, r_i)$ be the maximum of $\lambda$  random variables $\Delta_0(s, m_i, r_i)$ for arbitrary $s \le m_i \le n/2$ and $r_i$, $1\le i\le\lambda$.
For $s \le n/16$ we have
$\E{\Delta_0^{(\lambda)}}  =  \Oh {\log(\lambda)}$.
\end{lemma}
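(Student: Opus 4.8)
The plan is to split the $\lambda$ trials into two groups according to how far the chosen search point is from the current potential, and to bound the maximum progress in each group separately. Say a trial $i$ is \emph{close} if $m_i - s \le C\sqrt{n \ln \lambda}$ for a suitable constant $C$, and \emph{far} otherwise. For the far trials, Lemma~\ref{lem:applying-Chvatal} gives $\Prob{\Delta_0(s, m_i, r_i) > 0} \le \exp(-(m_i-s)^2/(2r_i)) \le \exp(-(m_i-s)^2/(2n))$, which for $m_i - s > C\sqrt{n\ln\lambda}$ is at most $\lambda^{-C^2/2}$; taking $C$ a large enough constant and a union bound over the (at most $\lambda$) far trials shows that with probability $1 - O(\lambda^{-c})$ \emph{no} far trial makes any progress at all, so far trials contribute $O(1)$ to the expectation (even in the rare failure event the progress is at most $n$, and $n \cdot \lambda^{-c}$ is negligible once $\lambda$ is large; for small $\lambda$ the whole bound $O(\log\lambda)$ is vacuous anyway, so one treats $\lambda = O(1)$ separately where $\E{\Delta_0^{(\lambda)}} \le \E{\Delta_0} = O(1)$ by Lemma~\ref{lemma:improve-prob}).

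For the close trials, I would like to apply Lemma~\ref{lemma:mgf-max-bound} to $\Delta_0(s, m_i, r_i)$ with some fixed $\eta$ and $D$. Lemma~\ref{lemma:improve-prob} only gives the geometric-type tail $\Prob{\Delta_0(s,m_i,r_i) = z} \le 2^{-z/2}$ under the hypothesis $m_i \le n/8$; since $s \le n/16$ and a close trial has $m_i \le s + C\sqrt{n\ln\lambda}$, this hypothesis $m_i \le n/8$ does hold for all $n$ large enough \emph{provided} $\lambda$ is not too large — and indeed the lemma is only claimed for $\lambda \le e^{\sqrt n}$ in the ambient theorem, under which $\sqrt{n\ln\lambda} \le n^{3/4} = o(n)$, so $m_i \le n/16 + o(n) \le n/8$ eventually. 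Granting this, $\Delta_0(s,m_i,r_i)$ is stochastically dominated by a geometric-tail variable, so $\E{e^{\eta \Delta_0(s,m_i,r_i)}} = \sum_{z \ge 0} e^{\eta z}\Prob{\Delta_0 = z} \le \sum_{z\ge 0} (e^{\eta} 2^{-1/2})^z$, which converges to a constant $D$ once $\eta < \tfrac12\ln 2$, say $\eta := \tfrac14\ln 2$. Then Lemma~\ref{lemma:mgf-max-bound} gives $\E{\max_{i \text{ close}} \Delta_0(s,m_i,r_i)} \le (\ln(D\lambda)+1)/\eta = O(\log\lambda)$.

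Finally $\Delta_0^{(\lambda)} = \max\{\max_{i\text{ close}}\Delta_0, \max_{i\text{ far}}\Delta_0\} \le \max_{i\text{ close}}\Delta_0 + \max_{i\text{ far}}\Delta_0$, and taking expectations combines the two bounds: $\E{\Delta_0^{(\lambda)}} = O(\log\lambda) + O(1) = O(\log\lambda)$. The main obstacle I anticipate is the bookkeeping around the hypothesis $m_i \le n/8$ in Lemma~\ref{lemma:improve-prob}: a close trial can have $m_i$ larger than $n/16$, so one must verify that the $\sqrt{n\ln\lambda}$ slack is $o(n)$ under the relevant range of $\lambda$ (which is why the $\lambda \le e^{\sqrt n}$ restriction matters), and handle the genuinely small-$\lambda$ regime — where $\ln\lambda$ can be $O(1)$ and the far/close threshold degenerates — by the trivial bound $\E{\Delta_0^{(\lambda)}} \le \sum_{i=1}^{\lambda}\E{\Delta_0(s,m_i,r_i)} = O(\lambda) = O(1)$ via Lemma~\ref{lemma:improve-prob}.
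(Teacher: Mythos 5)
Your overall strategy is the paper's: discard trials whose starting point is too far from the current potential via the Chv\'atal bound of Lemma~\ref{lem:applying-Chvatal}, and handle the remaining trials via the geometric tail of Lemma~\ref{lemma:improve-prob} fed into the mgf/maximum bound of Lemma~\ref{lemma:mgf-max-bound}. The close-trial half of your argument is sound. However, there is a genuine gap in the far-trial accounting, caused by your choice of threshold. With far meaning $m_i-s>C\sqrt{n\ln\lambda}$, the failure probability you obtain is only $\lambda^{-c}$ (polynomially small in $\lambda$), and the resulting error term $n\cdot\lambda^{-c}$ is \emph{not} $\Oh{\log\lambda}$ throughout the relevant range. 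Take $\lambda=\Theta(\log n)$: the claimed bound is $\Oh{\log\log n}$, which is not vacuous, your fallback ``treat $\lambda=O(1)$ separately'' does not apply, and $n\cdot(\log n)^{-c}$ exceeds $\log\log n$ for every constant $c$. The same problem occurs for any $\lambda$ growing slower than a sufficiently large polynomial in $n$. So the sentence ``far trials contribute $O(1)$ to the expectation'' is unjustified precisely in the regime $\omega(1)\le\lambda\le n^{o(1)}$.

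The repair is to put the threshold where Lemma~\ref{lemma:improve-prob} actually needs it, namely at $m_i\le n/8$, which is what the paper does. Since $s\le n/16$, any trial with $m_i>n/8$ has $m_i-s\ge n/16$, so Lemma~\ref{lem:applying-Chvatal} gives $\Prob{\Delta_0(s,m_i,r_i)>0}\le e^{-\Omega(n)}$; a union bound over at most $\lambda\le e^{\sqrt n}$ such trials and the crude bound $\Delta_0\le n$ make their contribution $n\lambda e^{-\Omega(n)}=o(1)$ for the entire admissible range of $\lambda$, with no case distinction on $\lambda$ needed. All remaining trials satisfy $m_i\le n/8$ directly, so your mgf computation applies to them verbatim (the paper uses $\eta=\ln(4/3)$ and $D=9+6\sqrt2$ where you use $\eta=\tfrac14\ln 2$; either works). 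In short: your $\sqrt{n\ln\lambda}$ threshold is both unnecessary for the close trials (the hypothesis $m_i\le n/8$ is the only thing Lemma~\ref{lemma:improve-prob} requires) and too weak for the far trials; replacing it by $n/8$ closes the gap.
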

\begin{proof}
If $n/8 < m_i \le n/2$ then by Lemma~\ref{lem:applying-Chvatal}
 \begin{align*}
 & \Prob{\Delta_0(s,m_i,r_i)>0} \le e^{-n^2/(512r_i)} \le e^{-\Omega(n)}.
 \end{align*}
This means that the probability of making any progress is
exponentially small, for any~$r_i$. Thus $\E{\Delta_0^{(\lambda)}}$ is maximised if we assume that $m_i\le n/8$ for all~$i$.

Under this assumption, applying Lemma~\ref{lemma:improve-prob}, for all $z \in \N_0$,
\begin{align*}
\Prob{\Delta_0(s, m_i, r_i) = z} \le\;& \left(\frac{1}{2}\right)^{z/2}
\end{align*}
hence $\E{e^{\eta \Delta_0(s, m_i, r_i)}} \le D$
for $\eta := \ln(4/3)$ and $D := 9+6\sqrt{2}$. Applying Lemma~\ref{lemma:mgf-max-bound} proves $ \E{\Delta_0^{(\lambda)}} = \Oh{\log \lambda}$.
\end{proof}

Now we are in a position to prove Theorem~\ref{the:black-box-complexity-onemax}.
\begin{proof}[Proof of Theorem~\ref{the:black-box-complexity-onemax}]
  The upper bound for \OM will be shown later in
  Theorem~\ref{the:upper-bound-lea-adaptive-mutation}.  The lower
  bound $\Omega(n \log n)$ follows from unbiased unary black-box
  complexity~\citep{Lehre2012}. Hence, it suffices to prove
  the lower bound $\Omega(\lambda n/\ln^+ \lambda)$ for $\lambda \ge 3$, where $\ln^+ \lambda$ can be replaced by $\ln \lambda$.


  Consider any $\lambda$-parallel unary unbiased black-box algorithm.
  We grant the algorithm an advantage by revealing all search points with Hamming distance at least $n/16$ to both $0^n$ and $1^n$ at no cost. Hence the potential is always $s \le n/16$.
  Let $\Delta_0^{(\lambda)}$ be the progress due to reduction of the
  $0$\nobreakdash-potential in one step, and $\Delta_1^{(\lambda)}$ be the progress due to reduction of
  the $1$\nobreakdash-potential.
  By virtue of the symmetry of $\Delta_0$ and $\Delta_1$, Lemma~\ref{lem:progress-m} also applies to $\Delta_1^{(\lambda)}$.
  Hence the expected change in
  potential per round is no more than
  \[
  \E{\Delta_0^{(\lambda)}} + \E{\Delta_1^{(\lambda)}} = O(\log \lambda).
   \]
  Hence, by the additive drift theorem~\citep{He2004}, the expected number of rounds
  until one of the search points $0^n$ or $1^n$ is obtained is
  $\Omega(n/\log \lambda)$. Multiplying by~$\lambda$ proves the claim.
\end{proof}

\section{An Optimal Parallel Black-Box Algorithm for OneMax}
\label{sec:optimal-algorithm-onemax}

The following theorem shows that the lower bound on the black-box complexity from Theorem~\ref{the:black-box-complexity-onemax} is tight.
We show that the \lEA has a better optimisation time if the mutation rate is chosen adaptively, according to the current best fitness. This is similar to common ideas from artificial immune systems, particularly the clonal selection algorithm. Adaptive mutation rates for \OM have been studied by~\citet{Zarges2008}, however the standard parameters for the clonal selection algorithm were too drastic to even obtain polynomial running times. Better results were obtained when using a population-based adaptation~\citep{Zarges2009}.

The following result reveals an optimal choice for the mutation rate of the \lEA, depending on~$n$ and $\lambda$.
\begin{theorem}
\label{the:upper-bound-lea-adaptive-mutation}
On OneMax, the expected number of function evaluations of the \lEA with an adaptive mutation rate $p = \max\{\ln(\lambda)/(n\ln(en/i)), \ 1/n\}$, where $i$ is the number of zeros in the current search point, for any $\lambda \le e^{\sqrt{n}}$, is at most
\[
\Oh{\frac{\lambda n}{\ln \lambda} + n \log n}.
\]
The parallel time (number of generations) is $\Oh{\frac{n}{\ln \lambda} + \frac{n \log n}{\lambda}}$.
\end{theorem}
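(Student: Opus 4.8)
The plan is to track $i$, the number of zeros in the current search point of the \lEA --- non-increasing by elitism --- bound the expected number of generations, and multiply by $\lambda$. Since the adaptive rate equals $\ln(\lambda)/(n\ln(en/i))$ precisely when $i > en/\lambda$ (and $1/n$ otherwise), I would split the run into \emph{Phase~1} ($i > en/\lambda$) and \emph{Phase~2} ($i\le en/\lambda$). For $\lambda = \Oh{1}$ the rate is $\Theta(1/n)$ throughout and the bound follows from the classical $\Oh{n\log n}$ bound of the \lEA on \OM, so assume $\lambda$ is larger than any fixed constant; I will use that $\lambda\le e^{\sqrt n}$ gives $\ln\lambda\le\sqrt n$, hence $\ln\lambda\le n/\ln\lambda$, and that the rate is $\Oh{1/\sqrt n}$, so all tail estimates below are in the small-$p$ regime. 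Phase~2 is routine: with rate $1/n$ and $i\le en/\lambda$ zeros, one offspring improves with probability $\ge\tfrac in(1-\tfrac1n)^{n-1}\ge\tfrac i{en}$, so among $\lambda$ offspring the improvement probability is $\ge 1-(1-\tfrac i{en})^\lambda\ge\tfrac{i\lambda}{2en}$ (using $\tfrac{i\lambda}{en}\le1$); hence the expected time at $i$ zeros is $\le\tfrac{2en}{i\lambda}$, and summing over $i=1,\dots,\lceil en/\lambda\rceil$ gives $\Oh{(n/\lambda)\log(en/\lambda)}=\Oh{(n\log n)/\lambda}$ generations, i.e.\ $\Oh{n\log n}$ evaluations.

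The heart of Phase~1 is a per-generation progress claim: for some constant $c_0>0$ and every $i$ with $en/\lambda<i\le n$, one generation decreases the number of zeros by at least $1$ with probability $\Omega(1)$, and --- if moreover $i\ge\lceil\ln\lambda\rceil$ --- by at least $c_0\ln(\lambda)/\ln(en/i)$ with probability $\Omega(1)$. To prove it I would study, for a single offspring (which flips $\bin(i,p)$ of the zeros and $\bin(n-i,p)$ of the ones independently), the event ``at least $j$ zeros flip and no one flips'', where $j:=\lceil c_0\ln(\lambda)/\ln(en/i)\rceil$, replaced by $1$ if this exceeds $i$; this event yields an improvement of at least $j$ and has probability $\Prob{\bin(i,p)\ge j}\cdot(1-p)^{n-i}$, and it suffices to show this is $\ge c_1/\lambda$, whence a union bound over the $\lambda$ offspring gives probability $1-(1-c_1/\lambda)^\lambda=\Omega(1)$. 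With $u:=\ln(en/i)$, $np=\ln(\lambda)/u$ and $i=ne^{1-u}$, I would split on $u$: for $u$ below a suitable constant, $j$ is at or below the mean $ip$ of flipped zeros, so $\Prob{\bin(i,p)\ge j}=\Omega(1)$ and it remains to observe $(1-p)^{n-i}\ge e^{-2(1-e^{1-u})\ln(\lambda)/u}\ge\lambda^{-\theta}$ with $\theta<1$ (since $\sup_{u\ge1}\tfrac{1-e^{1-u}}{u}<\tfrac12$); for larger $u$, $j$ exceeds $ip$ and $\Prob{\bin(i,p)\ge j}\ge\binom ij p^j(1-p)^{i-j}\ge(ip/j)^j e^{-2ip}$, so the event's probability is $\ge(ip/j)^j\lambda^{-2/u}$, and --- this is exactly why the mutation rate carries the factor $\ln(en/i)$ in its denominator --- plugging in $j$ and $p$ shows this equals $\lambda^{-\rho}$ with $\rho<1$ for $c_0$ small and $u$ past the threshold. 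The case $i<\lceil\ln\lambda\rceil$, which forces $u=\Omega(\ln n)$ (so $p$ stays small) and the $j=1$ sub-case, reduces to checking $\lambda\cdot ip\cdot\lambda^{-2/u}=\tfrac{\ln\lambda}{u}\,e^{1-u+\ln(\lambda)(1-2/u)}=\Omega(1)$, which follows since the exponent is concave in $u$ with both endpoints of the relevant interval at least $-O(1)$ and $\ln(\lambda)/u>1$. A few boundary situations ($\lambda$ just above its constant lower bound, or $i$ within a constant factor of $n$) would be verified directly, since there a single offspring flips a zero and no one with constant probability.

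Granting the claim, set $i^*:=\max\{\lceil en/\lambda\rceil,\lceil\ln\lambda\rceil\}$. For $i\ge i^*$ the expected per-generation decrease $D$ satisfies $\E{D\mid i}=\Omega(\max\{1,\ln(\lambda)/u\})$, and I would apply the additive drift theorem~\citep{He2004} with potential $\phi(i):=i(\ln(en/i)+1)/\ln\lambda$, which is non-negative, increasing, and concave with $\phi'(i)=\ln(en/i)/\ln\lambda$ and $\phi(n)=2n/\ln\lambda$: concavity gives $\E{\phi(i)-\phi(i_{t+1})\mid i_t=i}\ge\phi'(i)\,\E{D\mid i}=\tfrac{u}{\ln\lambda}\cdot\Omega(\max\{1,\ln(\lambda)/u\})=\Omega(1)$, so $\Oh{n/\ln\lambda}$ generations bring $i$ below $i^*$. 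The residual band $en/\lambda<i<i^*$ is non-empty only when $\ln\lambda>en/\lambda$, and there the claim still gives $D\ge1$ with probability $\Omega(1)$, costing a further $\Oh{\ln\lambda}=\Oh{n/\ln\lambda}$ generations. Hence Phase~1 takes $\Oh{n/\ln\lambda}$ generations, i.e.\ $\Oh{\lambda n/\ln\lambda}$ evaluations; adding Phase~2 gives $\Oh{\lambda n/\ln\lambda+n\log n}$ evaluations and $\Oh{n/\ln\lambda+(n\log n)/\lambda}$ generations, as claimed.

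The main obstacle is the Phase~1 claim, and within it the uniform lower bound $\Prob{\bin(i,p)\ge j}\cdot(1-p)^{n-i}\ge c_1/\lambda$ over all $i\in(en/\lambda,n]$. This is genuinely two-sided: $j$ must be $\Theta(\ln(\lambda)/\ln(en/i))$ to deliver the advertised progress, yet small enough that the event keeps probability $\Omega(1/\lambda)$ per offspring, and the entire margin is dictated by the precise form of the mutation rate, so the case analysis on $\ln(en/i)$ (together with the boundary cases and the bookkeeping of constants) must be carried out carefully. The remaining ingredients --- the Phase~2 estimate, the concavity step in the drift bound, and the separate treatment of the thin band just above the Phase~1/Phase~2 boundary --- are standard.
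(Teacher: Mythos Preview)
Your approach is correct and shares the same skeleton as the paper's: split at the threshold where the adaptive rate kicks in, handle the small-$i$ phase by the standard coupon-collector argument, and in the large-$i$ phase show a per-generation expected progress of order $\ln(\lambda)/\ln(en/i)$, then integrate. Your potential $\phi(i)=i(\ln(en/i)+1)/\ln\lambda$ is exactly the antiderivative of $\ln(en/i)/\ln\lambda$, so your additive-drift-with-potential step is formally the same as the paper's use of Johannsen's variable drift theorem with drift function $h(i)\asymp \ln(\lambda)/\ln(en/i)$.

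The substantive difference is in how the per-generation progress is established. You take $j=\lceil c_0\ln(\lambda)/u\rceil$ with a safety constant $c_0<1$ and then need a two-regime case analysis on $u=\ln(en/i)$ (plus the thin band $i<\ln\lambda$) to push the single-offspring success probability above $c_1/\lambda$. The paper instead takes $k=pn=\ln(\lambda)/u$ \emph{exactly}, which makes $(ip/k)^k=(i/n)^{pn}$ and hence $\lambda\,(1-p)^{n-k}(i/n)^{pn}\approx \lambda\,(en/i)^{-pn}\cdot e^{pn}\cdot\ldots$ collapse via the identity $(en/i)^{pn}=\lambda$; this removes the case split entirely and lets the drift be read off and integrated in a few lines. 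Your route works and the concavity argument $f(u)=1-u+(1-2/u)\ln\lambda\ge-1$ on $[2,\ln\lambda]$ is a nice way to close the residual band, but the paper's choice of $k$ is what buys the clean calculation. If you revise, try setting $j=pn$ first and you will see most of your case analysis disappear.
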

\begin{proof}
Let $i$ be the current number of zeros and $p$ be the mutation rate. The probability of decreasing the number of zeros by any~$k \in \N$ with $k \le i$ is at least
\begin{align*}
\Prob{\Delta \ge k} \ge\;& \binom{i}{k} \cdot p^{k} \cdot (1-p)^{n-k}\\
\ge\;& \frac{i^{k}}{k^k} \cdot p^{k} \cdot (1-p)^{n-k}
= (1-p)^{n-k} \cdot \left(\frac{ip}{k}\right)^{k}.
\end{align*}
Then the probability that one of $\lambda$ offspring will decrease the number of zeros by at least $k$ is at least, using $1-(1-p)^\lambda \ge 1-e^{-p\lambda} \ge 1 - 1/(1+p\lambda) = p\lambda/(1+p\lambda)$,
\begin{align*}
\Prob{\Delta_{(\lambda)} \ge k} \ge 1-(1-\Prob{\Delta \ge k})^\lambda
\ge\;& \frac{\lambda (1-p)^{n-k} \cdot (ip/k)^{k}}{1 + \lambda (1-p)^{n-k} \cdot (ip/k)^{k}}.
\end{align*}
Hence for any $k \le i$ the expected drift is at least
\begin{align*}
\E{\Delta_{(\lambda)}} \ge\;& k \cdot \frac{\lambda (1-p)^{n-k} \cdot (ip/k)^{k}}{1 + \lambda (1-p)^{n-k} \cdot (ip/k)^{k}}.
\end{align*}
For $i > en/\ln \lambda$, which implies $pn > 1$, we set $k := pn = \ln(\lambda)/\ln(en/i)$. We have $k \le i$ since $k \le \ln(\lambda) \le \sqrt{n} \le en/\ln \lambda$.
We use $k := 1$ for $i \le en/\ln \lambda$, the realm where $p=1/n$.
This results in the following drift function~$h$:
\[
h(i) := \begin{cases}
\frac{\lambda (1-1/n)^{n-1} \cdot i/n}{1 + \lambda (1-1/n)^{n-1} \cdot i/n} & \text{if $i \le en/\ln \lambda$}\\
pn \cdot \frac{\lambda (1-p)^{n-pn} \cdot (i/n)^{pn}}{1 + \lambda (1-p)^{n-pn} \cdot (i/n)^{pn}} & \text{otherwise}
\end{cases}
\]
We estimate the number of function evaluations by multiplying the number of generations by~$\lambda$. The number of generations is estimated using Johannsen's variable drift theorem~\citep{Johannsen2010} in the variant from~\citet{Rowe2013}, with the above function~$h$. This gives an upper bound of
\begin{align*}
 \frac{\lambda}{h(1)} + \int_1^n \frac{\lambda}{h(i)} \ \mathrm{d}i
=\;& \frac{1 + \lambda (1-1/n)^{n-1} \cdot 1/n}{(1-1/n)^{n-1} \cdot 1/n} + \lambda \int_1^n \frac{1}{h(i)} \ \mathrm{d}i\\
\le\;& \lambda + en + \lambda \int_1^{en/\ln \lambda} \frac{1}{h(i)} \ \mathrm{d}i + \lambda \int_{en/\ln \lambda}^n \frac{1}{h(i)} \ \mathrm{d}i.
\end{align*}
The first terms are at most
\begin{align*}
\;& \lambda + en + \lambda \int_1^{en/\ln\lambda} \frac{1 + \lambda (1-1/n)^{n-1} \cdot i/n}{\lambda (1-1/n)^{n-1} \cdot i/n} \ \mathrm{d}i\\
\le \;& \frac{\lambda en}{\ln \lambda} + en \left(1 + \int_1^{en/\ln\lambda} \frac{1}{i}\ \mathrm{d}i\right)
\le \frac{\lambda en}{\ln \lambda} + en \cdot (2+\ln n).
\end{align*}
The second integral is bounded as
\begin{align*}
\;& \int_{en/\ln\lambda}^n \frac{1 + \lambda (1-p)^{n-pn} \cdot (i/n)^{pn}}{pn \cdot (1-p)^{n-pn} \cdot (i/n)^{pn}} \ \mathrm{d}i\\
\le\;& \int_{0}^n \frac{\lambda \ln(en/i)}{\ln \lambda} \ \mathrm{d}i + \frac{1}{\ln \lambda} \int_{en/\ln\lambda}^n \frac{\ln(en/i)}{e^{-pn} \cdot (i/n)^{pn}} \ \mathrm{d}i\\
=\;& \frac{2\lambda n}{\ln \lambda} + \frac{1}{\ln \lambda} \int_{en/\ln\lambda}^n \ln(en/i) \cdot (en/i)^{pn} \ \mathrm{d}i\\
=\;& \frac{2\lambda n}{\ln \lambda} + \frac{1}{\ln \lambda} \int_{en/\ln\lambda}^n \ln(en/i) \cdot \lambda \ \mathrm{d}i
\le \frac{3\lambda n}{\ln \lambda}.
\end{align*}
Together, we get an upper bound of
$(3+e)\lambda n/\ln(\lambda) + en \cdot (2+\ln n)$.
\end{proof}


Note that the optimal mutation rate $p = \max\{\ln(\lambda)/(n\ln(en/i)), \ 1/n\}$, in particular the functional relationship between the mutation rate and the current fitness~$i$, is quite hard to guess through experimentation and was only revealed through the present theoretical analysis.
After the result from Theorem~\ref{the:upper-bound-lea-adaptive-mutation} was first published~\citep{Badkobeh2014}, \citet*{Doerr2018} presented a self-adjusting scheme for choosing the mutation rate in the \lEA and showed that it is able to match the upper bound from Theorem~\ref{the:upper-bound-lea-adaptive-mutation} without knowing the functional relationship between the mutation rate and the current fitness.


\section{Tail Bounds}
\label{sec:tail-bounds}

In this section we now show that the lower bound for all $\lambda$\nobreakdash-parallel unbiased unary black-box algorithms from Theorem~\ref{the:black-box-complexity-onemax} holds with high probability. In particular, it also applies to (non-parallel) unbiased unary black-box algorithms, for which only lower bounds on the expectation were known before~\citep{Lehre2012}. Our main result is as follows.
\begin{theorem}
\label{the:main-result-tail-bounds}
For every unary unbiased $\lambda$-parallel black-box algorithm $\mathcal{A}$ and every constant $0 < \delta < 1$, with probability $1-\exp(-\Omega(n^{\delta}/\log n))$ $\mathcal{A}$ does not find any target set of at most $\exp(o(n^{\delta}/\log n))$ search points within time
\[
\max\left\{\frac{\lambda n}{60\ln^+\lambda}, (1-\delta) n \ln n\right\}
= \mathord{\Omega}\mathord{\left(\frac{\lambda n}{\ln^+\lambda} + n \ln n\right)}.
\]
The expected time also satisfies the asymptotic bound.
\end{theorem}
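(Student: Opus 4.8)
The plan is to reduce the multi-target statement to a single-target one via unbiasedness, then establish the two lower bounds $\frac{\lambda n}{60\ln^+\lambda}$ and $(1-\delta)n\ln n$ separately, each with overwhelming probability, and finally read off the bound on the expectation. For the reduction, the distribution of the sequence of points queried by $\mathcal A$ is invariant under the group of bijections of $\{0,1\}^n$ generated by coordinate permutations and complementations (exactly the symmetry encoded in Algorithm~\ref{alg:parallel-black-box}), and this group acts transitively on $\{0,1\}^n$; hence the probability $q$ that $\mathcal A$ queries a given target within time $T$ does not depend on the target, so we may take it to be $1^n$, and for a target set $S$ a union bound gives probability at most $|S|\cdot q$ of hitting $S$ within $T$. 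It thus suffices to prove $q \le \exp(-\Omega(n^{\delta}/\log n))$ for $T = \max\{\frac{\lambda n}{60\ln^+\lambda},(1-\delta)n\ln n\}$, since then $|S|\cdot q$ stays $\exp(-\Omega(n^{\delta}/\log n))$ for $|S|\le\exp(o(n^{\delta}/\log n))$. As in Section~\ref{sec:unique-optimum} I would grant $\mathcal A$ the mirrored-sampling advantage and free revelation of all points at Hamming distance $\ge n/16$ from both $0^n$ and $1^n$; this only helps $\mathcal A$ and forces the potential $s^t$ of Definition~\ref{def:progress-measures} to stay $\le n/16$.

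For the $\frac{\lambda n}{60\ln^+\lambda}$ bound (needed only when $\lambda$ is large enough that this term dominates) I would track the potential round by round, upgrading the expected-progress argument behind Theorem~\ref{the:black-box-complexity-onemax} to a tail bound. The round-wise decrease $D_t$ is at most $\max\{\Delta_0^{(\lambda)},\Delta_1^{(\lambda)}\}$. Using Lemma~\ref{lem:applying-Chvatal} and a union bound over the at most $\lambda n$ offspring (costing $e^{-\Omega(n)}$, which is absorbed into the final failure probability) I discard the event that any offspring bred from a parent with more than $n/8$ zeros, resp.\ ones, makes progress; on the complement each contributing $\Delta_0(s,m_i,r_i)$ has the tail $\Pr(\Delta_0(s,m_i,r_i)=z)\le 2^{-z/2}$ from Lemma~\ref{lemma:improve-prob}, so $\E{e^{\theta D_t}\mid\mathcal{F}_{t-1}}\le g(\lambda)$ for a suitable constant $\theta>0$ and $g(\lambda)=O(\lambda)$, uniformly over the history. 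Iterating the tower property yields $\E{e^{\theta\sum_{t\le R}D_t}}\le g(\lambda)^R$, whence $\Pr(\sum_{t\le R}D_t\ge n/16)\le g(\lambda)^R e^{-\theta n/16}$ by Markov's inequality, which is $e^{-\Omega(n)}$ for $R\le n/(60\ln^+\lambda)$; the precise constant $1/60$ comes from optimising $\theta$ against the tail constants in Lemmas~\ref{lemma:improve-prob}--\ref{lem:progress-m}, exploiting also that progress from a parent with $m\gg s$ is additionally suppressed by the factor $(4m/n)^{(r+m-s)/2}$ in the hypergeometric estimate. Since $1^n$ cannot be queried before the potential hits $0$, this gives the claim with probability $1-e^{-\Omega(n)}$.

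For the $(1-\delta)n\ln n$ bound I would argue over the whole sequence of queries, ignoring rounds (this only strengthens the bound, cf.~\eqref{eq:hierarchy-of-lambda-BBC}): let $Z_j$ be the least number of zeros among the first $j$ queried points. When $Z_j=k\le n^{1-\delta}$, flipping $r$ uniform bits of a $k$-zero parent yields $k+r-2W$ zeros for a hypergeometric $W$, a decrease only if $W>r/2$; combining Lemma~\ref{lem:hypergeometric-tail} (small $r$) with Lemma~\ref{lem:applying-Chvatal} (large $r$) shows, uniformly over $r$, that the expected decrease is at most $(1+o(1))k/n$ (the $r=1$ case being extremal, as in the classical $\Omega(n\log n)$ lower bound) and that a single query decreases $Z$ by $\ge\log n$ with probability $n^{-\omega(1)}$. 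A union bound over the at most $e^{O(\sqrt n)}$ queries puts us, with probability $1-e^{-\Omega(n)}$, on the event that $Z$ never drops by $\ge\log n$ while $Z\le n^{1-\delta}$; on this event, from the first time $Z\le n^{1-\delta}$ the process meets the hypotheses of a multiplicative-drift lower bound with tail bounds (bounded $(1+o(1))/n$ relative drift, downward jumps $\le\log n$, starting value $\le n^{1-\delta}$), yielding that $Z$ stays positive for at least $(1-\delta)n\ln n$ further queries with probability $1-\exp(-\Omega(n^{\delta}/\log n))$ -- this is where $\delta$ trades off against the exponent, as anticipated in the remark before the statement. Finally, since $\mathcal A$ then needs $T$ evaluations except with probability $o(1)$, its expected number of evaluations is $(1-o(1))T=\Omega(T)$.

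The main obstacle I anticipate is pinning down the leading constants: getting $1/60$ requires an honest optimisation of the free parameter $\theta$ in the exponential-moment/Markov step against the tail constants of Lemmas~\ref{lemma:improve-prob}--\ref{lem:progress-m}, made uniform over the adversary's choice of parents and radii; and getting $(1-\delta)$ requires the sharp $(1+o(1))k/n$ drift estimate to hold uniformly over \emph{all} radii together with a careful verification of the bounded-downward-jump hypothesis of the multiplicative-drift tail theorem. The remaining bookkeeping -- that the adversary fixes parent and operator distributions using only past function values, so the per-step estimates are genuinely conditional, and the passage between rounds and individual queries -- is routine.
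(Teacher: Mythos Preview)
Your overall decomposition---reduce to a single target by unbiasedness and union bound, prove the two lower bounds separately with overwhelming probability, then combine---is exactly the paper's, and your treatment of the $\frac{\lambda n}{60\ln^+\lambda}$ term is essentially the paper's argument written inline: the paper packages your mgf step as Lemma~\ref{lemma:mgf-bound} ($\E{e^{\gamma\Delta^{(\lambda)}}}\le 8\lambda$ with $\gamma=\ln(\tfrac34\sqrt2)$ straight from Lemma~\ref{lemma:improve-prob}) and then invokes Theorem~\ref{theorem:general-drift-theorem}(iv) rather than iterating the tower property by hand; the constant $1/60$ drops out of $c=\tfrac{3}{10}\gamma$ and $\ln(8\lambda)\le 3\ln^+\lambda$, with no further optimisation needed.

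Where you genuinely diverge is the $(1-\delta)n\ln n$ bound. The paper does \emph{not} run a multiplicative-drift lower bound on the potential. Instead it caps the potential much more aggressively at $n^*:=n/(2^{13}\ln n)$, fixes $n^*$ designated ``incorrect'' bit positions, and splits the adversary's queries into \emph{single-bit} variations ($r\in\{1,n-1\}$) and \emph{multi-bit} variations ($2\le r\le n-2$). For multi-bit steps it shows (Lemmas~\ref{lem:bounding-probability-of-progress}--\ref{lem:tail-bound-for-multi-bit-variations}) that the per-step progress is stochastically dominated by $2X_tY_t$ with $X_t$ Bernoulli of parameter $(16n^*/n)^2$ and $Y_t$ geometric$(1/2)$; two Chernoff bounds then cap the total multi-bit progress over $n\ln n$ steps at $n^*/2$ with probability $1-e^{-\Omega(n/\log n)}$. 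The remaining $\ge n^*/2$ incorrect bits must each be hit by a single-bit flip, and the coupon-collector calculation $(1-n^{-(1-\delta)})^{n^*/2}=\exp(-\Omega(n^\delta/\log n))$ delivers both the sharp leading constant $(1-\delta)$ and the tail exponent in one line.

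Your drift route is conceptually tidier (no case split on $r$), but the step you flag as the main obstacle is real: off-the-shelf multiplicative-drift \emph{tail} theorems trade the leading constant against the jump bound and the starting level, and with downward jumps $\Theta(\log n)$ from a starting level $n^{1-\delta}$ it is not clear which statement simultaneously yields the constant $(1-\delta)$ \emph{and} the exponent $\Omega(n^\delta/\log n)$; on a log scale the jump size $\ln(k/(k-\log n))$ is non-uniform in~$k$, which is exactly the regime where those theorems lose constants. The paper's single-bit/multi-bit split is engineered precisely to sidestep this calibration: once multi-bit steps are neutralised, the single-bit case \emph{is} the classical coupon collector, and the constants are exact.
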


Theorem~\ref{the:main-result-tail-bounds} establishes very general limits to the performance of large classes of algorithms, including mutation-only evolutionary algorithms with standard mutation operator, local search, simulated annealing.
In particular, putting $\delta := 0.01$ (say), Theorem~\ref{the:main-result-tail-bounds} shows that every unary unbiased search algorithm needs to be run for at least $n \ln n$ evaluations as the probability of finding one of few global optima within $0.99n \ln n$ evaluations is overwhelmingly small. The same holds for $\lambda$-parallel unary unbiased algorithms like mutation-only evolutionary algorithms with offspring populations of size~$\lambda$. Here stopping a run before $\lambda n/(60 \ln^+ \lambda)$ evaluations is futile as with overwhelming probability no optimum will have been found yet.

In addition, Theorem~\ref{the:main-result-tail-bounds} makes a statement about a target set of up to exponential size. This means that the lower bounds also apply to functions with many global optima, with respect to the optimisation time, but it can also be used to bound the time to find local optima or any set of high-fitness individuals of size at most $\exp(o(n^{\delta}/\log n))$. Illustrative applications to a broad range of well-known problems will be given in Section~\ref{sec:applications-of-tail-bounds}.

Theorem~\ref{the:main-result-tail-bounds} will be shown by separately showing lower bounds of $\Omega(\lambda n/\log \lambda)$ and $\Omega(n \log n)$ that both hold with overwhelming probability. Throughout this section we again assume ``mirrored'' sampling, i.\,e.\ every queried search point~$x$ also evaluates $\overline{x}$ for free.

\subsection{Lower Bound \boldmath$\Omega(\lambda n/\log \lambda)$ with overwhelming probability}

We start with a bound of $\Omega(\lambda n/\log \lambda)$. Recall from
Definition~\ref{def:progress-measures} that due to mirrored sampling,
we can define the potential as the minimum number zeros, or equivalently
number of ones, in all search points up to time $t$. In order to use
Theorem~\ref{theorem:general-drift-theorem} for a tail bound on the runtime, we
need to study the mgf. of the progress
\begin{align*}
  \Delta^{(\lambda)}(s) := 
                      \max \left\{\Delta^{(\lambda)}_0(s,m,r),\Delta^{(\lambda)}_1(s,m,r)\right\},
\end{align*}
where $\Delta^{(\lambda)}_0(s,m,r)$ is the maximal progress in the
0-potential, and $\Delta^{(\lambda)}_1(s,m,r)$ is the maximal progress
in the 1-potential, given current potential $s$, where the selected
search point has $m$ 0-bits, respectively 1-bits, and $r$ bits are
flipped.



\begin{lemma}\label{lemma:mgf-bound}
Let $s$ denote the current potential.
  If $s\leq \frac{n}{8}$ and
  $\gamma:=\ln\left(\frac{3}{4}\sqrt{2}\right)$, then
  $\expect{e^{\gamma\Delta^{(\lambda)}(s)}} \leq 8\lambda$.
\end{lemma}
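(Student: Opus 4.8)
The plan is to bound $\expect{e^{\gamma\Delta^{(\lambda)}(s)}}$ by reducing it to a uniform estimate for the progress of a \emph{single} offspring, and then to prove that estimate separately for ``small'' and ``large'' parents, using the hypergeometric tail bounds established above. For the reduction, write $m_i$ for the number of $0$-bits of the parent of offspring $i$ and $r_i$ for its radius, so that $\Delta^{(\lambda)}(s)=\max_{i\in[\lambda]}\max\{\Delta_0(s,m_i,r_i),\Delta_1(s,m_i,r_i)\}$ with $s\le m_i\le n-s$. Since a maximum of non-negative quantities is at most their sum,
\[
e^{\gamma\Delta^{(\lambda)}(s)}\le\sum_{i=1}^{\lambda}\left(e^{\gamma\Delta_0(s,m_i,r_i)}+e^{\gamma\Delta_1(s,m_i,r_i)}\right)
\]
holds deterministically, so that no independence of the offspring is needed. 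Taking expectations, and using $\Delta_1(s,m,r)\stackrel{d}{=}\Delta_0(s,n-m,r)$ together with~\eqref{eq:Delta-symmetry} to push the first argument below $n/2$, each of the $2\lambda$ expectations equals $\expect{e^{\gamma\Delta_0(s,m',r')}}$ for some $s\le m'\le n/2$. Hence it suffices to establish the uniform bound $\expect{e^{\gamma\Delta_0(s,m,r)}}\le 4$ for all $s,m,r$ with $s\le m\le n/2$ and $s\le n/8$; the lemma then follows with constant $2\lambda\cdot 4=8\lambda$.

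For small parents, $m\le n/8$, Lemma~\ref{lemma:improve-prob} gives $\Prob{\Delta_0(s,m,r)=z}\le 2^{-z/2}$ for $z\ge 1$, so
\[
\expect{e^{\gamma\Delta_0(s,m,r)}}\le 1+\sum_{z\ge 1}\left(\frac{e^{\gamma}}{\sqrt 2}\right)^{z}=1+\sum_{z\ge 1}\left(\frac34\right)^{z}=4 ,
\]
the value $\gamma=\ln(\tfrac34\sqrt 2)$ being chosen precisely so that $e^{\gamma}/\sqrt 2=\tfrac34$ (equivalently $e^{2\gamma}=\tfrac98$).

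The remaining regime $n/8<m\le n/2$ is the main obstacle, since the geometric-tail argument above fails once $4m/n>\tfrac12$. Here I would write $\Delta_0(s,m,r)=\max\{2Z-(r+m-s),0\}$ with $Z$ hypergeometric, whence $e^{\gamma\Delta_0}\le 1+e^{-\gamma(r+m-s)}e^{2\gamma Z}$ and $\expect{e^{\gamma\Delta_0(s,m,r)}}\le 1+e^{-\gamma(r+m-s)}\expect{e^{2\gamma Z}}$. Using the standard moment-generating bound for sampling without replacement, $\expect{e^{2\gamma Z}}\le(1-\tfrac mn+\tfrac mn e^{2\gamma})^{r}=(1+\tfrac{m}{8n})^{r}\le e^{mr/(8n)}$ (it is exactly $e^{2\gamma}=\tfrac98$ that produces the clean $\tfrac18$), one gets
\[
\expect{e^{\gamma\Delta_0(s,m,r)}}\le 1+\exp\left(\gamma s-\gamma m+r\left(\frac{m}{8n}-\gamma\right)\right).
\]
It then remains to check that the exponent is $\le 0$ for all $s\le m\le n/2$ and $0\le r\le n$: if $m\le 8\gamma n$ the coefficient of $r$ is non-positive and $\gamma s-\gamma m\le 0$; if $m>8\gamma n$ the exponent is increasing in $r$, in $m$ and in $s$, hence at most $\tfrac{n}{16}-\tfrac{11}{8}\gamma n<0$ because $\gamma>\tfrac{1}{22}$. (If one prefers to use only the quoted lemmas, the same conclusion follows from a finer case split: for $n/8<m\le n/4$, Lemma~\ref{lem:hypergeometric-tail} with $\binom{r}{z}\le 2^{r}$ yields $\Prob{\Delta_0=z}\le 2^{-z}$; for $n/4<m\le n/2$ one uses that estimate when $r\le n/8$, and the $z$-dependent tail $\Prob{\Delta_0\ge z}\le\exp(-(z+m-s)^2/(2r))$ obtained by re-running the proof of Lemma~\ref{lem:applying-Chvatal} when $r>n/8$, the latter contributing only $e^{-\Omega(n)}$ since there $m-s>n/8$.)

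The genuinely tight spot, and the step I would treat most carefully, is the band of parents whose $0$-bit count $m$ lies just above $n/8$ while $s$ is simultaneously near $n/8$: there the Chv\'{a}tal drift gap $m-s$ can be tiny, so positive progress is not exponentially unlikely and one must instead exploit the per-value decay $(m/n)^{z}$ coming from the hypergeometric probabilities, whereas for $m$ near $n/2$ that decay is too weak and concentration must take over. Making these two estimates meet in the middle is exactly what forces the choice $e^{2\gamma}=9/8$ and the hypothesis $s\le n/8$, so the constant-chasing in the large-parent case is delicate rather than routine.
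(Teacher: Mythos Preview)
Your proof is correct and, for the central case $m\le n/8$, follows exactly the paper's route: bound the maximum by a sum over the $2\lambda$ one-step progresses, invoke Lemma~\ref{lemma:improve-prob} to get $\Pr(\Delta_0=z)\le 2^{-z/2}$, and sum the resulting geometric series with ratio $e^\gamma/\sqrt{2}=3/4$ to obtain $8\lambda$.

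The difference is one of thoroughness rather than strategy. The paper's proof simply writes $\max_{r\in[n],\,s\le m}\Pr(\Delta(s,m,r)=z)\le 2^{1-z/2}$ by citing Lemma~\ref{lemma:improve-prob}, without separating out the regime $n/8<m\le n/2$ where that lemma's hypothesis fails. You correctly flag this gap and close it with the Hoeffding-type mgf bound $\E{e^{2\gamma Z}}\le(1+m/(8n))^r$ (using $e^{2\gamma}=9/8$), then verify the resulting exponent is non-positive throughout $s\le n/8$, $m\le n/2$, $r\le n$. This argument is new relative to the paper and is genuinely needed at the boundary $s\approx m\approx n/8$, where neither Lemma~\ref{lemma:improve-prob} nor the Chv\'atal bound of Lemma~\ref{lem:applying-Chvatal} applies; the analogous step in Lemma~\ref{lem:progress-m} works only because that lemma assumes the stronger $s\le n/16$. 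So your proof follows the paper's line but patches a case the printed proof elides.
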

\begin{proof}
  As noted in Definition~\ref{def:progress-measures} and
  (\ref{eq:Delta-symmetry})
  \begin{align*}
   \Delta_1(s,m,r) = \Delta_0(s,n-m,r) = \Delta_0(s,m,n-r).
  \end{align*}
  Hence, by a union bound
  \begin{multline*}
    \prob{\Delta(s,m,r)=z}
                             \leq \prob{\Delta_0(s,m,r)=z} +
                             \prob{\Delta_1(s,m,r)=z}\\
                            = \prob{\Delta_0(s,m,r)=z} +
                            \prob{\Delta_0(s,m,n-r)=z}
                            \leq 2^{1-z/2}
\end{multline*}
where the last inequality follows by Lemma \ref{lemma:improve-prob}.
We now have
  \begin{align*}
    \expect{e^{\gamma\Delta^{(\lambda)}(s, m, r)}}
    & = \sum_{z=0}^\infty \prob{\Delta^{(\lambda)}=z}e^{\gamma z},\\
\intertext{by a union bound over $\lambda$ parallel runs}
    & \leq \sum_{z=0}^\infty \lambda\max_{r\in[n],s\leq m}\prob{\Delta(s,m,r)=z}e^{\gamma z}\\
\intertext{the definition of $\gamma$ give}
    & \leq \lambda \sum_{z=0}^\infty 2\left(\frac{1}{2}\right)^{z/2}\left(\frac{3}{4}\sqrt{2}\right)^z\\
    & =    \lambda \sum_{z=0}^\infty 2\left(\frac{3}{4}\right)^z = 8\lambda.\qedhere
  \end{align*}
\end{proof}


\begin{theorem}
\label{the:tail-bound-lambda-term}
  If  $\lambda\geq 1$, then  $\Prob{T<\frac{\lambda n}{60\ln^+\lambda}}=e^{-\Omega(n)}$.
\end{theorem}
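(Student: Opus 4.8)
The plan is to run a drift-with-tail-bounds argument on the potential $s^t$ (the minimum number of zeros, equivalently of ones, seen so far, under mirrored sampling). By construction the potential starts at some value $s^0$ and is non-decreasing in the sense that reaching the optimum $1^n$ (or its mirror $0^n$) requires the potential to climb from $s^0$ down to $0$, i.e.\ to decrease the number of zeros/ones by a total of $s^0$. First I would argue that with overwhelming probability the initial potential satisfies $s^0 \le n/16$ (indeed each of the $\lambda$ initial search points, together with its mirror, has $\ge n/16$ zeros except with probability $2^{-\Omega(n)}$ by a Chernoff bound, and we may also grant the algorithm for free all points of Hamming distance $\ge n/16$ from both $0^n$ and $1^n$, exactly as in the proof of Theorem~\ref{the:black-box-complexity-onemax}, so that the potential never exceeds $n/16$ and a fortiori stays $\le n/8$). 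This lets us invoke Lemma~\ref{lemma:mgf-bound} at every step: the per-round progress $\Delta^{(\lambda)}(s)$ has moment generating function $\expect{e^{\gamma\Delta^{(\lambda)}(s)}}\le 8\lambda$ with $\gamma=\ln\bigl(\tfrac34\sqrt2\bigr)$.

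The core step is to feed this mgf bound into the negative-drift / tail-bound drift theorem referred to in the excerpt as Theorem~\ref{theorem:general-drift-theorem}. The point is that $\gamma$ is a fixed positive constant, so $\expect{e^{\gamma\Delta^{(\lambda)}(s)}}\le 8\lambda$ forces the "typical" step size to be $O(\log\lambda)$ while simultaneously controlling all the higher moments, which is precisely what a tail-bounded additive-drift statement needs. The number of rounds $R$ until the potential is exhausted then concentrates: with overwhelming probability $R = \Omega\bigl(s^0/\log^+\lambda\bigr) = \Omega\bigl(n/\log^+\lambda\bigr)$, since $s^0 = \Omega(n)$ whp. Here one must be a little careful about the case $\lambda = O(1)$, where $\log^+\lambda$ is bounded and the mgf bound $8\lambda$ is a constant; then the per-round progress is stochastically bounded by a constant-mean random variable with exponential tails and the bound $R=\Omega(n)$ is immediate, matching $\Omega(n/\ln^+\lambda)=\Omega(n)$. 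Multiplying the number of rounds by $\lambda$ (each round costs $\lambda$ evaluations) converts the round bound into an evaluation bound $T \ge \lambda R = \Omega(\lambda n/\log^+\lambda)$, and tracking constants through the drift theorem is what produces the explicit denominator $60$; I would not grind through that arithmetic here beyond noting that $\gamma = \ln(\tfrac34\sqrt2)$ and the constant $8$ in Lemma~\ref{lemma:mgf-bound} are the inputs that determine it.

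The main obstacle is matching the hypotheses of Theorem~\ref{theorem:general-drift-theorem} exactly: that drift theorem will be stated for a single real-valued stochastic process with a uniform mgf bound on the one-step increments, whereas here the "increment" $\Delta^{(\lambda)}(s)$ is the max of $\lambda$ possibly dependent improvements, its distribution depends on adversarial choices of $m$ and $r$ at each step (the algorithm picks which search point to vary and with what radius), and Lemma~\ref{lemma:mgf-bound} only applies while $s\le n/8$. So the bookkeeping is: (i) condition on the whp event that the potential never leaves $[0,n/8]$ and that $s^0=\Omega(n)$; (ii) observe that conditionally on the history up to round $t$, whatever $(m,r)$ the algorithm selects, Lemma~\ref{lemma:mgf-bound} gives $\expect{e^{\gamma\Delta^{(\lambda)}(s^t)}\mid \filt}\le 8\lambda$, so the mgf bound is uniform over the algorithm's strategy and we get a genuine supermartingale-type inequality; (iii) apply the drift theorem to the stopped process and take a union bound over the $O(1)$ many whp conditioning events. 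A secondary subtlety, shared with Theorem~\ref{the:black-box-complexity-onemax}, is that "reaching the optimum" could in principle also mean reaching the mirror point $0^n$; but the potential is defined symmetrically in zeros and ones precisely so that $\Delta^{(\lambda)}(s)$ already accounts for progress toward either, so no extra factor of $2$ is lost beyond what is absorbed into the constant $60$.
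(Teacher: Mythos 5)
Your proposal follows essentially the same route as the paper: the paper applies part (iv) of Theorem~\ref{theorem:general-drift-theorem} to the potential process with $g(x)=x$, $a=0$ and $\beta_{\ell}(t)=8\lambda$ supplied by Lemma~\ref{lemma:mgf-bound}, sets $t=cn/\ln^+\lambda$ with $c=\tfrac{3}{10}\gamma$, and multiplies the resulting round count by $\lambda$, exactly as you outline. The only piece you leave implicit is the short calculation $(8\lambda)^{t}e^{-\gamma(g(X_0)-g(a))}\le e^{(3c-\gamma)n}$ that fixes the constant $60$; your bookkeeping on the initial potential and on the adaptive, history-dependent choice of $(m,r)$ is, if anything, more explicit than the paper's.
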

\begin{proof}
  Following the proof of
  Theorem~\ref{the:black-box-complexity-onemax}, we assume without
  loss of generality that the search point $1^n$ is the optimum, and
  let $(X_t)_{t\in\mathbb{N}}$ be the potential as defined before.

  We apply the last part of Theorem~\ref{theorem:general-drift-theorem} (iv),
  with the parameters $g(x):= x$, $\xmin:=1$, $\xmax:=n$, $a:=0,$
  $S:=\{0\}\cup[\xmin,\xmax]$, and $\beta_l(t):=8\lambda$, for all
  $t\in\mathbb{N}$. We consider the number of \emph{parallel runs}
  $T'$ until the process reaches potential $a=0$.

  Define $c:=\frac{3}{10}\gamma$ where
  $\gamma:=\ln\left(\frac{3}{4}\sqrt{2}\right)$.
  By Lemma~\ref{lemma:mgf-bound}
  \begin{align*}
    \expect{e^{\gamma (g(X_t)-g(X_{t+1}))}\filtcond{ X_t > a}}
    \leq\;& \expect{e^{\gamma\Delta^{(\lambda)}(s)}}
    \leq 8\lambda = \beta_\ell(t)
  \end{align*}
  Furthermore, by the definition of the process, the set $S\cap\{x\mid
  x\leq a\}=\{0\}$ is absorbing, thus for $t:=\frac{cn}{\ln^+\lambda}$,
  \begin{align*}
    \prob{T'<t\mid X_0>0}
    &\leq  \left(\prod_{i=0}^{t-1} \beta_{\mathrm{\ell}}(i)\right)\cdot e^{-\gamma (g(X_0)-g(a))}\\
    &< (8\lambda)^t\cdot e^{-\gamma n}\\
    & =    (8\lambda)^{\frac{c n}{\ln^+\lambda}}\cdot e^{-\gamma n}\\
    & =    e^{\left(\frac{c n}{\ln^+\lambda}\right)\ln(8\lambda)-\gamma n}
    \intertext{using that $\ln(8\lambda)=\ln(\lambda)+3\ln(2)\leq 3\ln^+\lambda$ gives}
    & \leq    e^{(3c-\gamma)n}\\
    & =    e^{-\gamma n/10}.
  \end{align*}
  The result follows by taking into account that the algorithm makes
  $\lambda$ fitness evaluations per iteration, \ie, $T=\lambda T',$
  and that $c>1/60.$
\end{proof}

\subsection{Lower Bound \boldmath$\Omega(n \log n)$ with overwhelming probability}

Now we show a lower bound of $\Omega(n \log n)$ with overwhelming probability. Note that this result is independent of $\lambda$ and thus unrelated to parallel black-box complexity; it gives limitations for general (parallel or non-parallel) unary unbiased black-box algorithms. Recall that every $\lambda$-parallel unary unbiased algorithm is also a unary unbiased algorithm, hence the result applies to a strictly larger class of algorithms. Previously only lower bounds on the expectation were known: \citet{Lehre2012} showed an asymptotic bound of $\Omega(n \log n)$ and~\citet{Doerr2016a} presented a more precise lower bound of $n \ln n - O(n)$.
\begin{theorem}
\label{the:n-log-n-bound}
For every unary unbiased black-box algorithm $\mathcal{A}$ and every constant $0 < \delta \le 1$, the probability that $\mathcal{A}$ finds any fixed target search point $x^*$ within $(1-\delta)n \ln n$ steps is $\exp(-\Omega(n^{\delta}/\log n))$.
\end{theorem}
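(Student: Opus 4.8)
The plan is to track, for a single fixed target $x^*$, how many bit positions of the current search points still disagree with $x^*$, and to argue that this quantity cannot drop to zero too quickly. Since we are dealing with unbiased operators and a fixed target, it is cleanest to pass to the symmetric setting: by unbiasedness the algorithm's behaviour is invariant under relabelling of bit values, so we may assume $x^* = 1^n$ and measure progress by the potential $s_t$ equal to the minimum number of zeros over all search points queried up to (parallel) round~$t$. (Unlike the earlier sections we do \emph{not} need mirrored sampling here, since we are fixing a single target rather than a target set; one just tracks distance to~$x^*$.) The target $x^*$ is found only once $s_t = 0$, so it suffices to bound below the number of evaluations needed for the potential to reach~$0$.

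The core estimate is a ``last-step'' bound: the only way to reach potential $0$ in a given round is for some one of the (at most $n$, or in the non-parallel case exactly one) offspring produced in that round to have \emph{all} its disagreeing bits corrected simultaneously. If the current potential is $s$, an unbiased operator with radius $r$ turns a search point with $m \ge s$ zeros into $1^n$ only if $r \ge m \ge s$ and, among the $\binom{n}{r}$ equally likely choices of flipped positions, exactly the $m$ zero-positions (and no others) are flipped — except that actually for the potential to reach $0$ from a point with $m$ zeros we need $r = m$ and the flipped set to be exactly the zero-set, which has probability $1/\binom{n}{m} \le 1/\binom{n}{s} \le (s/n)^{s} \cdot e^{s}$ type bounds; in any case it is at most $(s/n)^{s}$ up to lower-order factors, and crucially at most $s/n$ when $s \ge 1$. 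So in a single non-parallel step with current potential $s \ge 1$ the probability of jumping to $0$ is at most $s/n$. More generally I would prove, by the same hypergeometric argument already used in Lemma~\ref{lem:hypergeometric-tail}, that from potential $s$ the probability of \emph{any} step reaching potential $0$ is at most $s/n$ (the worst case being $s=1$, one remaining wrong bit, flipped with probability $1/n$ by a radius-1 operator).

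From here the argument is a union bound over time, in the spirit of the classical coupon-collector lower bound for \OM. Fix the budget $t := (1-\delta)n \ln n$. The potential starts at some value $s_0$; with overwhelming probability $s_0 = \Omega(n)$ (Chernoff on the random initial string, or on the best of polynomially many initial strings — here we need $t$, equivalently the number of rounds times $\lambda$, to be small enough that only $e^{o(n^\delta/\log n)}$ points are ever queried, which is exactly the regime forced by the theorem's hypothesis and lets a crude union bound absorb the initialisation). Conditioning on $s_0 \ge \epsilon n$, the potential is nonincreasing, so as long as it has not hit $0$ it is some value $s \ge 1$; hence in each of the $t$ steps (or, in the $\lambda$-parallel reading, each of the $t$ evaluations, treating the $\lambda$ offspring of a round as $\lambda$ separate trials via another union bound over $i \in [\lambda]$) the chance of completing is at most $\max\{s/n : s\} $ along the trajectory. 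The delicate point — and the main obstacle — is that $s$ is itself random and shrinks over time, so a naive bound $\sum \Pr[\text{step }j\text{ finishes}] \le t \cdot (1/n) \cdot (\text{something}) $ is not quite enough; one must exploit that reaching potential $0$ requires passing through potential $1$, and that the expected time spent at potential $1$ (waiting for the single right flip) is $\Theta(n)$. Concretely I would split the horizon and argue: the probability that the potential ever reaches a small value $s \le n^{\delta}$ within $t$ rounds is itself $\exp(-\Omega(n^\delta/\log n))$ by an intermediate drift/union-bound estimate (using that each downward step from potential $s$ to below $s$ has probability $O(s/n)$ by Lemma~\ref{lem:applying-Chvatal}-type reasoning, and there are only $t = O(n\log n)$ rounds, so the expected number of rounds in which the potential drops below $s=n^\delta$, starting from $\Omega(n)$, is $\le t \cdot n^{\delta}/n = o(n^\delta/\log n)$ in expectation, giving the tail by a Chernoff-type bound on a sum of indicator variables — this is where the factor $1/\log n$ and the exponent $\delta$ enter, from $\ln(n)\cdot n^{\delta}$ vs.\ $n$). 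Once the potential is guaranteed (with the stated overwhelming probability) to stay above $n^\delta$ throughout the whole budget, it in particular never reaches $0$, so $x^*$ is not found, which is the claim. The bookkeeping to make the constant in front of $n \ln n$ come out as exactly $1-\delta$ (rather than some smaller constant) is the one genuinely careful calculation: it requires counting how many rounds are needed to cross each ``level'' $s \to s-1$, using $\Theta(n/s)$ expected rounds per level and summing $\sum_{s=n^\delta}^{\epsilon n} n/s \approx n\ln(\epsilon n / n^\delta) = (1-\delta)n\ln n - O(n)$, and then converting this expectation statement into a tail bound by a multiplicative Chernoff bound on the sum of the per-level geometric waiting times (whose failure probability is $\exp(-\Omega(n^\delta))$, comfortably within the claimed bound).
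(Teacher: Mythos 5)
Your high-level plan (reduce to a coupon-collector-style argument for a potential that counts remaining wrong bits) points in the right direction, but the concrete route you sketch has a gap that I do not think can be repaired, plus a secondary error.

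The central problem is the concentration step. You propose to lower-bound the time by summing per-level waiting times, $\sum_{s} n/s \approx (1-\delta)n\ln n$, and then to ``convert this expectation statement into a tail bound by a multiplicative Chernoff bound on the sum of the per-level geometric waiting times'' with failure probability $\exp(-\Omega(n^{\delta}))$. No such bound exists at this strength: the lower tail of a sum of independent geometrics with parameters $p_s$ is governed by $p_{\min}\cdot\mu$, and here $p_{\min}=\Theta(n^{\delta-1})$ (or $1/n$) while $\mu=\Theta(n\log n)$, so generic sub-geometric lower-tail bounds give only an inverse-polynomial, not exponential, failure probability. The sharp $\exp(-\Theta(n^{\delta}))$ lower tail of the coupon collector does not come from a Chernoff bound on the sum of waiting times; it comes from the complementary event ``some \emph{specific} coupon is never collected,'' i.e.\ $(1-n^{-(1-\delta)})^{\Theta(n)}$. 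That computation requires tracking individual bit positions, which your potential-level argument has discarded. Two further issues compound this: (i) the potential can drop by more than one per step (large radii, free riders), so the trajectory is not a sequence of level crossings at all — this is exactly why the paper treats multi-bit variations ($2\le r\le n-2$) separately, showing in Lemma~\ref{lem:tail-bound-for-multi-bit-variations} that they fix at most half of $n^{*}=n/(2^{13}\ln n)$ designated incorrect bits in $n\ln n$ steps, and reserves the per-bit survival argument for single-bit variations where the per-step per-bit probability is exactly $1/n$; (ii) your per-level success probability is only $O(s/n)$ with an unspecified constant, which destroys the leading constant $1-\delta$ — the tight constant is only recovered because single-bit variations miss a fixed bit with probability exactly $1-1/n$ per step.

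Separately, your claim that mirrored sampling is unnecessary because one can ``just track distance to $x^{*}$'' is wrong. An unbiased algorithm can drive its search points toward $\overline{x^{*}}=0^{n}$ while your potential (minimum number of zeros over queried points) stays near $n/2$, and then reach $1^{n}$ in a single radius-$n$ variation. The potential must be symmetric in $0^{n}$ and $1^{n}$ (equivalently, one assumes mirrored sampling, as the paper does via Definition~\ref{def:progress-measures}); otherwise the lower bound fails for such an adversary. In summary: the correct proof fixes $\Theta(n/\log n)$ incorrect bit positions, controls multi-bit variations by the tail bounds of Lemmas~\ref{lem:bounding-probability-of-progress}--\ref{lem:tail-bound-for-multi-bit-variations}, and concludes with the per-bit survival estimate $(1-n^{-(1-\delta)})^{n^{*}/2}\le\exp(-\Omega(n^{\delta}/\log n))$; your proposal is missing precisely the per-bit structure that makes both the constant $1-\delta$ and the exponential tail attainable.
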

Before presenting the proof of Theorem~\ref{the:n-log-n-bound}, we present the main idea behind the proof, and the challenges to overcome.

The proof will be based on the following well-known ``coupon collector'' argument that we discuss first for a simple algorithm such as Randomised Local Search (RLS) or the (1+1)~EA. For these algorithms, we can argue that with high probability there will be $cn$ bits in the initial search point that differ from the optimum, for an appropriate constant $0 < c < 1/2$.
Each such bit has a probability of $1/n$ of being flipped in each step of the algorithm. For a time period of $T:= (1-\delta) (n-1) \ln n$ steps, the probability that any fixed bit is never being flipped is at least
\[
\left(1-\frac{1}{n}\right)^T \ge \left(1-\frac{1}{n}\right)^{(1-\delta)(n-1) \ln n} \ge
n^{-(1-\delta)}
\]
using $(1-1/n)^{n-1} \ge 1/e$. Now the probability that there is a bit among the $cn$ incorrect bits that is never being flipped is at least
\[
\left(1 - n^{-(1-\delta)}\right)^{cn} \le \exp(-cn^{\delta}).
\]
This implies that with the above probability the optimum has not been found in $T = \Omega(n \log n)$ steps.

This argument works for RLS and the (1+1)~EA for the following reasons:
\begin{enumerate}
\item The algorithms evolve a single lineage from the initial search point, which allows us to argue with ``incorrect'' bits that need to be flipped at least once.
\item The same variation operator is applied at all times, which establishes the formula $(1-1/n)^T$.
\item All bits are treated independently, which is implicitly used in the derivation of the term $(1-n^{-(1-\delta)})^{cn}$.
\end{enumerate}
In order to prove Theorem~\ref{the:n-log-n-bound}, we have to consider \emph{all} unary unbiased black-box algorithms, for which the above properties do not hold. In particular, algorithms may easily generate several lineages. This makes it unclear how ``incorrect'' bits can be defined.
Also note that an algorithm might flip many ``incorrect'' bits in one step simply by choosing a very large radius. So the simple argument that we need to flip all incorrect bits at least once breaks down.
Algorithms may choose different variation operators at different times, possibly depending on fitness values generated so far. This makes it difficult to argue that no variation flips a bit over a period of time. Finally, mutations with a fixed radius $r \ge 2$ may introduce dependencies between bits, which needs to be addressed.


We tackle these challenges as follows.
Assume w.\,l.\,o.\,g.\ that $x^* = 1^n$.
We give away knowledge of all search points $x$ that have Hamming distance at least $n^* := n/(2^{13}\ln n)$ to both $0^n$ and $1^n$. Hence we start with a potential of $s = n^*$. Moreover, whenever the algorithm decreases the potential from $s$ to $s' < s$, we grant the algorithm knowledge of all solutions with Hamming distance at least $s'$ from both $0^n$ and $1^n$. This assumption implies that the current knowledge of the algorithm can be fully described by the current potential, and the progress of the algorithm can be bounded by considering the transitions of the potential.

Note that all solutions with the same potential are isomorphic to the algorithm. Pick a set of $n^*$ bit positions, w.\,l.\,o.\,g.\ the first $n^*$ ones. We define these bits as ``incorrect'' bits that need to be set to 1 in order to reach the optimum. Since the behaviour of the algorithm is fully determined by the current potential, and the bit positions are irrelevant for transitions between potential values, we may assume w.\,l.\,o.\,g.\ that the algorithm, whenever performing a variation of a search point $x_t$ with $\ones{x_t}$ ones, it always picks $x_t$ from the set of all search points with $\ones{x_t}$ ones such that $\min\{n-\ones{x_t}, n^*\}$ ``incorrect'' bit positions have value~0. Such a search point always exists as otherwise the potential would be less than $\ones{x_t}$ at time~$t$, which is a contradiction.

Now variations that decrease the potential by decreasing the number of zeros will fix some of the incorrect bits accordingly. Variations that do not decrease the potential only create search points that are already known and thus can be ignored as they have no effect. Hence we require that these incorrect bits are flipped \emph{in variations that decrease the potential}.

Having laid the foundation for arguing with ``incorrect'' bits being fixed, we now show that with overwhelming probability, $\mathcal{A}$ does not find $1^n$ within $T := (1-\delta)(n-1) \ln n$ steps.

Note that $\mathcal{A}$ can choose the radius in each step. We distinguish between single-bit variations where $r=1$ (or, symmetrically, $r=n-1$) and multi-bit variations where $2 \le r \le n-2$. We first show that in at most~$T$ steps with multi-bit variations, not too many incorrect bits are being fixed. Then we show later that at most $T$ single-bit variations are not enough to fix all incorrect bits that are not being fixed by multi-bit variations. Note that the algorithm can interleave single-bit variations and multi-bit variations arbitrarily. Our arguments work for arbitrary sequences of single-bit and multi-bit variations; they even hold if the algorithm is allowed to make $T$ single-bit variations \emph{and} $T$ multi-bit variations at the cost of $T$ queries.

The following lemma considers multi-bit variations and bounds transition probabilities of the potential.
\begin{lemma}
\label{lem:bounding-probability-of-progress}
Let $s \le n^*$ for $n^* := n/(2^{13}\ln n)$, then for every $m \in [s, 2n^*] \cup [n-2n^*, n-s]$, every radius $2 \le r \le n-2$ and every $1 \le z \le n$ we have
\[
\Prob{\Delta_0(s, m, r) = z} \le \left(\frac{16n^*}{n}\right)^2 \cdot 2^{-z}.
\]
If $2n^* < m < n-2n^*$ we have
\[
\Prob{\Delta_0(s, m, r) = z} \le e^{-\Omega({n^*}^2/n)}.
\]
\end{lemma}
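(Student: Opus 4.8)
\textbf{Proof plan for Lemma~\ref{lem:bounding-probability-of-progress}.}
The plan is to reduce both cases to the hypergeometric tail estimate in Lemma~\ref{lem:hypergeometric-tail}, using the observation (already established just before Lemma~\ref{lemma:improve-prob}) that $\Delta_0(s,m,r) = z$ forces the hypergeometric variable $Z$ (number of 0-bits flipped to 1, with parameters $n$, $m$, $r$) to take the value $(z + r + m - s)/2$. First I would record the event translation $\Prob{\Delta_0(s,m,r)=z} = \Prob{Z = (z+r+m-s)/2}$, noting that the right-hand side is $0$ unless $(z+r+m-s)/2$ is an integer in $[0,\min\{m,r\}]$, so we may assume this throughout. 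Write $w := (z+r+m-s)/2$ for brevity; since $z \ge 1$, $m \ge s$ and $r \ge 2$ we have $w \ge r/2$, which is exactly the regime in which the second bound of Lemma~\ref{lem:hypergeometric-tail} applies, giving $\Prob{Z = w} \le (4m/n)^w$. We also keep the sharper first bound $\Prob{Z=w} \le \binom{r}{w}(m/n)^w$ in reserve for when $m$ is so large that $4m/n \ge 1$.

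For the first case, $m \in [s,2n^*] \cup [n-2n^*, n-s]$: by the symmetry~\eqref{eq:Delta-symmetry}, $\Delta_0(s,m,r) = \Delta_0(s,n-m,n-r)$, and $2 \le r \le n-2$ implies $2 \le n-r \le n-2$, so it suffices to treat $m \in [s, 2n^*]$. Here $4m/n \le 8n^*/n \le 1/2$ for $n$ large (since $n^* = n/(2^{13}\ln n)$), so $(4m/n)^w \le (4m/n)^2 \cdot (1/2)^{w-2}$. Now bound the exponent: $w = (z+r+m-s)/2 \ge (z+2)/2 = z/2 + 1$, using $r \ge 2$ and $m \ge s$. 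Hence $(1/2)^{w-2} \le (1/2)^{z/2 - 1} = 2 \cdot 2^{-z/2}$, and $(4m/n)^2 \le (8n^*/n)^2 \le (16n^*/n)^2$. Combining, $\Prob{\Delta_0(s,m,r)=z} \le 2 (16n^*/n)^2 2^{-z/2}$; a routine tightening — squeezing the constant $2$ and the exponent $z/2$ into the clean form — gives the stated $(16n^*/n)^2 \cdot 2^{-z}$. (If the constants do not quite work out as written, one simply chooses the constant in $n^*$ or in the exponent slightly differently; the point is the $(n^*/n)^2$ prefactor times geometric decay in $z$, and I expect the $16$ to be generous enough to absorb everything.)

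For the second case, $2n^* < m < n-2n^*$: again by~\eqref{eq:Delta-symmetry} reduce to $2n^* < m \le n/2$. The difficulty here is that $4m/n$ may exceed $1$, so the geometric-in-$z$ bound is useless; instead this is the ``$m \gg s$'' regime, and the right tool is Lemma~\ref{lem:applying-Chvatal}, which gives $\Prob{\Delta_0(s,m,r) > 0} \le \exp(-(m-s)^2/(2r))$. Since $z \ge 1$, $\Prob{\Delta_0(s,m,r)=z} \le \Prob{\Delta_0(s,m,r)>0}$. Now $m - s \ge 2n^* - n^* = n^*$ (as $s \le n^*$), and $r \le n$, so the exponent is at most $\exp(-{n^*}^2/(2n)) = \exp(-\Omega({n^*}^2/n))$, as claimed. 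The main obstacle is really bookkeeping in the first case — making sure the parity/integrality constraint on $w$ is handled cleanly and that the constant $16$ (rather than something like $8\sqrt 2$) is large enough to swallow the stray factor of $2$ and the replacement of $2^{-z/2}$ by $2^{-z}$; both are comfortably true for $n$ large since $n^*/n \to 0$, so no real difficulty arises, just care.
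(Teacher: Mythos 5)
Your proposal is correct and follows essentially the same route as the paper: translate $\{\Delta_0(s,m,r)=z\}$ into the hypergeometric event $\{Z=(z+r+m-s)/2\}$, apply Lemma~\ref{lem:hypergeometric-tail} to get $(4m/n)^{(z+r+m-s)/2}$ for $m\le 2n^*$ (after the symmetry reduction via~\eqref{eq:Delta-symmetry}), and invoke Lemma~\ref{lem:applying-Chvatal} with $m-s\ge n^*$ and $r\le n$ for the middle range of~$m$. The one loose end you flag — your chain yields $2\left(16n^*/n\right)^2 2^{-z/2}$ rather than $\left(16n^*/n\right)^2 2^{-z}$, and the deficit $2\cdot 2^{z/2}$ cannot be absorbed into a constant — is repaired exactly as you anticipate: replace the crude bound $4m/n\le 1/2$ by $4m/n\le 8n^*/n\le 1/4$ (indeed $8n^*/n=2^{-10}/\ln n$), so that the residual factor decays like $4^{-z/2}=2^{-z}$; this is precisely how the paper closes the case $z\ge 3$.
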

\begin{proof}
Recall that by~\eqref{eq:Delta-symmetry} it
%
suffices to consider the case $m \le n/2$.
If $2n^* \le m \le n/2$ then by Lemma~\ref{lem:applying-Chvatal}
\[
 \Prob{\Delta_0(s,m,r) > 0}
\leq \exp\left(-\frac{(m-s)^2}{2r}\right) = e^{-\Omega({n^*}^2/n)}.
\]

Now assume $s \le m \le 2n^*$.
As shown in the proof of Lemma~\ref{lem:progress-m},
\[
\Prob{\Delta_0(s, m, r) = z} \le \left(\frac{4m}{n}\right)^{(z+r+m-s)/2}
\le \left(\frac{8n^*}{n}\right)^{(z+r)/2}
\]
We claim that the above is bounded by $\left(\frac{4n^*}{n}\right)^2 \cdot 2^{-z}$ for all $z \ge 1$ and $r \ge 2$.

Note that for $z=1$ and $r=2$ we have $\Prob{\Delta_0(s, m, r) = z} = 0$ as the progress must be an even number.
For $z=1$ and $r \ge 3$ we get
\[
\left(\frac{8n^*}{n}\right)^{(z+r)/2} \!\! = \left(\frac{8n^*}{n}\right)^2 \cdot \left(\frac{8n^*}{n}\right)^{(r-3)/2}
 \le \left(\frac{16n^*}{n}\right)^2 \cdot 2^{-1}.
\]

For $z=2$ we get
\[
\left(\frac{8n^*}{n}\right)^{(z+r)/2} \!\! = \left(\frac{8n^*}{n}\right)^2 \cdot \left(\frac{8n^*}{n}\right)^{(r-2)/2}
 \le \left(\frac{16n^*}{n}\right)^2 \cdot 2^{-2}.
\]

For $z \ge 3$ we have, using $(8n^*/n)^{1/2} \le 1/2$,
\[
\left(\frac{8n^*}{n}\right)^{(z+r)/2} \!\! \le \left(\frac{8n^*}{n}\right)^2 \cdot \left(\frac{8n^*}{n}\right)^{z/2}
\le \left(\frac{8n^*}{n}\right)^2 \cdot 2^{-z}. \qedhere
\]
\end{proof}

Using Lemma~\ref{lem:bounding-probability-of-progress} now allows us to express the progress of any algorithm using stochastic domination and a combination of two simple random variables:
\begin{lemma}
\label{lem:dominating-distribution}
Let $s \le n^*$ for $n^* := n/(2^{13}\ln n)$, then for every $s \le m \le n-s$ and every radius $2 \le r \le n-2$ the progress $\Delta(s, m, r)$ is stochastically dominated by
\[
2X_t Y_t
\]
where $X_t \in \{0, 1\}$ is a Bernoulli variable with $\Prob{X_t = 1} = \left(\frac{16n^*}{n}\right)^2$ and $Y_t$ is a geometric random variable with parameter $1/2$, $X_t$ and $Y_t$ being independent of each other and independent of other time steps $t' \neq t$.
\end{lemma}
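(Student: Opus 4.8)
The plan is to reduce the two-sided progress $\Delta$ to the one-sided quantities $\Delta_0$ from Lemma~\ref{lem:bounding-probability-of-progress}, to prove a clean tail comparison $\Prob{\Delta(s,m,r)\ge z}\le\Prob{2X_tY_t\ge z}$ valid for every $z$, and then to invoke the standard quantile (Strassen) coupling to upgrade this comparison of distribution functions to an almost-sure pointwise domination $\Delta(s,m,r)\le 2X_tY_t$. Independence across time steps comes for free: under the revealing convention used in the proof of Theorem~\ref{the:n-log-n-bound}, the conditional law of one step's progress depends only on the current potential, so the coupling can be driven by fresh i.i.d.\ randomness at each~$t$.

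First I would recall from Definition~\ref{def:progress-measures} and \eqref{eq:Delta-symmetry} that $\Delta(s,m,r)=\max\{\Delta_0(s,m,r),\Delta_0(s,n-m,r)\}$, and assume w.l.o.g.\ $m\le n/2$ (using \eqref{eq:Delta-symmetry} once more). The key structural observation — which is what makes the Bernoulli parameter $(16n^*/n)^2$ tight rather than off by a factor of two — is that, because $s\le n^*\ll n$, a decrease of the $0$-potential is impossible unless the radius is close to the number of zeros of the selected string: explicitly $\Delta_0(s,m,r)\equiv 0$ unless $m-s<r<m+s$, and likewise $\Delta_0(s,n-m,r)=\Delta_0(s,m,n-r)\equiv 0$ unless $m-s<n-r<m+s$. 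When $m\le 2n^*$ these radius-windows lie in $(0,3n^*)$ and $(n-3n^*,n)$ respectively, which are disjoint for large~$n$; so for any fixed~$r$ at most one of the two one-sided progresses is non-degenerate, and a naive union bound over ``progress towards $1^n$'' and ``progress towards $0^n$'' is avoided.

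I would then split along the two regimes of Lemma~\ref{lem:bounding-probability-of-progress}. In the boundary regime $m\in[s,2n^*]\cup[n-2n^*,n-s]$ the single non-degenerate one-sided progress satisfies $\Prob{\Delta_0=z}\le(16n^*/n)^2\,2^{-z}$ for $z\ge1$, so summing the geometric series gives $\Prob{\Delta(s,m,r)\ge z}\le(16n^*/n)^2\,2^{1-z}\le(16n^*/n)^2\,2^{1-\lceil z/2\rceil}=\Prob{2X_tY_t\ge z}$, using $\lceil z/2\rceil\le z$. In the middle regime $2n^*<m<n-2n^*$ progress forces $|r-m|<s$ and hence $r\ge m-s\ge n^*$; feeding this into the Chv\'atal estimate behind Lemma~\ref{lem:applying-Chvatal} (splitting the resulting exponent into a part linear in~$z$ and a part linear in~$r$) gives a bound of the shape $\Prob{\Delta_0(s,m,r)=z}\le e^{-z/3}\,e^{-\Omega(n^*)}$ — a geometric tail times a super-polynomially small prefactor — and since moreover $\Delta(s,m,r)\le s\le n^*$ deterministically, the prefactor $e^{-\Omega(n^*)}$ beats $(16n^*/n)^2=\Theta(1/\log^2 n)$ by a wide margin for every $z\le n^*$, so again $\Prob{\Delta(s,m,r)\ge z}\le\Prob{2X_tY_t\ge z}$. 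Combining the regimes gives the tail comparison for all $z$, and the coupling then finishes the proof.

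The hard part will be the middle regime: there Lemma~\ref{lem:bounding-probability-of-progress} only supplies a ``no progress at all'' probability that is super-polynomially small, not a geometric tail, and merely knowing $\Prob{\Delta>0}\le e^{-\Omega({n^*}^2/n)}$ is not enough, because the slow tail of $2Y_t$ (which decays like $2^{-z/2}$) could in principle be outrun by a heavy-tailed $\Delta$ at values of~$z$ up to $n^*$. One has to extract the sharper facts that progress is confined to a narrow band of radii — which upgrades the prefactor to $e^{-\Omega(n^*)}$ and, crucially, also produces a genuine $e^{-z/3}$ decay in~$z$ — and that $\Delta$ is deterministically capped by $n^*$, so that the product $e^{-\Omega(n^*)}e^{-z/3}$ stays below the tail of $2X_tY_t$ over the whole admissible range of~$z$. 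The secondary subtlety is the disjoint-windows argument above, which is what kills the would-be factor of two at $z=1$.
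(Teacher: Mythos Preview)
Your proof follows the same overall strategy as the paper --- invoke Lemma~\ref{lem:bounding-probability-of-progress} for $\Delta_0$, pass to $\Delta=\max\{\Delta_0,\Delta_1\}$, and compare tails with $2X_tY_t$ --- but you are considerably more careful, and your extra work addresses genuine gaps in the paper's argument.

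The paper's proof simply applies a union bound and stops at $\Prob{\Delta\ge z}\le 2\,\Prob{X_tY_t\ge z}$. That inequality does \emph{not} imply stochastic domination by $2X_tY_t$: at $z=1$ one has $\Prob{2X_tY_t\ge 1}=(16n^*/n)^2$, whereas the paper's bound only gives $2(16n^*/n)^2$. Your disjoint-radius-window observation (for $m\le 2n^*$ the windows $(m-s,m+s)$ and $(n-m-s,n-m+s)$ cannot both contain~$r$) is exactly what removes this spurious factor of~$2$ and makes the boundary case go through.

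Likewise, for the middle regime the paper merely asserts that the pointwise bound $\Prob{\Delta_0=z}\le(16n^*/n)^2 2^{-z}$ ``clearly also holds'' from the estimate $\Prob{\Delta_0>0}\le e^{-\Omega({n^*}^2/n)}$. But ${n^*}^2/n=\Theta(n^*/\ln n)$, and for $z$ near the deterministic cap $n^*$ one needs a bound of order $2^{-n^*}$, which the paper's estimate does not provide. Your refinement --- using $|r-m|<s$ to force $r<m+s\le 3m/2$ and then splitting the Chv\'atal exponent $(z+m-s)^2/(2r)\ge z/3+m/12\ge z/3+n^*/6$ --- is precisely the missing ingredient.

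In short: same route, but your version is the one that actually closes.
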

\begin{proof}
By Lemma~\ref{lem:bounding-probability-of-progress} and the definition of $X_t, Y_t$,
\[
\Prob{\Delta_0(s, m, r) = z} \le \left(\frac{16n^*}{n}\right)^2 \cdot 2^{-z} = \Prob{X_t Y_t = z}
\]
for every~$z \ge 1$ and all $m \in [s, 2n^*] \cup [n-2n^*, n-s]$. The same clearly also holds in case $2n^* < m < n-2n^*$ by the second statement of Lemma~\ref{lem:bounding-probability-of-progress}.

The probability bounds for $\Delta_0$ also apply to $\Delta_1$ by symmetry of zeros and ones, and thus by the union bound $\Prob{\Delta(s, m, r) \ge z} \le \Prob{\Delta_0(s, m, r) \ge z} + \Prob{\Delta_1(s, m, r) \ge z}$ we get
$
\Prob{\Delta(s, m, r) \ge z} \le 2 \cdot \Prob{X_t Y_t \ge z}
$.
\end{proof}

We use Lemma~\ref{lem:dominating-distribution} to show tail bounds for the progress made in multi-bit variations. The following lemma shows that at most half of the incorrect bits are being fixed by multi-bit variation steps, even when considering a time span of $n \ln n$ steps instead of $(1-\delta)n \ln n$.
\begin{lemma}
\label{lem:tail-bound-for-multi-bit-variations}
Let $n^* := n/(2^{13}\ln n)$. Within $T := n \ln n$ multi-bit variation steps at most $n^*/2$ incorrect bits are being fixed, with probability
$1-2^{-\Omega(n/\log n)}$.
\end{lemma}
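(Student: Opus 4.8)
The plan is to reduce the statement to a Chernoff-type bound on a sum of independent random variables, using the stochastic domination from Lemma~\ref{lem:dominating-distribution}. First I would note that the potential starts at $n^*$ and is non-increasing, so $s\le n^*$ holds throughout the run and Lemma~\ref{lem:dominating-distribution} is applicable at every multi-bit step. In a multi-bit step that does not decrease the potential no new incorrect bit is fixed (such a step only reproduces an already revealed search point), while a multi-bit step that decreases the potential fixes at most $\Delta(s,m,r)$ incorrect bits, namely the potential decrease it causes; by Lemma~\ref{lem:dominating-distribution} this is stochastically dominated by $2X_tY_t$, where $X_t\sim\mathrm{Bernoulli}(p)$ with $p:=(16n^*/n)^2$ and $Y_t$ is geometric with parameter $1/2$, all mutually independent. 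Since this per-step domination is uniform over all admissible triples $(s,m,r)$, exposing the randomness step by step and invoking the standard coupling fact that a process whose conditional increments are all dominated by a fixed law $\mu$ is itself dominated by an i.i.d.\ $\mu$-sequence, the total number of incorrect bits fixed by multi-bit variations over $T$ steps is stochastically dominated by $S:=\sum_{t=1}^{T}2X_tY_t$. The algorithm's adaptivity (it may pick $m_t,r_t$ from the history) is harmless precisely because the bound of Lemma~\ref{lem:dominating-distribution} does not depend on the history.

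Next I would bound $S$ by a moment-generating-function argument. Choosing $\eta:=\tfrac12\ln(3/2)$ so that $e^{2\eta}=3/2<2$, the mgf of a single term is
\[
\E{e^{2\eta X_tY_t}} = (1-p)+p\cdot\frac{e^{2\eta}}{2-e^{2\eta}} = (1-p)+3p = 1+2p \le e^{2p},
\]
and hence by independence $\E{e^{\eta S}}\le e^{2pT}$. Markov's inequality applied to $e^{\eta S}$ then gives
\[
\Prob{S\ge n^*/2} \le \exp\!\left(2pT-\tfrac{\eta n^*}{2}\right).
\]

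Finally I would plug in $n^*=n/(2^{13}\ln n)$ and $T=n\ln n$, which yields $p=1/(2^{18}(\ln n)^2)$, hence $2pT=n/(2^{17}\ln n)$, while $\tfrac{\eta n^*}{2}=\tfrac{\ln(3/2)}{2^{15}}\cdot\tfrac{n}{\ln n}$. Since $\ln(3/2)>1/4$ we have $\tfrac{\ln(3/2)}{2^{15}}>\tfrac{1}{2^{17}}$, so $2pT-\tfrac{\eta n^*}{2}=-\Omega(n/\log n)$, which gives $\Prob{S\ge n^*/2}=2^{-\Omega(n/\log n)}$ and therefore the lemma.

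The main obstacle is the first step: making the stochastic-domination reduction fully rigorous against an adaptive adversary, i.e.\ establishing that the history-dependent sequence of actual per-step progresses is dominated by a sum of genuinely independent copies of $2X_tY_t$. The clean route is the step-by-step exposure / conditional-domination coupling sketched above; one must also be careful to account only for potential-decreasing multi-bit steps and to check that the number of incorrect bits each such step fixes is at most its potential decrease (so that no double counting inflates the total beyond $S$). Everything after that reduction is the routine mgf-plus-Markov calculation and the constant-chasing above; note in particular that the slack $n^*/2=4\,\E{S}$ is a constant factor, comfortably enough for the exponential tail bound.
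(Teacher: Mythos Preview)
Your proposal is correct. The reduction to the i.i.d.\ dominating sequence $S=\sum_{t=1}^{T}2X_tY_t$ is exactly what the paper does (and the paper is equally informal about the adaptivity issue you flag, simply stating ``by stochastic domination, the tail bound then also holds for the real progress''). Your mgf computation and the constant-chasing are all accurate.

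The difference lies in how the tail of $S$ is bounded. The paper proceeds in two stages: first a multiplicative Chernoff bound on $\sum_{t}X_t$ to show that at most $k:=\lfloor 2T(16n^*/n)^2\rfloor$ of the Bernoulli trials succeed, then a Chernoff bound for sums of geometric random variables (citing \cite{Doerr2011d}) to bound $\sum_{t=1}^{k}Y_t\le 4k$, and finally the arithmetic $8k\le n^*/2$. You instead handle the compound variable $X_tY_t$ directly via a single mgf calculation $\E{e^{2\eta X_tY_t}}=1+2p$ followed by Markov's inequality. Your route is a bit slicker: it avoids the conditioning, the union bound over two failure events, and the appeal to an external tail bound for geometric sums, at the cost of having to choose $\eta$ and check the geometric mgf is finite. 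Both approaches exploit the same slack ($n^*/2=4\,\E{S}$) and yield the same $2^{-\Omega(n/\log n)}$ tail.
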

\begin{proof}
We give a tail bound for the sum of variables $X_t Y_t$ defined in Lemma~\ref{lem:dominating-distribution}; by stochastic domination, the tail bound then also holds for the real progress. Recall that $X_t$ as well as $Y_t$ are both sequences of iid variables and that all variables are mutually independent.

By Chernoff bounds, with overwhelming probability the number of $X_t$ variables attaining value~1 is bounded by at most twice its expectation:
\begin{align*}
\Prob{\sum_{t=1}^T X_t \ge 2T \left(\frac{16n^*}{n}\right)^2} \le\;& \exp\left(-\frac{T}{3} \left(\frac{16n^*}{n}\right)^2\right)
= e^{-\Omega(n/\log n)}.
\end{align*}
If $\sum_{t=1}^T X_t \le \left\lfloor 2T \left(\frac{16n^*}{n}\right)^2 \right\rfloor =: k$ then there are at most $k$ variables $Y_t$ that contribute to $\sum_{t=1}^T X_t Y_t$. For ease of notation, we assume that these are variables $Y_1, \dots, Y_k$.

We apply Chernoff bounds for sums of geometric random variables~\cite[Theorem~3]{Doerr2011d} to bound the contribution of $k$ variables $Y_1, \dots, Y_k$. Note that ${\E{\sum_{t=1}^k Y_t} = 2k}$.
\begin{align*}
\Prob{\sum_{t=1}^k Y_t \ge 4k} \le\;& \exp\left(-\frac{k-1}{4}\right)
= e^{-\Omega(n/\log n)}.
\end{align*}
Hence if both ``typical'' events occur,
\begin{align*}
\sum_{t=1}^T 2X_t Y_t \le\; 8k
\le\;& 16T \cdot \frac{16^2 n^*}{n^2} \cdot n^*
= 16n \ln(n) \cdot \frac{2^{-5}}{n \ln n} \cdot n^*
=\; n^*/2.
\end{align*}

Taking the union bound for the two probabilities $2^{-\Omega(n/\log n)}$ that the typical events do not happen completes the proof.
\end{proof}

Now we are ready to give a proof for Theorem~\ref{the:n-log-n-bound}.
\begin{proof}[Proof of Theorem~\ref{the:n-log-n-bound}]
As explained earlier, it suffices to consider $n^*$ incorrect bits and to show that with the claimed probability not all of these bits will be fixed within $T$ unbiased variations.

Lemma~\ref{lem:tail-bound-for-multi-bit-variations} implies that with overwhelming probability there exist $n^*/2$ incorrect bits that are not being fixed by up to~$T$ multi-bit variations. We now use coupon collector argument (similar to those sketched earlier) to show that, in up to $T$ single-bit variations, with overwhelming probability these $n^*/2$ incorrect bits will not all be fixed.


The probability that any fixed bit~$i$ will not be flipped in a single-bit variation amongst the first $T$ steps is at least, using $(1-1/x)^{x-1} \ge 1/e$ for $x > 1$,
\[
\left(1 - \frac{1}{n}\right)^T
= \left(1 - \frac{1}{n}\right)^{(1-\delta)(n-1) \ln n}
\ge n^{-(1-\delta)}.
\]
Hence the probability that a fixed bit~$i$ will be flipped in at to $T$ single-bit variations is at least $1-n^{-(1-\delta)}$. Hence the probability that all of the $n^*/2$ incorrect bits are being flipped in $T$ steps is at most
\[
(1-n^{-(1-\delta)})^{n^*/2} \le \exp(-\Omega(n^{\delta}/\log n)). \qedhere
\]
\end{proof}

Theorems~\ref{the:tail-bound-lambda-term} and~\ref{the:n-log-n-bound} imply our main result, Theorem~\ref{the:main-result-tail-bounds}.
\begin{proof}[Proof of Theorem~\ref{the:main-result-tail-bounds}]
Fix a target search point $x^*$ from the target set.
By Theorem~\ref{the:tail-bound-lambda-term} the probability of finding $x^*$ within $\frac{\lambda n}{60\ln^+\lambda}$ steps is $\exp(-\Omega(n))$.
Applying Theorem~\ref{the:n-log-n-bound} with parameter $\delta$ yields that the probability of finding $x^*$ within $(1-\delta) n \ln n$ steps is $\exp(-\Omega(n^{\delta}/\log n))$.
By the union bound, the probability that one of these lower bounds does not apply is $\exp(-\Omega(n)) + \exp(-\Omega(n^{\delta}/\log n)) \le 2\exp(-\Omega(n^{\delta}/\log n))$. Repeating the above arguments for all target search points and using a union bound over
at most $\exp(o(n^{\delta}/\log n))$ search points yields an overall probability bound of
\begin{align*}
& \exp(o(n^{\delta}/\log n)) \cdot 2\exp(-\Omega(n^{\delta}/\log n)) \\
=\;& \exp(-\Omega(n^{\delta}/\log n)+ o(n^{\delta}/\log n) + \ln 2)\\
=\;& \exp(-\Omega(n^{\delta}/\log n)).
\end{align*}
Finally, the claimed equality
\[
\max\left\{\frac{\lambda n}{60\ln^+\lambda}, (1-\delta) n \ln n\right\}
= \mathord{\Omega}\mathord{\left(\frac{\lambda n}{\ln^+\lambda} + n \ln n\right)}
\]
follows from $\max\{x, y\} \ge (x+y)/2$ and $1-\delta = \Omega(1)$.
\end{proof}

\section{Black-Box Complexity Results for Illustrative Function Classes}
\label{sec:applications-of-tail-bounds}

In this section we give a number of examples of how to exploit the fact that our lower bounds apply to the time for finding an arbitrary target set of up to exponentially many search points. This leads to novel results for functions with many global optima, but can also be used to bound the time for reaching local optima or search points within a certain distance from any local or global optimum.

\subsection{Black-Box Complexity Lower Bounds for Functions with Many Optima}

Previous black-box complexity results like Theorem~\ref{the:black-box-complexity-onemax} or results on (non-parallel) unbiased black-box complexity~\cite{Lehre2012} were limited to functions with a unique optimum. These results apply to popular test functions like \onemax and \LO and function classes like linear functions or monotone functions~\cite{monotone-journal}. However, they do not apply when considering functions with more than one optimum.
Apart from tailored analyses for specific problems classes (e.\,g.\ problems from combinatorial optimisation~\citep{Doerr2013b}), we are not aware of any generic black-box complexity results that apply to functions with multiple optima.

Theorem~\ref{the:main-result-tail-bounds} overcomes this limitation, yielding novel black-box complexity results for the unary unbiased black-box complexity and its $\lambda$-parallel variant across a range of problems with several global optima, including some widely studied problem classes. These black-box complexity results give general limitations that can serve as baselines for performance comparisons and guide the search for the most efficient algorithms, including those using parallelism most effectively (as demonstrated successfully for \onemax in Section~\ref{sec:optimal-algorithm-onemax}).

There are many examples of relevant problem classes to which Theorem~\ref{the:main-result-tail-bounds} applies. The most obvious class is that of all functions with $\exp(o(n^{\delta}/\log n))$ optima. Note that when choosing, say, $\delta := 0.995$ then $\exp(n^{0.99}) \le \exp(o(n^{\delta}/\log n))$; the reader may choose to think of the latter expression as $\exp(n^{0.99})$ as this may be easier to digest.

Following~\citet{Witt2006}, the mentioned function class includes problems where all optima have at most $n^\delta/\log^3 n$ ones or at most $n^\delta/\log^3 n$ zeros.
This is because the number of such search points is bounded by
\begin{equation}
\label{eq:area-of-hypercube}
2\sum_{i=0}^{n^\delta/\log^3 n} \binom{n}{i} = \mathord{O}\mathord{\left(n^{n^\delta/\log^3 n}\right)}
= \exp(o(n^{\delta}/\log n)),
\end{equation}
where the last step used $n^{n^{\delta/\log^3 n}} = \exp(\Theta(n^{\delta/\log^2 n})) = \exp(o(n^{\delta/\log n}))$.

In the following we survey a number of illustrative problems that have been studied previously and for which we give the first black-box complexity results.
In terms of combinatorial problems, there are a lot of well-studied problems with a property called \emph{bit-flip symmetry}: flipping all bits gives a solution of the same fitness. This means that there are always at least two global optima. Such problems have been popular as search algorithms need to break the symmetry between good solutions~\citep{Goldberg2002}.

Well-known examples include the function $\twomax := \max\{\sum_{i=1}^n x_i, {\sum_{i=1}^n (1-x_i)}\}$ \citep{Goldberg2002}, which has been used as a challenging test bed in theoretical studies of diversity-preserving mechanisms~\citep{Oliveto2018,Covantes2018a,CovantesOsuna2018}.
The function \textsc{H-Iff} (Hierarchical If and only If)~\citep{Watson1998} consists of hierarchical building blocks that need to attain equal values in order to contribute to the fitness. It was studied theoretically~\citep{Dietzfelbinger2003,Goldman2016} and is frequently used in empirical studies, see, e.\,g.~\citep{Thierens2013,Goldman2015}.

In terms of classical combinatorial problems, the \textsc{Vertex Colouring} problem asks for an assignment of colours to vertices such that no two adjacent vertices share the same colour. For two colours, a natural setting is to use a binary encoding for the colours of all vertices and to maximise the number of bichromatic edges (edges with differently coloured end points). A closely related setting is that of simple Ising models, where the goal is to \emph{minimise} the number of bichromatic edges. For bipartite (that is, 2-colourable) graphs, this is identical to maximising the number of bichromatic edges as inverting one set of the bipartition turns all monochromatic edges into bichromatic ones and vice versa.
Previous theoretical work includes evolutionary algorithms on ring/cycle graphs~\citep{Fischer2005}, the Metropolis algorithm on toroids~\citep{Fischer2004} and evolutionary algorithms on binary trees~\citep{Sudholt2005}.

Other combinatorial problems with bit-flip symmetry include cutting and selection problems. Given an undirected graph, the problems \textsc{MaxCut} and \textsc{MinCut} seek to partition the graph into two non-empty sets such as to maximise or minimise the number of edges running between those two sets, respectively. Using a straightforward binary encoding for all vertices, this results in bit-flip symmetry and multiple optima. Theoretical studies of evolutionary algorithms on cutting problems include~\citet{Neumann2011} and~\citet{Sudholt2010}; the latter paper considers a simple instance of two equal-sized cliques that leads to two complementary optima.
Concerning selection problems, the well-known NP hard \textsc{Partition} problem asks whether it is possible to schedule a set of $n$ jobs on two identical machines such that both machines will have identical loads. An optimisation problem is obtained by trying to minimise the load of the fuller machine, also called the \emph{makespan}. A straightforward encoding is used: every bit indicates which machine the corresponding job should be assigned to.
\Citet{Witt2005} analysed the performance of the \EA for this problem, including random instance models where job sizes are drawn randomly from a real range, according to a uniform or an exponential distribution, respectively. In both cases such instances will almost surely have two complementary optima\footnote{More than two optima only exist if there are different combinations of job sizes (beyond symmetries) that add up to the same value. Since the weight of each job size is drawn from a continuous range and the number of values that could lead to equal values is finite, this almost surely never happens.}.

\Citet*{Wegener2005c} considered monotone polynomials: a sum of monomials (products of variables, e.\,g.\ $x_1 x_3 x_4$) with positive weights. Here $1^n$ is always a global optimum, but more optima can exist if there are variables that do not appear in any monomial: each such variable doubles the number of optima as it is not relevant for the fitness. Hence if there are $o(n^\delta/\log n)$ such variables then there are at most $2^{o(n^\delta/\log n)} \le \exp(o(n^\delta/\log n))$ optima.

\Citet*{Jansen2016} presented instance classes called \emph{nearest peak functions} and \emph{weighted nearest peak functions}. Both are defined with respect to an arbitrary number of peaks: search points with an associated height and slope. For nearest peak functions the fitness of a search point is determined by its closest peak: for the peak itself the fitness is equal to the height of the peak and for other search points the fitness decreases gradually with the distance from the peak, according to the slope of the peak. Weighted nearest peak functions are defined similarly, but all peaks are considered and higher peaks can dominate shallower peaks. This function class was introduced as a test bed allowing to create an arbitrary number of optima. It is shown in~\citet{Jansen2016} that the set of local optima is a subset of all peaks. Hence the number of peaks is an upper bound on the number of global (and local) optima. The two function classes were named Jansen-Zarges function classes in~\citet{Covantes2018a}, where they were used as benchmarks for the \emph{clearing} diversity mechanism.

Finally we consider random planted \textsc{Max-3-Sat} instances as a popular benchmark model in both experimental~\cite{Goldman2014} and theoretical studies~\cite{Sutton2014,Doerr2015a,Buzdalov2017}. The fitness function is the number of satisfied clauses and each clause contains exactly 3 literals
(negated or non-negated variables from the set $\{x_1, \dots, x_n\}$).
In this model, we fix a planted optimum $x^*$ and generate clauses independently such that they are satisfied by~$x^*$. This means that at least one literal needs to evaluate to \texttt{true} in~$x^*$. The variables for each clause are chosen uniformly at random (with or without replacement) from $\{x_1, \dots, x_n\}$. We may assume that instances are generated by first deciding which of the 3 literals will match~$x^*$ and which won't. In a second step, the indices of variables will be picked. We further assume that there is at least a constant probability $c_1$ of a clause having one matching literal and at least a constant probability $c_3$ of a clause having three matching variables\footnote{This is the case in~\cite{Sutton2014,Doerr2015a,Buzdalov2017} where implicitly $c_1 = 3/7$ and $c_3 = 1/7$ and in~\cite{Goldman2014} where $c_1=4/6$ and $c_3=1/6$. The latter probabilities favour clauses with only one matching literal in order not to give an obvious bias towards the values of~$x^*$. Note that we do not care about the value of $c_2$ (two matching literals).}.
In this setup, $x^*$ is a global optimum, but there may be more global optima. We argue that the number of optima is bounded if the number of clauses, $m$, is chosen large enough.

Consider a solution $x$ with Hamming distance $H := H(x, x^*)$ to~$x^*$.
We argue that for any clause, the probability that the clause will be satisfied under~$x$ is $\Omega(H/n)$. If $H \le n/2$ then with probability $c_1$ we will choose one matching literal and the probability that only the variable of this literal will be chosen among the $H$ ones that differ in $x$ and $x^*$ is $\Omega(H(n-H)^2/n^3) = \Omega(H/n)$. Likewise, if $H > n/2$ then with probability $c_3$ we will choose three matching literals and the probability that they are all different in $x$ and $x^*$ is $\Omega(H^3/n^3) = \Omega(H/n)$.
Now since all clauses are generated independently, the probability that all $m$ clauses are satisfied under~$x$ is $(1-\Omega(H/n))^m \le \exp(-\Omega(Hm/n))$.

Hence for all search points~$x$ with $H \ge n^\delta/\log^3 n$ the probability that $x$ is a global optimum is at most $\exp(-\Omega(n^{\delta}/(\log^3 n) \cdot m/n)) = \exp(-\Omega(n \log n))$ if the number of clauses is $m = \Omega(n^{2-\delta}\log^4 n)$. In this case, the probability that any such search point will be a global optimum is at most $2^n \cdot \exp(-\Omega(n \log n)) = \exp(-\Omega(n \log n))$, a failure probability so small that it can be absorbed in the failure probabilities for our tail bounds.
Now, with overwhelming probability the number of global optima is bounded by the number of search points with Hamming distance less than $n^\delta/\log^3 n$ from~$x^*$. By~\eqref{eq:area-of-hypercube}, this number is $\exp(o(n^{\delta}/\log n))$.

The following theorem summarises all the above.
\begin{theorem}
\label{the:main-result-tail-bounds-applications}
Every unary unbiased $\lambda$-parallel black-box algorithm $\mathcal{A}$
needs more than
\[
\max\left\{\frac{\lambda n}{60\ln^+\lambda}, (1-\delta) n \ln n\right\}
= \mathord{\Omega}\mathord{\left(\frac{\lambda n}{\ln^+\lambda} + n \ln n\right)}
\]
evaluations, with probability $1-\exp(-\Omega(n^{\delta}/\log n))$, to find a global optimum for all of the following settings.
\begin{enumerate}
\item All functions with $\exp(o(n^{\delta}/\log n))$ optima.
\item All functions where all optima have at most $n^\delta/\log^3 n$ ones or at most $n^\delta/\log^3 n$ zeros.
\item $\twomax := \max\{\sum_{i=1}^n x_i, \sum_{i=1}^n (1-x_i)\}$.
\item \textsc{H-Iff} (Hierarchical If and only If).
\item Vertex colouring/Ising model problems: maximising or minimising the number of bichromatic edges when trying to colour a connected bipartite graph with 2 colours.
\item \textsc{MinCut} instances with two equal-sized cliques.
\item \textsc{Partition} instances having two symmetric optimal solutions (which almost surely applies to random instances)
\item Monotone polynomials with positive weights where all but $o(n^\delta/\log n)$ variables appear in at least one monomial.
\item Jansen-Zarges nearest peak functions and weighted nearest peak functions with $\exp(o(n^\delta/\log n))$ peaks.
\item Random planted \textsc{Max-3-Sat} instances as described above with at least $m=\Omega(n^{2-\delta}\log^4 n)$ clauses.
\end{enumerate}
The expected time also satisfies the asymptotic bound.
\end{theorem}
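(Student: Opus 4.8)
The plan is to reduce every item to a single application of Theorem~\ref{the:main-result-tail-bounds}, taking as the target set the set of all global optima of the function (or, for the randomised instance models, of a typical instance), and to verify in each case that this set has size at most $\exp(o(n^\delta/\log n))$. Once this size bound is established, Theorem~\ref{the:main-result-tail-bounds} with the stated $\delta$ immediately gives the claimed lower bound with probability $1-\exp(-\Omega(n^\delta/\log n))$; and since a failure probability this small contributes only a lower-order additive term, the asymptotic bound on the expected time follows exactly as in the proof of Theorem~\ref{the:main-result-tail-bounds}.

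First I would dispatch the straightforward cases, all of which reduce to Item~1. Item~1 is just the definition of the target set. Items~3--7 each have only a constant number (in fact two) of global optima: for \twomax and \textsc{H-Iff} these are $0^n$ and $1^n$; for vertex colouring / the Ising model on a connected bipartite graph the optimum is attained by exactly two complementary colourings (the graph being connected, its bipartition is unique); and for the \textsc{MinCut} two-clique and the \textsc{Partition} instances the two complementary optima are part of the hypothesis. So $2 \le \exp(o(n^\delta/\log n))$ trivially. Item~2 follows by counting the bit strings with at most $n^\delta/\log^3 n$ ones or zeros, which is $\exp(o(n^\delta/\log n))$ by~\eqref{eq:area-of-hypercube}. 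Item~8 follows because each of the $o(n^\delta/\log n)$ variables that occur in no monomial merely doubles the optimum set, giving at most $2^{o(n^\delta/\log n)} = \exp(o(n^\delta/\log n))$ optima. Item~9 follows since, as shown in~\citet{Jansen2016}, the set of (local, and hence global) optima is contained in the set of peaks, of which there are $\exp(o(n^\delta/\log n))$.

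The main obstacle is Item~10, the random planted \textsc{Max-3-Sat} model, because the target set is itself random and we must first argue that with overwhelming probability over the choice of instance it is small. Using $\Prob{\text{clause satisfied by }x} = \Omega(H(x,x^*)/n)$ together with independence of the $m$ clauses, the probability that a fixed $x$ with $H(x,x^*) \ge n^\delta/\log^3 n$ is a global optimum is $\exp(-\Omega(H(x,x^*)\,m/n)) = \exp(-\Omega(n\log n))$ once $m = \Omega(n^{2-\delta}\log^4 n)$; a union bound over the $2^n$ bit strings then shows that with probability $1-\exp(-\Omega(n\log n))$ every global optimum lies within Hamming distance $n^\delta/\log^3 n$ of $x^*$, so by~\eqref{eq:area-of-hypercube} the number of global optima is $\exp(o(n^\delta/\log n))$. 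This additional failure probability $\exp(-\Omega(n\log n))$ is dominated by, and can simply be added to, the $\exp(-\Omega(n^\delta/\log n))$ failure probability of Theorem~\ref{the:main-result-tail-bounds}, leaving the conclusion intact. Assembling all cases and invoking Theorem~\ref{the:main-result-tail-bounds} completes the proof.
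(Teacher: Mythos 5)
Your proposal is correct and follows essentially the same route as the paper: the theorem is proved there by exactly this reduction to Theorem~\ref{the:main-result-tail-bounds}, verifying for each item that the set of global optima has size $\exp(o(n^{\delta}/\log n))$ (via~\eqref{eq:area-of-hypercube} for item~2, bit-flip symmetry for items~3--7, the doubling argument for item~8, peak containment for item~9, and the same $\Omega(H/n)$ clause-satisfaction bound with a union bound over $2^n$ strings for item~10, absorbing the extra failure probability). No substantive differences.
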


%
%

\subsection{Lower Bounds on the Time to Reach Local Optima}

For many multimodal problems evolutionary algorithms are likely to need a much larger time than indicated by the lower bounds from Theorem~\ref{the:main-result-tail-bounds-applications}. When put in perspective, our bounds may appear to be quite loose for some of the harder problems considered. However, our lower bounds can also be applied to bound the time until any unary unbiased black-box algorithm has found a local optimum, or any search point of reasonably high fitness, if the number of such points is bounded.

Example applications include functions with $\exp(o(n^{\delta}/\log n))$ \emph{local} optima, including those where all local optima have at most $n^\delta/\log^3 n$ ones or at most $n^\delta/\log^3 n$ zeros. The latter function class includes the well-known $\textsc{Jump}_k$ functions~\citep{Droste2002,Dang2017}, where a gap of Hamming distance~$k$ has to be
``jumped'' to reach a global optimum,
with parameter $k \le n^\delta/\log^3 n$: here all search points with $k$ zeros are local optima, in addition to the global optimum~$1^n$. A similar function class $\textsc{Cliff}_d$ was used in~\cite{Jagerskupper2007a,Paixao2016,Corus2017}, where the same holds for $d$ in lieu of~$k$; the difference between these two functions is that in the region ``between'' local and global optima $\textsc{Jump}_k$ has a gradient pointing back towards the local optima whereas $\textsc{Cliff}_d$ points towards the global optimum~$1^n$.

Functions with difficult local optima include a modified version of $\twomax$ used in~\citep{Friedrich2009}: in $\twomax' := \max\{\sum_{i=1}^n x_i, \sum_{i=1}^n (1-x_i)\} + \prod_{i=1}^n x_i$ the point $1^n$ is the only global optimum and $0^n$ is a local optimum it is very hard to escape from.
A combinatorial example of a \textsc{MaxSat} instance with difficult local optima was studied in the context of evolutionary algorithms in~\citet{Droste2002a}, with variables $x_1, \dots, x_n$ and clauses
\begin{equation}
\label{eq:maxsat-instance}
\{(x_i \vee \overline{x_j} \vee \overline{x_k}) \mid i \neq j \neq k \neq i\} \cup \{(x_i) \mid 1 \le i \le n\}.
\end{equation}
Here the optimum is again $1^n$, and all $n$ search points with a single 1-bit are local optima. Likewise, the \textsc{MinCut} instance from Theorem~\ref{the:main-result-tail-bounds-applications} has $O(n)$ local optima as well: all search points with exactly one 1-bit or one 0-bit are locally optimal. \Citet{Sudholt2010} further presents a hard \textsc{Knapsack} instance with $(n+1)/2$ ``small'' objects of weight and value $n$ and $(n-1)/2$ ``big'' objects of weight and value $n+1$. The weight limit is set to $(n+1)/2 \cdot n$, such that including all small objects yields a global optimum, but selecting all but one big object gives a local optimum. Similar as above, the number of local optima is $O(n)$.

Finally, the arguments for Jansen-Zarges function classes also hold with respect to the number of local optima.

The following theorem summarises all the above.
\begin{theorem}
\label{the:main-local-optima-applications}
Every unary unbiased $\lambda$-parallel black-box algorithm $\mathcal{A}$
needs more than
\[
\max\left\{\frac{\lambda n}{60\ln^+\lambda}, (1-\delta) n \ln n\right\}
= \mathord{\Omega}\mathord{\left(\frac{\lambda n}{\ln^+\lambda} + n \ln n\right)}
\]
evaluations, with probability $1-\exp(-\Omega(n^{\delta}/\log n))$, to find a \textbf{local} or global optimum for all of the following settings.
\begin{enumerate}
\item All functions with $\exp(o(n^{\delta}/\log n))$ local optima.
\item All functions where all local optima have at most $n^\delta/\log^3 n$ ones or at most $n^\delta/\log^3 n$ zeros.
\item $\textsc{Jump}_k$ functions with $k \le n^\delta/\log^3 n$.
\item $\textsc{Cliff}_d$ functions with $d \le n^\delta/\log^3 n$.
\item $\twomax := \max\{\sum_{i=1}^n x_i, \sum_{i=1}^n (1-x_i)\}$ as well as the modified \twomax function $\twomax' := \max\{\sum_{i=1}^n x_i, \sum_{i=1}^n (1-x_i)\} + \prod_{i=1}^n x_i$
\item \textsc{MinCut} instances with two equal-sized cliques.
\item The hard \textsc{MaxSat} instance from~\eqref{eq:maxsat-instance}.
\item The hard \textsc{Knapsack} instance mentioned above.
\item Jansen-Zarges nearest peak functions and weighted nearest peak functions with $\exp(o(n^\delta/\log n))$ peaks.
\end{enumerate}
The expected time also satisfies the asymptotic bound.
\end{theorem}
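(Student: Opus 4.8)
The plan is to obtain Theorem~\ref{the:main-local-optima-applications} as an immediate corollary of Theorem~\ref{the:main-result-tail-bounds}. That theorem already forbids every unary unbiased $\lambda$-parallel algorithm from hitting \emph{any} prescribed target set of size at most $\exp(o(n^\delta/\log n))$ within the stated budget, with probability $1-\exp(-\Omega(n^\delta/\log n))$. So for each of the ten settings it suffices to take the target set to be the set of all local and global optima (note that a global optimum is in particular a local optimum) and to check that this set has cardinality $\exp(o(n^\delta/\log n))$. The expected-time claim then follows from the high-probability bound, since an event of probability $1-\exp(-\Omega(n^\delta/\log n))$ forces $\E{T}$ to be $\Omega$ of the same quantity.

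Items~1 and~2 are immediate: item~1 is the hypothesis, and item~2 is exactly the counting bound~\eqref{eq:area-of-hypercube}. Items~3 and~4 reduce to item~2 once the structure of the functions is recalled: for $\textsc{Jump}_k$ every local optimum other than $1^n$ is a point with exactly $k$ zeros, and for $\textsc{Cliff}_d$ analogously with $d$ in place of $k$; since $k,d \le n^\delta/\log^3 n$, all such points (together with $1^n$) lie in the set counted by~\eqref{eq:area-of-hypercube}. For item~5, \twomax{} has exactly the two optima $0^n,1^n$, and $\twomax'$ has the single global optimum $1^n$ and the single local optimum $0^n$, so the target set has constant size. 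For items~6--8 one reads off from the instance descriptions that the set of local optima has size $O(n)$: for \textsc{MinCut} on two equal cliques, all search points with a single $1$-bit or a single $0$-bit together with the two complementary global optima; for the \textsc{MaxSat} instance~\eqref{eq:maxsat-instance}, the $n$ points with a single $1$-bit together with $1^n$; and for the \textsc{Knapsack} instance, the $O(n)$ configurations omitting exactly one big object together with the global optimum --- and $O(n) = \exp(o(n^\delta/\log n))$. For item~9 we invoke the fact from~\citet{Jansen2016} that every local (hence every global) optimum of a (weighted) nearest peak function is one of the peaks, so the number of local and global optima is at most the number of peaks, assumed to be $\exp(o(n^\delta/\log n))$.

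The only genuine content is thus the per-instance structural bookkeeping above: identifying precisely which search points are local optima for $\textsc{Jump}_k$, $\textsc{Cliff}_d$, $\twomax'$, the combinatorial \textsc{MinCut}/\textsc{MaxSat}/\textsc{Knapsack} instances, and the Jansen--Zarges classes, and confirming that each such set has size $\exp(o(n^\delta/\log n))$. This requires fixing a consistent notion of local optimum --- a search point no single-bit flip of which yields strictly larger fitness --- but with that convention each verification is the short counting argument indicated, and none is more than routine (the Jansen--Zarges case is the only one appealing to an external structural fact, and the combinatorial cases the only ones requiring one to unpack an instance definition). Having dispatched all ten cases, applying Theorem~\ref{the:main-result-tail-bounds} with the corresponding target set gives both the high-probability bound and the bound on the expectation.
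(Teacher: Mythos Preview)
Your proposal is correct and mirrors the paper's own argument: the paper proves this theorem by the same reduction to Theorem~\ref{the:main-result-tail-bounds}, verifying case-by-case (in the discussion immediately preceding the theorem) that the set of local optima has size $\exp(o(n^\delta/\log n))$, with the same structural observations you make for each item. One cosmetic slip: you twice refer to ``ten'' settings/cases, but the theorem lists nine.
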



We can even push our applications a bit further. Again using~\eqref{eq:area-of-hypercube}, there are at most $\exp(o(n^{\delta}/\log n))$ search points within a Hamming ball of radius $n^\delta/\log^3 n$ around any search point. If there are $\exp(o(n^{\delta}/\log n))$ global or local optima then the number of all search points within the union of Hamming balls around all these points is still
$\exp(o(n^{\delta}/\log n)) \cdot \exp(o(n^{\delta}/\log n)) = \exp(o(n^{\delta}/\log n))$. Hence our main result from Theorem~\ref{the:main-result-tail-bounds} still applies when considering the time to get to within Hamming distance $n^\delta/\log^3 n$ of any global or local optimum.
\begin{theorem}
Theorem~\ref{the:main-result-tail-bounds-applications} and Theorem~\ref{the:main-local-optima-applications} still apply when replacing ``to find a global optimum'' with ``to find any search point within Hamming distance $n^\delta/\log^3 n$ to any global optimum'' in
Theorem~\ref{the:main-result-tail-bounds-applications} and
replacing ``to find a local or global optimum'' with
``to find any search point within Hamming distance $n^\delta/\log^3 n$ to any local or global optimum'' in Theorem~\ref{the:main-local-optima-applications}.
\end{theorem}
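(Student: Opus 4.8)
The plan is to observe that the set of all search points lying within Hamming distance $n^\delta/\log^3 n$ of some (local or global) optimum is itself an admissible target set for Theorem~\ref{the:main-result-tail-bounds}, so that the desired statement follows simply by applying that theorem to this enlarged target set, exactly as indicated in the paragraph preceding the theorem.

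Concretely, I would fix one of the problem classes appearing in Theorem~\ref{the:main-result-tail-bounds-applications} or Theorem~\ref{the:main-local-optima-applications} and recall the bound established there on the number~$N$ of relevant optima: in every listed case $N = \exp(o(n^\delta/\log n))$, either deterministically or, for the random planted \textsc{Max-3-Sat} instances, with probability $1-\exp(-\Omega(n\log n))$. Next, by~\eqref{eq:area-of-hypercube} the number of search points inside a single Hamming ball of radius $n^\delta/\log^3 n$ is at most $2\sum_{i=0}^{n^\delta/\log^3 n}\binom{n}{i} = \exp(o(n^\delta/\log n))$. A union bound over the~$N$ balls centred at the optima then shows that the union~$\mathcal{T}$ of all these balls has size at most $N \cdot \exp(o(n^\delta/\log n)) = \exp(o(n^\delta/\log n)) \cdot \exp(o(n^\delta/\log n)) = \exp(o(n^\delta/\log n))$, where the last step uses only that $o(1)+o(1)=o(1)$ in the exponent (after dividing out $n^\delta/\log n$). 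Hence $\mathcal{T}$ meets the size requirement of Theorem~\ref{the:main-result-tail-bounds}.

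Finally I would invoke Theorem~\ref{the:main-result-tail-bounds} with target set~$\mathcal{T}$: with probability $1-\exp(-\Omega(n^\delta/\log n))$, no unary unbiased $\lambda$-parallel algorithm queries any point of~$\mathcal{T}$ — equivalently, gets to within Hamming distance $n^\delta/\log^3 n$ of any optimum — within $\max\{\lambda n/(60\ln^+\lambda),(1-\delta)n\ln n\}$ evaluations, which gives precisely the restated conclusion for both theorems. For the \textsc{Max-3-Sat} case, the extra failure event that $N$ exceeds its bound has probability $\exp(-\Omega(n\log n))$, which is dominated by $\exp(-\Omega(n^\delta/\log n))$ and is absorbed into the overall failure probability. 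I do not expect a genuine obstacle here: the only work is the bookkeeping that the product of the two $\exp(o(n^\delta/\log n))$ factors stays $\exp(o(n^\delta/\log n))$, and a class-by-class check that the optima counts from the source theorems are already of that form — both of which are immediate from the estimates already recorded above.
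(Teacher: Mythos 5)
Your proposal is correct and follows exactly the paper's own argument: the paper also bounds the size of the union of Hamming balls by the product $\exp(o(n^\delta/\log n))\cdot\exp(o(n^\delta/\log n)) = \exp(o(n^\delta/\log n))$ via~\eqref{eq:area-of-hypercube} and then invokes Theorem~\ref{the:main-result-tail-bounds} on this enlarged target set. Your additional remark about absorbing the $\exp(-\Omega(n\log n))$ failure probability for the random \textsc{Max-3-Sat} instances is a harmless (and correct) piece of bookkeeping that the paper handles earlier, in its discussion of that instance class.
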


In particular, this implies that with overwhelming probability no unary unbiased black-box algorithm can find a search point of fitness at least $n-n^\delta/\log^3 n$ for \onemax, \LO and \twomax within the stated time.
In other words, the expected fitness after the stated time is $n-n^\delta/\log^3 n + o(1)$ (where the $o(1)$ term accounts for an exponentially small failure probability, in case of which the fitness could be as large as~$n$). Such results are known as \emph{fixed-budget results}~\citep{Jansen2014,Doerr2013c}. This shows that our $\lambda$\nobreakdash-parallel black-box complexity results with tail bounds can be applied in a large variety of settings.

%
%
%

\section{Conclusions and Future Work}


We have introduced the parallel unbiased black-box complexity to quantify the limits on the performance of parallel search heuristics, including offspring populations, island models and multi-start methods. We proved that \emph{every} $\lambda$-parallel unbiased black-box algorithm needs at least $\Omega(\lambda n/\log^+(\lambda) + n \log n)$ function evaluations on every function with unique optimum, and at least $\Omega(\lambda n/(\log^+(\lambda/n)) + n^2)$ function evaluations on \LO. Corresponding parallel times are by a factor of~$\lambda$ smaller.
For \LO and \OM we identified the cut-off point for $\lambda$, above which the asymptotic number of function evaluations increases, compared to non-parallel algorithms ($\lambda=1$). All smaller $\lambda$ allow for linear speedups with regard to the parallel time. For \OM this cut-off point is higher than that for the standard \lEA; optimal performance for all~$\lambda$ is achieved by a \lEA with an adaptive mutation rate.

In a novel and more detailed analysis we have established tail bounds showing that the lower bound $\mathord{\Omega}\mathord{\left(\frac{\lambda n}{\ln^+\lambda} + n \ln n\right)}$ holds with overwhelming probability, for parallel and non-parallel algorithms (where $\lambda=1$) and for finding any target set of search points we can choose. This makes it a very general, powerful and versatile statement: we obtain lower bounds on the optimisation time on functions with many optima, the time to find a local optimum, and the time to even get close to any local or global optimum. We demonstrated the usefulness of this approach by deriving the first black-box complexity lower bounds for a range of popular and illustrative problems, from
synthetic problems (\twomax, \textsc{H-IFF}, \textsc{Jump}$_k$, \textsc{Cliff}) to classes of multimodal benchmark functions~\cite{Jansen2016} and important problems from combinatorial optimisation such as \textsc{Vertex Colouring}, \textsc{MinCut}, \textsc{Partition}, \textsc{Knapsack} and \textsc{MaxSat}.

A major open problem for future work is to derive lower bounds for the $\lambda$-parallel unbiased black-box complexity when allowing binary operators like crossover, or operators combining many search points as in EDAs or swarm intelligence algorithms. Currently even in the non-parallel case no non-trivial lower bounds on the binary unbiased black-box complexity are known.

%


\bibliographystyle{apalike}
\bibliography{structured-populations}

\appendix
\section{Drift Theorems}
\begin{theorem}[General Drift Theorem~\citep{LehreWitt2014ISAAC}]\label{theorem:general-drift-theorem}
Let $(X_t)_{t\in\mathbb{N}}$, be a stochastic process, adapted to a filtration $(\filt)_{t\in\mathbb{N}}$, over some state space $S\subseteq \{0\}\cup [\xmin,\xmax]$, where
$\xmin\ge 0$.
Let $g\colon \{0\}\cup [\xmin,\xmax]\to \R^{\ge 0}$ be any function
such that $g(0)=0,$ and $0<g(x)<\infty$ for all $x\in[\xmin,\xmax]$.
Let $T_a=\min\{t\mid X_t\le a\}$ for $a\in \{0\}\cup [\xmin,\xmax]$.
Then:
\begin{enumerate}[leftmargin=!,labelwidth=7mm]
\item[(i)]
If
$\expect{g(X_t)-g(X_{t+1})  \filtcond{ X_t\ge \xmin}}\ge \alpha_{\mathrm{u}}$
for some $\alpha_{\mathrm{u}}>0$ then
\[ \expect{T_0\mid X_0} \le \frac{g(X_0)}{\alpha_{\mathrm{u}}}. \]

\item[(ii)]
If
$\expect{g(X_t)-g(X_{t+1})  \filtcond{ X_t\ge \xmin}}\le \alpha_{\mathrm{\ell}}$
for some $\alpha_{\mathrm{\ell}}>0$ then
\[ \expect{T_0\mid X_0} \ge \frac{g(X_0)}{\alpha_{\mathrm{\ell}}}. \]
\item[(iii)]
If
there exists $\gamma>0$ and a function $\beta_{\mathrm{u}}\colon \mathbb{N}\to\R^+$
such that \[\expect{e^{-\gamma (g(X_t)-g(X_{t+1}))} \filtcond{ X_t >a}} \le \beta_{\mathrm{u}}(t)\]
 then for $t > 0$
\[\Prob{T_a>t \mid X_0} < \left(\prod_{r=0}^{t-1} \beta_{\mathrm{u}}(r)\right) \cdot  e^{\gamma (g(X_0)-g(a))}.\]

\item[(iv)]
If
there exists $\gamma>0$ and a function $\beta_{\mathrm{\ell}}\colon \mathbb{N}\to\R^+$
such that \[\expect{e^{\gamma (g(X_t)-g(X_{t+1}))}\filtcond{ X_t > a}} \le \beta_{\mathrm{\ell}}(t)\] then for $t > 0$,
\[\Prob{T_a < t \mid X_0>a} \le \left( \sum_{s=1}^{t-1} \prod_{r=0}^{s-1} \beta_{\mathrm{\ell}}(r)\right) \cdot  e^{-\gamma (g(X_0)-g(a))}.\]

If additionally the set of states $S\cap \{x\mid x\le a\}$ is absorbing, then\\
\[\Prob{T_a < t \mid X_0>a} \le \left(\prod_{i=0}^{t-1} \beta_{\mathrm{\ell}}(i)\right) \cdot e^{-\gamma (g(X_0)-g(a))}.\]
\end{enumerate}
\end{theorem}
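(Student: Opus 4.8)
The four parts share a single engine: the drift hypothesis is turned into a super- or submartingale by an appropriate transform of the potential $g(X_t)$, which is then stopped at the relevant hitting time and evaluated by optional stopping together with Markov's inequality. Parts (i) and (ii) are the classical additive statements, where the transform is linear. For (i) I would track $A_t := g(X_{\min\{t,T_0\}}) + \alpha_{\mathrm{u}}\min\{t,T_0\}$. On $\{t<T_0\}$ we have $X_t\ge\xmin$, so the hypothesis gives $\expect{g(X_{t+1})\mid\mathcal{F}_t}\le g(X_t)-\alpha_{\mathrm{u}}$ and hence $\expect{A_{t+1}\mid\mathcal{F}_t}\le A_t$; thus $A_{\min\{t,T_0\}}$ is a supermartingale and $\alpha_{\mathrm{u}}\expect{\min\{t,T_0\}}\le\expect{A_t}\le A_0=g(X_0)$ since $g\ge 0$. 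Letting $t\to\infty$ and using monotone convergence yields $\expect{T_0\mid X_0}\le g(X_0)/\alpha_{\mathrm{u}}$. Part (ii) is the mirror image: the same process with $\alpha_{\mathrm{\ell}}$ is a submartingale before $T_0$, giving $g(X_0)\le\alpha_{\mathrm{\ell}}\expect{\min\{t,T_0\}}+\expect{g(X_{\min\{t,T_0\}})}$; since $g(X_{T_0})=g(0)=0$ the residual term vanishes in the limit, producing $\expect{T_0\mid X_0}\ge g(X_0)/\alpha_{\mathrm{\ell}}$.

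For the exponential parts the key observation is that $e^{-\gamma(g(X_t)-g(X_{t+1}))}=U_{t+1}/U_t$ for $U_t:=e^{\gamma g(X_t)}$, and $e^{\gamma(g(X_t)-g(X_{t+1}))}=V_{t+1}/V_t$ for $V_t:=e^{-\gamma g(X_t)}$. In (iii) the hypothesis thus reads $\expect{U_{t+1}\mid\mathcal{F}_t}\le\beta_{\mathrm{u}}(t)U_t$ on $\{X_t>a\}$, so the normalised process $M_t:=U_t/\prod_{r=0}^{t-1}\beta_{\mathrm{u}}(r)$ (empty product $=1$) satisfies $\expect{M_{\min\{t,T_a\}}}\le M_0=e^{\gamma g(X_0)}$. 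Tracking instead $e^{\gamma(g(X_t)-g(a))}$ and using $g(X_t)\ge g(a)$ while $X_t>a$, I would lower-bound $M_t$ on the event $\{T_a>t\}$ and apply Markov's inequality to obtain $\Prob{T_a>t\mid X_0}<\bigl(\prod_{r=0}^{t-1}\beta_{\mathrm{u}}(r)\bigr)e^{\gamma(g(X_0)-g(a))}$.

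Part (iv) I would handle by a telescoping union bound over the exact hitting time. Writing $\Prob{T_a<t\mid X_0>a}=\sum_{s=1}^{t-1}\Prob{T_a=s}$ and bounding $\mathds{1}\{X_s\le a\}\le e^{-\gamma(g(X_s)-g(a))}$, which is valid because $X_s\le a$ forces $g(X_s)\le g(a)$, each summand becomes $e^{\gamma g(a)}\expect{\mathds{1}\{X_0,\dots,X_{s-1}>a\}V_s}$. Iterating the one-step bound $\expect{V_s\mid\mathcal{F}_{s-1}}\le\beta_{\mathrm{\ell}}(s-1)V_{s-1}$ down the chain, where each nested indicator is $\mathcal{F}_{s-1}$-measurable and only shrinks as the index drops, gives $\Prob{T_a=s}\le\bigl(\prod_{r=0}^{s-1}\beta_{\mathrm{\ell}}(r)\bigr)e^{-\gamma(g(X_0)-g(a))}$; summing over $s$ yields the first claim. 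When $S\cap\{x\le a\}$ is absorbing, $\{T_a<t\}=\{X_{t-1}\le a\}$, so the sum collapses to the single term $\expect{V_{t-1}}$ and the product-form refinement follows.

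The routine steps are the one-step drift manipulations. The points I expect to cost the most care are the boundary normalisations producing the factors $e^{\pm\gamma g(a)}$ — namely the monotonicity-type facts $g(X_t)\ge g(a)$ for $X_t>a$ and $g(X_s)\le g(a)$ for $X_s\le a$ — together with the limiting and integrability arguments in (i) and (ii) (assuming $T_0<\infty$ almost surely, as the bounds are otherwise trivial, and using boundedness of $g$ on $[\xmin,\xmax]$), and the verification that each stopped transform is a genuine supermartingale with respect to $(\mathcal{F}_t)$ so that optional stopping applies. The most delicate single piece is the bookkeeping of the nested events $\{X_0,\dots,X_{s-1}>a\}$ against the filtration in (iv), which is what makes the iterated conditioning and the final telescoping valid.
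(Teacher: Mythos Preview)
The paper does not prove this theorem: it is stated in the appendix as a cited result from \citet{LehreWitt2014ISAAC} and used as a black box (in the proof of Theorem~\ref{the:tail-bound-lambda-term}). So there is no ``paper's own proof'' to compare against.

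Your sketch is the standard route to these statements and is correct in outline: linear compensator for the additive parts, exponential transform plus iterated conditioning and Markov's inequality for the tail parts, with the absorbing refinement collapsing the union bound. The one genuine point of care you correctly flag --- the ``monotonicity-type facts'' $g(X_t)\ge g(a)$ on $\{X_t>a\}$ and $g(X_s)\le g(a)$ on $\{X_s\le a\}$ --- is \emph{not} implied by the hypotheses as written here, since $g$ is only assumed nonnegative with $g(0)=0$, not monotone. In the original source this is handled (typically by taking $g$ increasing, which is how the theorem is actually applied in this paper with $g(x)=x$); as the statement is reproduced here, your argument for (iii) and (iv) would need that extra assumption to close. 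This is a gap in the statement rather than in your strategy.
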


\end{document}